\let\underbrace\LaTeXunderbrace
\theoremstyle{plain}
\newtheorem{theorem}{Theorem}[section]
\newtheorem{lemma}[theorem]{Lemma}
\theoremstyle{definition}
\newtheorem{definition}[theorem]{Definition}
\theoremstyle{remark}
\newtheorem{remark}[theorem]{Remark}
\newcommand{\norm}[3]{{\|#1\|}_{#2}^{#3}}
\renewcommand{\iota}{\textit{i}}
\renewcommand{\implies}{\; \Rightarrow \;}
\newcommand{\forAll}{{\;\; \forall \;}}
\newcommand{\UD}{\Ucal_{\Dcal_{0}}}
\DeclareMathOperator*{\argmin}{\arg\!\min}
\DeclareMathOperator*{\argmax}{\arg\!\max}
\newcommand{\given}{\, | \,}
\newcommand{\Given}{\; \big| \;}
\renewcommand{\bar}[1]{\overline{#1}} 
\renewcommand{\tilde}[1]{\widetilde{#1}} 
\renewcommand{\hat}[1]{\widehat{#1}} 
\newcommand{\E}[1]{\underset{\begin{subarray}{c} #1 \end{subarray}}{\Ebb}}
\newcommand{\PSPL}{\mathsf{PSPL}}
\newcommand{\SR}{\mathscr{SR}}
\definecolor{usc}{RGB}{153, 27, 30}
\definecolor{jpm}{RGB}{0, 53, 148}
\definecolor{eqcol}{RGB}{206, 240, 216}
\definecolor{backg_blue}{RGB}{225,236,244}
\definecolor{tagtxt_blue}{RGB}{88,115,159}
\definecolor{backg_red}{RGB}{250, 222, 222}
\definecolor{tagtxt_red}{RGB}{204, 71, 71}
\definecolor{darkgreen}{RGB}{21, 145, 1}
\definecolor{darkorange}{RGB}{224, 116, 0}
\definecolor{orange}{RGB}{230, 125, 14}
\definecolor{teal}{RGB}{0, 128, 128}
\definecolor{neon}{RGB}{25, 210, 227}
\definecolor{maroon}{RGB}{143, 4, 4}
\definecolor{purple}{RGB}{119, 26, 240}
\definecolor{Pink}{RGB}{212, 38, 235}
\definecolor{Green}{rgb}{0.13, 0.65, 0.3}
\definecolor{Greenish}{RGB}{49, 158, 140}
\definecolor{TS}{RGB}{0, 125, 140}
\definecolor{LinTS}{RGB}{6, 75, 204}
\definecolor{naiveTS}{RGB}{201, 127, 0}
\definecolor{warmTS}{RGB}{128, 0, 32}
\definecolor{cellg}{RGB}{223, 247, 220}
\definecolor{Pink}{RGB}{212, 38, 235}
\definecolor{peach}{RGB}{250, 187, 177}
\definecolor{brightblue}{RGB}{66, 135, 245}
\definecolor{brightred}{RGB}{255, 102, 102}
\definecolor{violet}{RGB}{153, 153, 255}
\definecolor{dullpink}{RGB}{204, 171, 208}
\definecolor{lightorange}{RGB}{255, 191, 128}
\definecolor{lemon}{RGB}{255, 228, 113}
\definecolor{skyblue}{RGB}{141, 215, 247}
\definecolor{crimson}{rgb}{0.79, 0.0, 0.09}
\definecolor{strange}{RGB}{247, 201, 178}
\newcommand{\Ebb}{{\mathbb E}}
\newcommand{\Ibb}{{\mathbb I}}
\newcommand{\Nbb}{{\mathbb N}}
\newcommand{\Pbb}{{\mathbb P}}
\newcommand{\Rbb}{{\mathbb R}}
\newcommand{\Ibf}{{\mathbf{I}}}
\newcommand{\Acal}{{\mathcal{A}}}
\newcommand{\Dcal}{{\mathcal{D}}}
\newcommand{\Ecal}{{\mathcal{E}}}
\newcommand{\Hcal}{{\mathcal{H}}}
\newcommand{\Lcal}{{\mathcal{L}}}
\newcommand{\Mcal}{{\mathcal{M}}}
\newcommand{\Ncal}{{\mathcal{N}}}
\newcommand{\Ocal}{{\mathcal{O}}}
\newcommand{\Pcal}{{\mathcal{P}}}
\newcommand{\Scal}{{\mathcal{S}}}
\newcommand{\Tcal}{{\mathcal{T}}}
\newcommand{\Ucal}{{\mathcal{U}}}
\newcommand{\revisionhistory}[1]{%
\@ifundefined{showrevisionhistory}{\relax}{%
{#1}%
}%
}
\begin{document}

%

%

\twocolumn[

\aistatstitle{Best Policy Learning from Trajectory Preference Feedback}

\aistatsauthor{ Akhil Agnihotri \And Rahul Jain \And  Deepak Ramachandran \And Zheng Wen }

\aistatsaddress{ \hspace{2cm} University of Southern California \And \hspace{4.25cm} Google DeepMind \And \hspace{2.25cm} OpenAI } ]

\begin{abstract}
Reinforcement Learning from Human Feedback (RLHF) has emerged as a powerful approach for aligning generative models, but its reliance on learned reward models makes it vulnerable to mis-specification and reward hacking. Preference-based Reinforcement Learning (PbRL) offers a more robust alternative by directly leveraging noisy binary comparisons over trajectories. We study the best policy identification problem in PbRL, motivated by post-training optimization of generative models, for example, during multi-turn interactions. Learning in this setting combines an offline preference dataset—potentially biased or out-of-distribution and collected from a rater of subpar `competence'—with online pure exploration, making systematic online learning essential. To this end, we propose Posterior Sampling for Preference Learning ($\mathsf{PSPL}$), a novel algorithm inspired by Top-Two Thompson Sampling that maintains posteriors over the reward model and dynamics. We provide the first Bayesian simple regret guarantees for PbRL and introduce an efficient approximation that outperforms existing baselines on simulation and image generation benchmarks.
\end{abstract}

\section{Introduction}
\label{sec:introduction}

\textcolor{black}{RLHF has recently become a cornerstone of aligning large generative models with human intent, enabling advances in natural language processing and other domains. However, the standard RLHF pipeline depends on learning a reward model from human annotations, which introduces two critical limitations: misspecification of the reward function and susceptibility to reward hacking \citep{amodei2016concrete, novoseller2020dueling, tucker2020preference}. These issues stem from the inherent difficulty of reducing complex human objectives to a scalar reward, even when learned from data \citep{sadigh2017active, wirth2017survey}.
}

\textcolor{black}{
PbRL provides an appealing alternative by relying on comparative rather than absolute feedback, thereby offering a more direct and robust signal of human intent \citep{christiano2017deep, saha2023dueling, metcalf2024sample}. In many settings, such preferences are expressed over entire trajectories rather than isolated (final) outcomes, which is especially important in human–robot interactions \citep{wirth2013preference, casper2023open, dai2023safe}, experimentation \citep{novoseller2020dueling}, and recommendation systems \citep{bengs2021preference, kaufmann2023survey}. Similarly, for generative systems, trajectory-level feedback captures the multi-turn nature of interactions more faithfully: when users engage with a large language model (LLM), satisfaction often emerges only after a sequence of exchanges \citep{shani2024multi, zhou2024archer}.
}

\textcolor{black}{
Despite its promise, PbRL in practice often begins with offline preference datasets that are collected off-policy. These datasets are prone to biases and out-of-distribution (OOD) limitations, which can degrade generalization and lead to subpar performance of post-trained models \citep{zhang2024policy, ming2024does}. This raises a fundamental question: given an offline dataset of trajectory comparisons, how should one systematically supplement it with online preference data to maximize learning efficiency under a constrained budget? This question is particularly important for fine-tuning generative models with human feedback, where large-scale offline data are typically available but careful online exploration can provide higher-value information \citep{mao2024context, zhai2024fine}.}

\textcolor{black}{
In this work, we propose a systematic framework for leveraging offline datasets to bootstrap online RL algorithms. We demonstrate that, as expected, incorporating offline data consistently improves online learning performance, as reflected in reduced simple regret. More interestingly, when the agent is further informed about the `competence' of the rater generating the offline feedback—equivalently, the behavioral policy underlying the dataset—the resulting informed agent achieves substantially lower simple regret. Finally, we establish that as the rater competence approaches to that of an expert, higher competence levels yield progressively sharper reductions in simple regret compared to baseline methods.
}

\vspace{-0.15cm}
\textcolor{black}{
\textbf{Relevance to RLHF.} Our problem setting also connects to  LLM alignment, wherein `best policy identification' (BPI) has emerged as a critical objective. Unlike cumulative regret  minimization, which tries to balance exploration and exploitation, BPI ensures that pure exploration leads to the final policy that achieves the highest possible alignment quality \citep{ouyang2022training, chung2024scaling}. Moreover, BPI aligns more naturally with human evaluation processes, which often prioritize assessing final system performance over intermediate learning behaviors. This perspective highlights the importance of optimizing simple regret rather than cumulative regret, which aggregates losses over time \citep{saha2023dueling, ye2024theoretical, zhang2024iterative}. See Figure \ref{fig:intro} for a comparison of $\PSPL$ with Direct Preference Optimization (DPO) \citep{rafailov2024direct} and Identity Preference Optimization (IPO) \citep{ipo}, which shows how on-policy online finetuning helps in BPI.
}

\begin{figure}[ht]
\vspace{-0.5cm}
    \centering
    \subfloat[{\fontfamily{lmss}\selectfont MountainCar}  ]{
        \includegraphics[height=0.2\textwidth, width=0.22\textwidth]{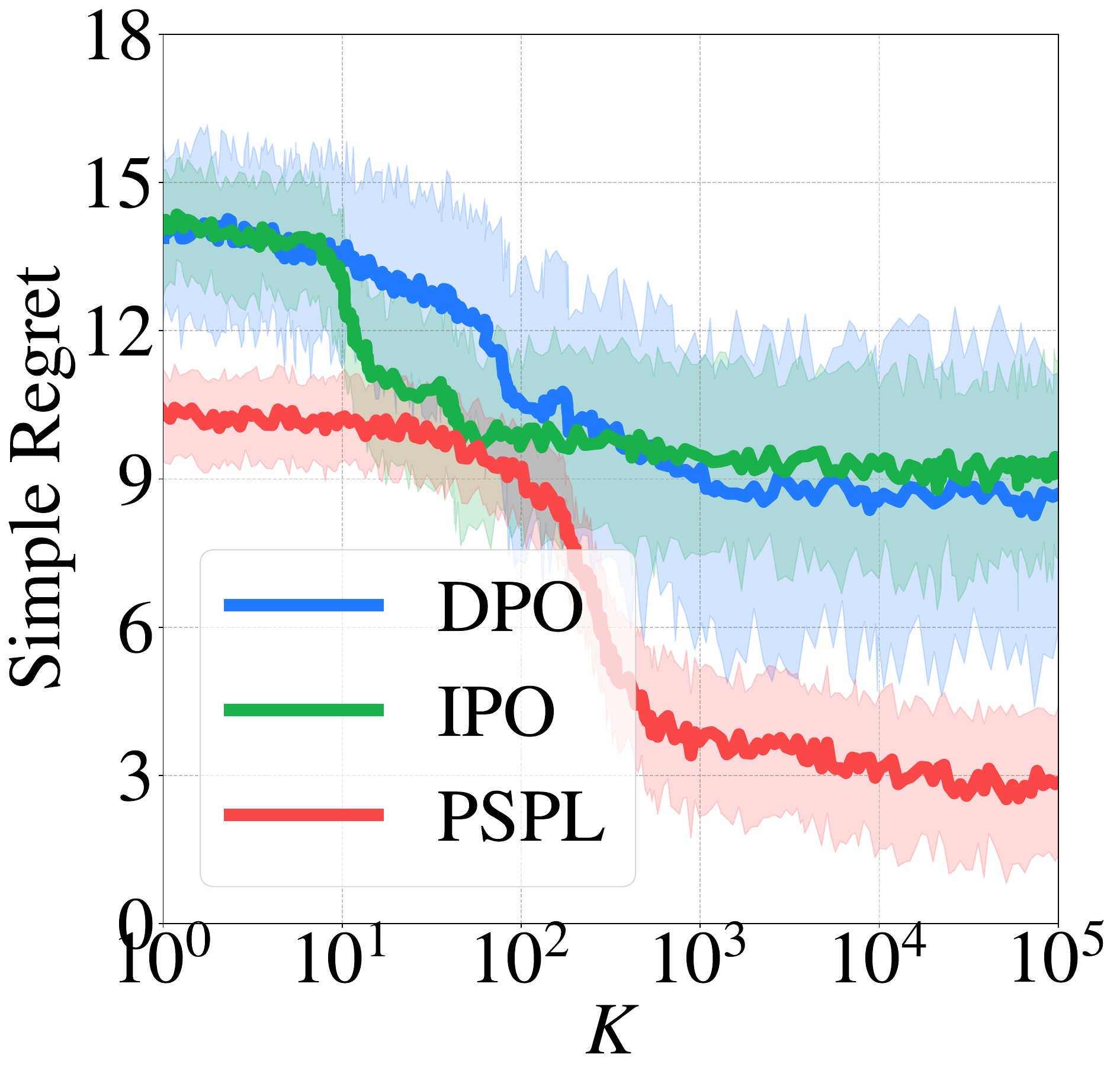}
    } 
    \subfloat[{\fontfamily{lmss}\selectfont RiverSwim}]{
        \includegraphics[height=0.2\textwidth, width=0.22\textwidth]{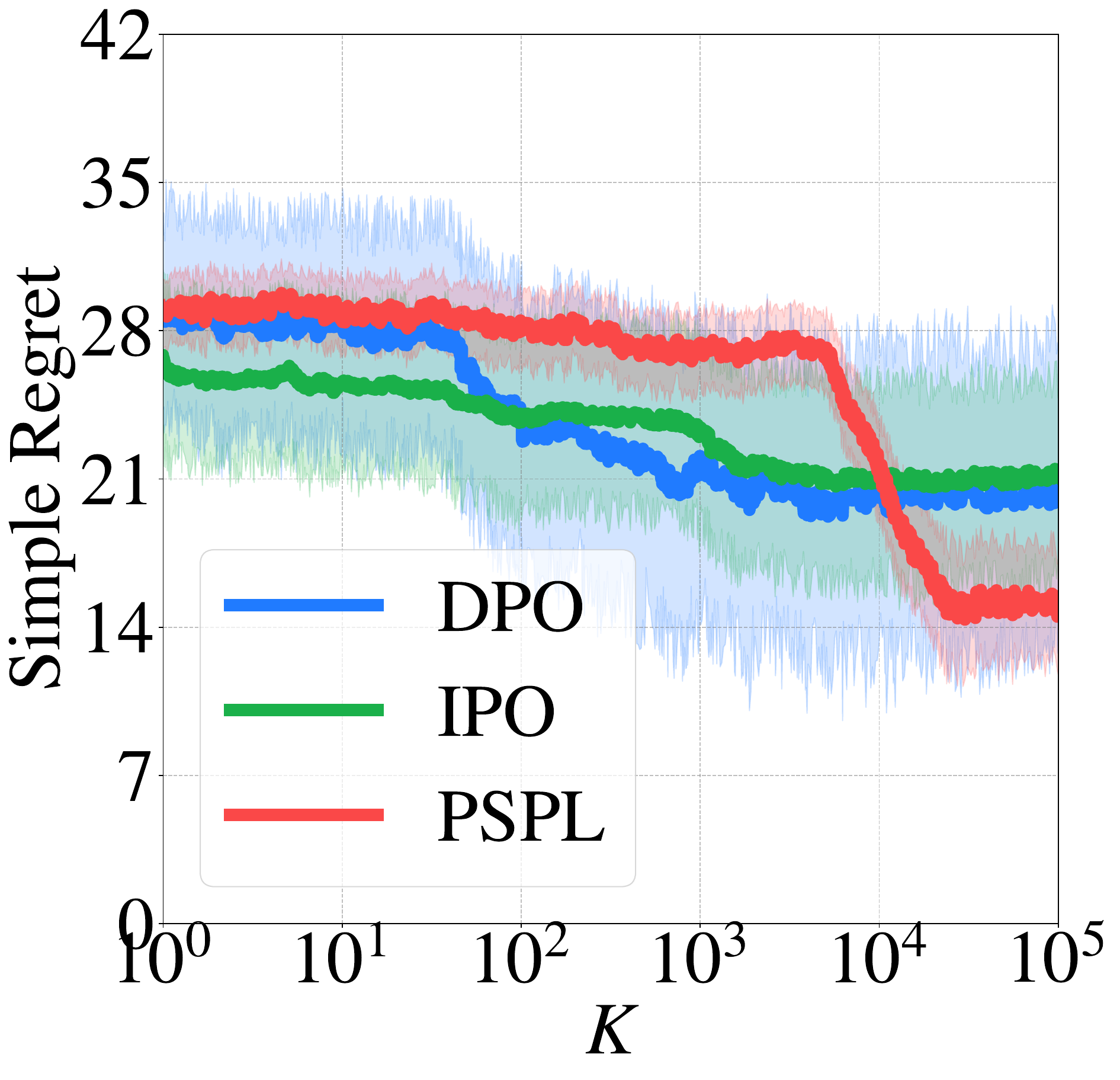}
    } 
\caption{Comparison of $\PSPL$ with current state-of-the-art \emph{offline} finetuning algorithms, DPO and IPO, in two benchmark environments. Online finetuning is necessary for BPI. See Appendix \ref{sec:appendix} for more details.}
\label{fig:intro}
\end{figure}

In this paper, we address the problem of BPI for an unknown episodic MDP where both the transition dynamics and reward functions are unknown. We assume that some offline data is available in the form of preferences over pairs of trajectories. And we can then collect additional data by generating pairs of trajectories and seeking contrastive feedback between the two. At the end, we output the best policy we can learn. 

Our main contributions are: (i) we introduce a formal framework for best policy learning using offline and online trajectory-preference feedback, (ii) we present a top-two Posterior Sampling algorithm $\PSPL$, the first best policy learning algorithm for this setting and present theoretical bounds on its simple regret, and (iii) we introduce a computationally practical version of $\PSPL$ and show that it has excellent empirical performance on benchmark environments as compared to baselines.   

\textbf{Related Work.} There has been a fair amount of research for preference-based bandits (contextual and linear) \citep{singh2002optimizing, bengs2021preference, busa2014survey, saha2021optimal, saha2022versatile, scheid2024optimal, cheung2024leveraging}. However, even contrary to classical RL (cumulative regret or best policy identification) with numerical reward feedback, only a few
works consider incorporating preference feedback in the RL and Bayesian optimization framework \citep{swamy2023inverse, talebi2018variance, zhu2023principled, ye2024theoretical, song2022hybrid, gonzalez2017preferential}. Training RL agents from trajectory-level feedback, available only at the end of each episode, is particularly challenging. \cite{novoseller2020dueling} and \cite{saha2023dueling} analyze finite $K$-episode cumulative regret for this PbRL problem, where two independent trajectories are run, and binary preference feedback is received per episode. While \cite{novoseller2020dueling} assumes a weaker Gaussian process regression model, \cite{saha2023dueling} proposes an optimism-based algorithm that is computationally infeasible due to exponential growth in state space variables. \cite{liu2023efficient} presents a regression-based empirical approach but lacks theoretical guarantees and exploration. \cite{xu2024provably, chen2024order} address offline PbRL with fixed state-action preference datasets, without considering dataset quality. \cite{ye2024theoretical, xiong2023gibbs, li2024policy} take a game-theoretic perspective on offline PbRL in RLHF but assume state-action preferences, avoiding credit assignment from trajectory feedback. Hybrid approaches include \cite{agnihotri2024online}, which studies PbRL in linear bandits, and \cite{hao2023bridging}, which incorporates numerical rewards in the online phase. \textit{To our knowledge, ours is the first work on BPI in unknown MDPs using trajectory-level preference feedback from a subpar expert.}

\section{Preliminaries}
\label{sec:preliminaries_pspl}

\textcolor{black}{
Consider a $K$-episode, $H$-horizon Markov Decision Process (MDP) setup $\mathcal{M} := (\Pbb_{\eta}, \Scal, \Acal, H, r_{\theta}, \rho)$, where $\Scal$ is a finite state space, $\Acal$ is a finite action space, $\mathbb \Pbb_{\eta}(\cdot \mid s,a)$ are the fixed MDP transition dynamics parameterized by $\eta$ given a state-action pair $(s,a) \in \Scal \times \Acal$, $H \in \Nbb$ is the length of an episode, $r_{\theta}(\cdot)$ is underlying reward model parameterized by $\theta \in \Rbb^{d}$, and $\rho$ denotes the initial distribution over states. We let $S := |\Scal|$ and $A := |\Acal|$ to be the cardinalities of the state and action spaces respectively. In addition, we denote the learner by $\Upsilon$, and let $[N] := \{1, \dots, N\}$ for $N \in \Nbb$.
}

\textcolor{black}{
Under the above MDP setup, a policy $\pi = \{\pi_{h}\}_{h=1}^{H}$ is a sequence of mappings from the state space $\Scal$ to the the probability simplex $\Delta(\Acal)$ over the actions. Specifically, $\pi_h(s, a)$ denotes the probability of selecting action $a$ in state $s$ at step $h$. In addition, $\Pi$ is an arbitrary policy class against which performance of the learner will be measured. Finally, we denote a trajectory by concatenation of all states and actions visited during $H$ steps $\tau:= (s_1, a_1, \cdots , s_{H}, a_H)$. At the start of each episode, we assume $s_1 \sim \rho$. For any given $(\theta, \eta)$ pair of reward-transition parameters, the Bellman update equation for a policy $\pi = \{\pi_h(\cdot ; \theta, \eta) \}_{h=1}^{H}$ and for $(s,a, h) \in \Scal \times \Acal \times [H]$ takes the form:
\begin{equation}
\resizebox{0.82\linewidth}{!}{$
\begin{aligned}
{Q}_{{\theta}, {\eta}, h}^{\pi}(s, a) = r_{{\theta}}(s,a) + \Ebb_{s' \sim \Pbb_{{\eta}}( \cdot \given s,a)} \left[{V}_{{\theta}, {\eta}, {h+1}}^{\pi} (s') \right],
\end{aligned}
$}
\label{eq:approx_bellman}
\end{equation}
with ${V}_{{\theta}, {\eta}, h}^{\pi}(s) = \Ebb_{a \sim \pi_{h}(\cdot | s)} [{Q}_{{\theta}, {\eta}, h}^{\pi}(s, a)]$ for $h \leq H$ and ${V}_{{\theta}, {\eta}, H+1}^{\pi}(s) = 0 \forAll s \in \Scal$.
}

\textbf{Trajectory embedding.} For any trajectory $\tau$ we assume the existence of a trajectory embedding function $\phi : \Gamma  \rightarrow \mathbb{R}^d$, where $\Gamma$ is the set of all possible trajectories of length $H$ and the map $\phi$ is known to the learner. A special case, which we study in this work, is a decomposed embedding, where $\phi(\tau) =  \sum_{h=1}^H  \phi(s_h, a_h)$ and $\phi: \mathcal{S} \times \mathcal{A} \rightarrow \mathbb{R}^d$ is a mapping from state-action pairs to $\mathbb{R}^d$. For all trajectories $\tau \in \Gamma$, we assume that $\| \phi(\tau) \|_{1} \leq B$ for some constant $B > 0$.

\textbf{Preference Modeling.} The learner has access to a rater of arbitrary `competence'.  When presented with two trajectories $\tau_{0}$ and $\tau_{1}$, the rater provides feedback in terms of a Bernoulli random variable $Y \in \{0,1\}$, where $Y=0$  if the rater prefers $\tau_{0}$ to $\tau_{1}$, else $Y=1$ if $\tau_{1}$ is preferred to $\tau_{0}$. Note here that we are working with the setting of sparse feedback which is awarded on a trajectory-level.

We let $\theta \in \Rbb^{d}$ to be an unknown environment reward vector that the rater has \emph{limited} knowledge of. The rater provides preference feedback based on their `competence' and their knowledge of the environment reward vector $\theta$. The `competence' of the rater is characterized by two factors: (i) $\beta \geq 0$ is a measure of the \emph{deliberateness} or \emph{surety} of the rater's decision, and (ii) $\lambda > 0$ controls the degree of \emph{knowledgeability} of the rater about $\theta$ (see Remark \ref{rem:lambda}) . We define $\vartheta \sim$ $N\left(\theta, \mathbf{I}_{d} / \lambda^2\right)$ ($\mathbf{I}_{d}$ is a $d \times d$ identity matrix) as the rater's estimate of the true environment vector $\theta$. Pairwise comparison between two trajectories $\tau_{0}$ and $\tau_{1}$ from the rater is assumed to follow a Bradley-Terry model \citep{bradleyterry1952}, i.e.,

\small
\begin{align}
\begin{split}
\label{eq:pref_logistic_trajectories}
  \Pr ( Y=0 \given \tau_0, \tau_1 \, ; \, \vartheta )  &= \sigma( \beta \langle \phi(\tau_0) - \phi(\tau_1), \vartheta \rangle ), 
 \end{split}
\end{align}
\normalsize
where $\sigma: \Rbb \mapsto [0,1]$ is the logistic link function, i.e., $\sigma(x) = (1+e^{-x})^{-1}$. This naturally leads to the definition of `rater score' of a trajectory $\tau$ as $g_{\beta, \vartheta}(\tau) := \beta \langle \phi(\tau), \vartheta \rangle $, which is of course dependent on the rater's competence.

\begin{remark}
\label{rem:lambda}
    Intuitively, the parameter $\beta \geq 0$ is a measure of the \emph{deliberateness} of the rater's decision: $\beta=0$ means the rater's decisions are uniformly random, whereas as $\beta \to \infty$ means the rater's decisions are greedy with respect to the trajectory scores. Secondly, $\lambda$ is the rater's estimate of the true environment reward model based on its knowledgeability i.e., as $\lambda \to \infty$, $\vartheta \to \theta$. In the context of RLHF alignment, $\lambda$ can be seen as controlling the degree of alignment between a user and the general population from which preferences are aggregated.
\end{remark}

\textbf{Offline Dataset.} There is an initial \emph{offline preference} dataset $\mathcal{D}_0$, which is generated by the rater. This offline dataset of size $N$ is a sequence of tuples of the form 
$
\mathcal{D}_0=\left((\bar{\tau}_n^{(0)}, \bar{\tau}_n^{(1)}, \bar{Y}_n)\right)_{n \in [N]},
$ 
where $\bar{\tau}_n^{(0)}, \bar{\tau}_n^{(1)} \in \Gamma$ are two sampled trajectories, and $\bar{Y}_n \in\{0,1\}$ indicates the rater's preference.

\textbf{Learning Objective.} \textcolor{black}{Consider an MDP $\mathcal{M}$ with unknown transition model $\Pbb_{\eta}(s' \given s,a ) : \Scal \times \Acal \times \Scal \to [0,1]$, parameterized by $\eta$, and an unknown ground-truth reward function $r_{\theta}(s,a) : \Scal \times \Acal \to [0,1]$, parameterized by $\theta$, with $s,s' \in \Scal$ and $a \in \Acal$. One conceivable idea is to assume $r_{\theta}(\tau)$ as the function $r_{\theta}(\tau):= \langle \phi(\tau), \theta \rangle$, where $\theta \in \Rbb^{d}$ is an unknown reward parameter. 
}

The goal of the learner is to identify the optimal policy that maximizes trajectory rewards. Overload notation to denote trajectory rewards by $r_{\theta}(\tau) := \sum_{h=1}^{H} r_{\theta}(s_{h},a_{h})$. Then denote an optimal policy by $\pi^{\star} \in \argmax_{\pi \in \Pi} \E{\tau \sim \pi}[r_{\theta}(\tau)]$. Consider an optimal policy $\pi^{\star}$ and any arbitrary policy $\pi \in \Pi$. Then, with $\tau^{\star} \sim \pi^{\star}$ and $\tau \sim \pi$, the simple Bayesian regret of $\pi$ after $K$ online episodes is defined as:

\vspace{-0.2cm}
\begin{equation}
    \SR_{K}^{\Upsilon}(\pi, \pi^{\star}) := \E{\tau,\tau^{\star}}[ r_{\theta}(\tau^{\star}) - r_{\theta}(\tau) ]
\label{eq:simple_regret_def}
\end{equation}
\vspace{-0.5cm}

In this paper, the objective of the learner $\Upsilon$ then is to design an exploration based online learning algorithm that is \emph{informed} from the offline preference dataset and that which minimizes simple Bayesian regret in Equation \eqref{eq:simple_regret_def}.

\vspace{-0.5cm}
\textcolor{black}{
\begin{remark}
    One motivation for using pure exploration in the online phase of BPI, rather than active learning approaches that target reward model estimation, is its relevance to RLHF. In practice, online active learning with preference data is infeasible \citep{chung2024scaling, achiam2023gpt, anil2023palm}, since preferences are typically collected in fixed offline batches and used to train reward models that then guide online learning (e.g., the AutoRater pipeline \citep{anil2023palm}). Thus, pure exploration better reflects this setting, where offline batched preference data is available and only minimal online interaction is possible to learn the best policy.
\end{remark}
}
\section{The PSPL Algorithm}
\label{sec:algorithm}

For the purpose of pure exploration, the learner $\Upsilon$ is given the opportunity to learn online and generate an \emph{online} dataset of trajectories for which it asks for feedback from the rater. Denote the \emph{online} dataset available to the learner at the beginning of episode $k \in [K]$ as $\mathcal{H}_{k} = \{(\tau_{t}^{(0)}, \tau_{t}^{(1)}, Y_{t})\}_{t=1}^{k-1}$. Then the total available dataset for the learner at the beginning of episode $k$ becomes $\Dcal_{k} = \Dcal_{0} \oplus \Hcal_{k}$, where $\oplus$ denotes concatenation.

Since the parameters $\theta$ and $\eta$ of the reward and transition model are unknown,  
we assume that the prior distribution over the reward vector $\theta$ is a Gaussian
distribution $\nu_{0} \sim \mathcal{N}(\mu_{0}, \Sigma_{0})$ and over the transition model $\eta$ is a Dirichlet distribution $\chi_{0} \sim \mathrm{Dir(\bm{\alpha}_{0}})$ for each state-action pair, where $\bm{\alpha}_{0}$ is a positive real-valued vector of dimension $S$.


Now, note that the offline dataset \emph{informs} the learner $\Upsilon$ of the MDP transition dynamics and the underlying reward model. This is captured in the \emph{informed} prior (before starting the online phase) over the unknown parameters $\eta$ and $\theta$. Denote the probability distributions over the transition parameter and reward parameter by $\chi(\eta)$ and $\nu(\theta)$ respectively. 

\begin{algorithm}[t]
   \caption{Top-two Posterior Sampling for Preference Learning ($\PSPL$)}
\begin{algorithmic}[1]
   \STATE {\bfseries Input:} Initial dataset $\Dcal_{0}$, prior on $\theta$ as $\nu_{0}(\theta)$ and $\eta$ as $\chi_{0}(\eta)$, horizon $H$, episodes $K$.
   \STATE Construct informed prior $\nu_{1}(\theta)$ from Equation \eqref{eq:theta_informed_prior} and $\chi_{1}(\eta)$ from Equation \eqref{eq:eta_informed_prior}.
    \FOR{$k = 1,2, \dots ,K$} 
    \STATE Sample $\hat{\eta}_{k}^{(0)}, \hat{\eta}_{k}^{(1)} \sim \chi_{k}(\eta)$ and $\hat{\theta}_{k}^{(0)}, \hat{\theta}_{k}^{(1)} \sim \nu_{k}(\theta)$.
    \STATE Compute policies $\pi_{k}^{(0)}$ using $(\hat{\eta}_{k}^{(0)}, \hat{\theta}_{k}^{(0)})$ and $\pi_{k}^{(1)}$ using $(\hat{\eta}_{k}^{(1)}, \hat{\theta}_{k}^{(1)})$.
    \STATE Run two trajectories $\tau_{k}^{(0)} \sim \pi_{k}^{(0)}$ and $\tau_{k}^{(1)} \sim \pi_{k}^{(1)}$ for $H$ horizon.
    \STATE Get feedback $Y_{k}$ on $\tau_{k}^{(0)}$ and $\tau_{k}^{(1)}$, and append to dataset as $\Dcal_{k} = \Dcal_{k-1} \oplus \left(\tau_{k}^{(0)}, \tau_{k}^{(1)}, Y_{k} \right)$. 
    \STATE Update posteriors to get $\nu_{k+1}(\theta)$ and $\chi_{k+1}(\eta)$.
    \ENDFOR
    \STATE {\bfseries Output:} \textcolor{black}{Optimal policy $\pi_{K+1}^{\star}$ computed using MAP estimate from $\chi_{K+1}(\eta)$ and $\nu_{K+1}(\theta)$}.
\end{algorithmic}
\label{alg:main_algo_theoretical}
\end{algorithm}

\textbf{Informed prior for \texorpdfstring{$\theta$}{theta}.}
Denoting the \emph{uninformed} prior as $\nu_{0}(\theta)$, we have the \emph{informed} prior $\nu_{1}(\theta)$ as,

\vspace{-0.3cm}
\begin{equation}
\resizebox{0.9\linewidth}{!}{$
\begin{aligned}
     \nu_{1}(\theta) \; &:=  \; \nu(\theta \given \Dcal_{0}) \; \propto \; \Pr(\Dcal_{0} \given \theta) \nu_{0}(\theta) \; \\ 
     & \propto \; \prod_{n=1}^{N} \Pr(\bar{Y}_n \given \bar{\tau}_n^{(0)}, \bar{\tau}_n^{(1)}, \theta) \; \Pr(\bar{\tau}_n^{(0)} \given \theta)   \; \Pr(\bar{\tau}_n^{(1)} \given \theta) \; \nu_{0}(\theta), 
\end{aligned}
$}
\label{eq:theta_informed_prior}
\end{equation}

where the second step follows from Equation \eqref{eq:pref_logistic_trajectories}, and that $\bar{\tau}_n^{(0)}$ and $\bar{\tau}_n^{(1)}$ are assumed independent given $\theta$, \textcolor{black}{as is in the context of RLHF, where outputs (trajectories) are conditionally independent given the prompt}. It is worth emphasizing that the offline dataset carries information about the reward parameter through the terms $\Pr( \cdot \given \theta)$, which incorporates information about the expert’s policy, and thus improves the informativeness of the prior distribution.

\textbf{Informed prior for \texorpdfstring{$\eta$}{eta}.}
Denoting the \emph{uninformed} prior as $\chi_{0}(\eta)$, we have \emph{informed} prior $\chi_{1}(\eta)$ as,

\vspace{-0.3cm}
\begin{equation}
\resizebox{0.77\linewidth}{!}{$
\begin{aligned}
    \chi_{1}(\eta) \; &:=  \; \chi(\eta \given \Dcal_{0}) \;  \propto \; \Pr(\Dcal_{0} \given \eta) \, \chi_{0}(\eta) \; \\ & \propto \; \prod_{n=1}^{N} \prod_{j=0}^{1} \prod_{h=1}^{H-1}  \Pbb_{\eta} \left(\bar{s}_{n,h+1}^{(j)} \given \bar{s}_{n,h}^{(j)}, \bar{a}_{n,h}^{(j)} \right) \; \chi_{0}(\eta),
\end{aligned}
$}
\label{eq:eta_informed_prior}
\end{equation}
\vspace{-0.4cm}

where $\bar{\tau}_{n}^{(j)} := \{\bar{s}_{n,1}^{(j)}, \bar{a}_{n,1}^{(j)}, \dots, \bar{s}_{n,H}^{(j)} \}$ for $j \in \{0,1\}$ is an offline trajectory of length $H$. Note here that the proportional sign hides the dependence on the offline dataset generating policy (since we do not make any assumptions on this behavourial policy) and the fact that $\bar{Y}_{n}$ is conditionally independent of dynamics given $\bar{\tau}_n^{(0)}$, $\bar{\tau}_n^{(1)}$.

After constructing informed priors from the offline preference dataset above, the learner begins the online phase for active data collection using pure exploration. The learner maintains posteriors over the true reward and transition kernels, which inherently permit for exploration. In each episode, using samples from these posteriors, the learner computes two policies using value / policy iteration or linear programming, and rolls out two $H$-horizon trajectories. \textcolor{black}{Posteriors are updated based on the trajectory-level preference feedback, and final policy output is constructed from Maximum-A-Posteriori (MAP) estimate from $\chi_{K+1}(\eta)$ and $\nu_{K+1}(\theta)$, as shown in Algorithm \ref{alg:main_algo_theoretical}}.

\begin{remark}
\label{remark:ps_over_optimism}
Although there exist optimism-based algorithms which construct confidence sets around the reward and transition kernels, it is known that posterior sampling based algorithms provide superior empirical performance \citep{ghavamzadeh2015bayesian, ouyang2017learning, liu2023efficient}. In addition, for our problem setting of best policy identification with noisy trajectory level feedback, posterior sampling is a natural method to incorporate beliefs about the environment.
\end{remark}


\section{Theoretical Analysis of PSPL}
\label{sec:analysis}

This section focuses on regret analysis of $\PSPL$. The analysis has two main steps: (i) finding a \emph{prior-dependent} upper regret bound in terms of the sub-optimality of any optimal policy estimate $\hat{\pi}^{\star}$ constructed from $\Dcal_{0}$. This part characterizes the online learning phase by upper bounding simple Bayesian regret in terms of the estimate $\hat{\pi}^{\star}$ constructed by $\PSPL$ before the online phase, and  (ii) describing the procedure to construct this $\hat{\pi}^{\star}$ based on the attributes of $\Dcal_{0}$, such as size $N$ and rater competence $(\lambda, \beta)$. Proofs of results, if not given, are provided in Appendix \ref{sec:appendix}.

\subsection{General prior-dependent regret bound}

It is natural to expect some regret reduction if an offline preference dataset is available to warm-start the online learning. However, the degree of improvement must depend on the `quality' of this dataset, for example through its size $N$ or rater competence $(\lambda, \beta)$. Thus, analysis involves obtaining a prior-dependent regret bound, which we obtain next.

\vspace{-0.4cm}
\textcolor{black}{
\begin{restatable}{lemma}{psplerrorregret}
\label{lemma:pspl_error_regret}
For any confidence $\delta_{1} \in (0,\frac{1}{3})$, let $\delta_{2} \in (c,1)$ with $c \in (0,1)$, be the probability that any optimal policy estimate $\hat{\pi}^{\star}$ constructed from the offline preference dataset $\Dcal_{0}$ is $\varepsilon$-optimal with probability at least $(1-\delta_{2})$ i.e., $\Pr \left( \E{s \sim \rho} \left[ V_{\theta,\eta,0}^{\pi^{\star}}(s) -  V_{\theta,\eta,0}^{\hat{\pi}^{\star}}(s) \right] > \varepsilon \right) < \delta_{2}$. Then, the simple Bayesian regret of the learner $\Upsilon$ is upper bounded with probability at least $1-3\delta_{1}$ by, 
\begin{equation}
    \label{eq:pspl_prior_error_bound}
    \resizebox{0.8\linewidth}{!}{$
    \begin{aligned}
        \SR_{K}^{\Upsilon}(\pi^{\star}_{K+1}, \pi^{\star}) \leq \sqrt{ \frac{10 \delta_{2}  S^{2}AH^{3} \ln \left( \frac{2KSA}{\delta_{1}}  \right) + 3SAH^{2}\varepsilon^{2}}{2K \left(1 + \ln \frac{SAH}{\delta_{1}} \right) - \ln \frac{SAH}{\delta_{1}}} }
    \end{aligned}
    $}
    \end{equation}
\end{restatable}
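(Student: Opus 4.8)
The plan is to convert the simple Bayesian regret into a value-function suboptimality gap and then control that gap through model-estimation errors whose concentration sharpens with the number of online episodes $K$.

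First I would rewrite the objective. Since $r_{\theta}(\tau) = \sum_{h=1}^{H} r_{\theta}(s_h,a_h)$ and the expected trajectory reward of a policy equals its initial value, the simple regret equals $\Ebb_{s \sim \rho}\!\left[V_{\theta,\eta,0}^{\pi^{\star}}(s) - V_{\theta,\eta,0}^{\pi_{K+1}^{\star}}(s)\right]$, where $\pi_{K+1}^{\star}$ is optimal for the MAP-estimated MDP $\hat{M} = (\hat{\theta}_{K+1}, \hat{\eta}_{K+1})$ and $\pi^{\star}$ is optimal for the true MDP $M=(\theta,\eta)$. I would then apply the standard add-and-subtract decomposition, inserting the value of each policy under $\hat{M}$, so the gap splits as $(V_{M}^{\pi^{\star}} - V_{\hat{M}}^{\pi^{\star}}) + (V_{\hat{M}}^{\pi^{\star}} - V_{\hat{M}}^{\pi_{K+1}^{\star}}) + (V_{\hat{M}}^{\pi_{K+1}^{\star}} - V_{M}^{\pi_{K+1}^{\star}})$. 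The middle term is nonpositive because $\pi_{K+1}^{\star}$ is optimal under $\hat{M}$, leaving two cross-model differences to bound.

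Next I would apply a simulation lemma to each cross-model difference, bounding it by the horizon-accumulated reward-estimation error $|r_{\hat\theta} - r_{\theta}|$ plus the transition-estimation error $\|\Pbb_{\hat\eta}(\cdot \mid s,a) - \Pbb_{\eta}(\cdot \mid s,a)\|_{1}$ weighted by $\|V\|_{\infty} \le H$ (rewards lie in $[0,1]$); summing over the $H$ steps produces reward and transition contributions scaling like $H$ and $H^{2}$ respectively. I would then bound the two error sources separately. The transition error is controlled via concentration of the Dirichlet posterior MAP estimate, union-bounded over the $SA$ state-action pairs and over episodes, which is the origin of the $\ln(2KSA/\delta_{1})$ factor and the $S^{2}AH^{3}$ scaling. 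The reward/policy-quality error is controlled via the offline guarantee: on the event of probability at least $1-\delta_{2}$ the offline estimate $\hat\pi^{\star}$ is $\varepsilon$-optimal, contributing the $\varepsilon^{2}$ term, while the complementary event of probability below $\delta_{2}$ is handled crudely through the value range, contributing the $\delta_{2}$-weighted term. The denominator $2K(1+\ln\frac{SAH}{\delta_{1}}) - \ln\frac{SAH}{\delta_{1}}$ would arise from a high-probability multiplicative Chernoff lower bound on the effective number of online visits that refine the posterior, which is what drives the $1/\sqrt{K}$ improvement.

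Finally I would assemble the pieces: a union bound over the transition-concentration event, the reward-concentration event, and the offline-estimate event gives the overall confidence $1 - 3\delta_{1}$; bounding the squared gap by (total error budget) over (effective sample size) and applying Jensen's inequality to pull the square root outside the expectation yields the stated form. The main obstacle I anticipate is the transition-concentration step under top-two posterior sampling: the two sampled policies determine which state-actions are visited, so the visitation counts are statistically coupled with the posterior, and converting per-state-action concentration into the clean $K$-dependent denominator requires a careful exploration-coverage argument together with correct bookkeeping of the $S^{2}AH^{3}$ factor and the logarithmic terms.
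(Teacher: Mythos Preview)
Your decomposition is the frequentist simulation-lemma route, but the paper's argument is a Bayesian posterior-sampling one, and the two are not interchangeable here. The paper exploits that, conditioned on $\Dcal_k$, the true parameter $\Theta=(\theta,\eta)$ and the posterior sample $\hat{\Theta}_k^{(i)}$ are identically distributed; this yields the key identity $\E{\Dcal_k}\bigl[J_{\pi^{\star}}^{\Theta}\,I_k - J_{\pi_k^{(i)}}^{\hat{\Theta}_k^{(i)}}\,I_k\bigr]=0$, so the regret reduces to $\Ebb[\tilde Z_{K+1} I_{K+1}]$ with $\tilde Z = J_{\pi_k}^{\hat{\Theta}_k}-J_{\pi_k}^{\Theta}$, a model-error term evaluated \emph{only along the played policy}. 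Your add-and-subtract instead leaves the cross term $V_M^{\pi^{\star}}-V_{\hat M}^{\pi^{\star}}$, which demands model accuracy along trajectories of the unknown $\pi^{\star}$; nothing in the algorithm guarantees concentration there, so this step does not go through without the posterior-sampling identity.

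Second, the specific $\sqrt{\delta_2\cdot(\ldots)}$ structure arises from Cauchy--Schwarz, $\Ebb[\tilde Z I]\le \sqrt{\Ebb[I^2]\,\Ebb[\tilde Z^2]}$, together with $\Ebb[I_{K+1}^2]\le \delta_2$. The latter bound is obtained via a monotone-contraction lemma showing $\Pr(\pi_{K+1}^{(i)}\text{ not }\varepsilon\text{-optimal})\le \Pr(\pi_{1}^{(i)}\text{ not }\varepsilon\text{-optimal})\le \delta_2$; this is how the offline guarantee on $\hat\pi^{\star}$ propagates to the final episode. Your event-splitting treatment of $\delta_2$ would give an additive $\delta_2 H+\varepsilon$-type bound rather than the product under the square root, and you never explain how the offline guarantee transfers from $\hat\pi^{\star}$ to $\pi_{K+1}^{\star}$. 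Finally, the three $\delta_1$'s in the $1-3\delta_1$ confidence all come from transition/visitation concentration (the empirical-counts bound, the $\ell_1$ transition bound, and a lower bound on $\sum_{j<K+1}w_j$), not from a separate reward or offline event.
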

}
\vspace{-0.2cm}

Please see Appendix \ref{proof:pspl_error_regret} for proof. This lemma tells us that if the learner can construct an $\varepsilon$-optimal policy estimate $\hat{\pi}^{\star}$ with a probability of $(1-\delta_{2})$, then the learner's simple regret decreases as $\delta_{2}$ decreases. Next, we describe how to incorporate information from $\Dcal_{0}$ before the online phase.

\subsection{Incorporating offline preferences for regret analysis}

Before we begin to construct $\hat{\pi}^{\star}$, we define the state visitation probability $p_{h}^{\pi}(s)$ for any given policy $\pi$. Construction of $\hat{\pi}^{\star}$ will revolve around classification of states in a planning step based on $p_{h}^{\pi}(s)$, which will lead to a reward free exploration strategy to enable learning the optimal policy.


\begin{definition}[\textcolor{black}{State Visitation Probability}]
\label{def:visitation}
    Given $(h,s)\in [H]\times \mathcal{S}$, the state (occupancy measure) and state-action visitation probabilities of a policy $\pi$ is defined for all $h' \in [h-1]$ as follows:
    \begin{equation}
    \resizebox{0.85\linewidth}{!}{$
    \begin{aligned}
        p_{h}^{\pi}(s) &= \Pr \left( s_h = s \given s_1 \sim \rho, a_{h'} \sim \pi(s_{h'})\right) \\
        p_{h}^{\pi}(s,a) &= \Pr \left( s_h = s, a_h = a \given s_1 \sim \rho, a_{h'} \sim \pi(s_{h'}) \right) \, .
    \end{aligned}
    $}
    \end{equation}
\end{definition}

Let $p_{\min} = \min_{h,s} \max_{\pi}  p_{h}^{\pi}(s)$, and we assume it is positive. Further, define the infimum probability of any reachable state under $\pi^{\star}$ as $p_{\min}^{\star} := \min_{h,s} p_{h}^{\pi^{\star}}(s)$ and we assume it positive as well.



\textcolor{black}{
We now describe a procedure to construct an estimator $\hat{\pi}^{\star}$ of the optimal policy  from $\mathcal{D}_0$ such that it is $\varepsilon$-optimal with probability $\delta_{2}$. For each ($\theta, \eta$), define a deterministic Markov policy $\pi^{{\star}}(\theta, \eta) = \{\pi_h^{{\star}}(\cdot ; \theta, \eta) \}_{h=1}^{H}$, 
\begin{equation}
    \resizebox{0.85\linewidth}{!}{$
    \begin{aligned}
    \pi_h^{{\star}}(s ; \theta, \eta) =    \begin{cases}
        \arg\max_a Q_{\theta, \eta, h}(s, a), & \text{if } p_{h}^{\pi^{\star}}(s) > 0 \\
        \mathring{a} \sim \mathrm{Unif}(\Acal), &\text{if } p_{h}^{\pi^{\star}}(s) = 0,
    \end{cases}
    \end{aligned}
    $}
\end{equation}
}
\textcolor{black}{where $Q_{\theta, \eta, h}(\cdot, \cdot)$ is the Q-value function in a MDP with reward-transition parameters $(\theta, \eta)$, and the tiebreaker for the argmax operation is based on any fixed order on actions}. It is clear from construction that $\{\pi_{h}^{\star}(\cdot ; \theta, \eta)\}_{h=1}^{H}$ is an optimal policy for the MDP with parameters ($\theta, \eta$). Furthermore, for those states that are impossible to be visited, we choose to take an action uniformly sampled from $\Acal$. 

\textbf{Construction of $\hat{\pi}^{\star}$.} To attribute preference of one trajectory to another, we build `winning' ($U_{h}^{W}$), and `undecided' ($U_{h}^{U}$) action (sub)sets for each state in the state space for each time step $h \in [H]$. To achieve this, we define the net count of each state-action pair from the offline dataset $\Dcal_{0}$. Recall that any offline trajectory $\bar{\tau}_{n}^{(\cdot)}$ is composed of $(\bar{s}_{n,1}^{(\cdot)}, \bar{a}_{n,1}^{(\cdot)}, \dots, \bar{s}_{n,H}^{(\cdot)})$. Then if the `winning' counts are defined as $w_{h}(s,a) = \sum_{n=1}^{N} \Ibf \{\bar{s}_{n,h}^{(\bar{Y_{n}})} = s , \bar{a}_{n,h}^{(\bar{Y_{n}})} = a \}$, and `losing' counts are defined as $l_{h}(s,a) = \sum_{n=1}^{N} \Ibf \{\bar{s}_{n,h}^{(1-\bar{Y_{n}})} = s , \bar{a}_{n,h}^{(1-\bar{Y_{n}})} = a \}$, then the `net' counts are defined as $c_{h}(s,a) = w_{h}(s,a) - l_{h}(s,a)$. The action sets are then constructed as,
$$
U_{h}^{W}(s) = \{a ; c_{h}(s,a) > 0 \} \,,  U_{h}^{U}(s) = \{a ; c_{h}(s,a) \leq 0 \}
$$

By construction, if any state-action pair does not appear in $\Dcal_{0}$, it is attributed to the undecided set $U_{h}^{U}$. Finally, for some fixed $\delta \in (0,1)$, construct the policy estimate $\hat{\pi}^{\star} := \{\hat{\pi}_{h}^{\star}\}_{h=1}^{H}$ as 

\vspace{-0.3cm}
\begin{equation}
\resizebox{0.92\linewidth}{!}{$
\begin{aligned}
    \hat{\pi}_{h}^{\star}(s) := 
    \begin{cases}
        \argmax_{a \in U_{h}^{W}(s)} c_{h}(s,a) & \text{if } \sum_{a'} c_{h}(s,a') \geq \delta N \\
        \mathring{a} \sim \mathrm{Unif}(U_{h}^{U}) & \text{if } \sum_{a'} c_{h}(s,a') < \delta N \\
    \end{cases} \nonumber
\end{aligned}
$}
\end{equation}
\vspace{-0.3cm}

To ensure that for any state-time pair $(s,h)$ with $p_{h}^{\pi^{\star}}(s) > 0$, the optimal action $\pi_{h}^{\star}(s ; \cdot) \in U_{h}^{W}(s)$ with high probability, we first need an upper bound on the `error' probability of the rater w.r.t any trajectory pair in the offline dataset $\Dcal_{0}$. Here, the `error' probability refers to the probability of the event in which the rater prefers the sub-optimal trajectory (w.r.t. the trajectory score $g(\cdot)$). Hence, for $n \in [N]$, define an event $\Ecal_{n} := \left\{\bar{Y}_{n} \neq \argmax_{i \in \{0,1\}} g_{\beta, \vartheta}\left (\bar{\tau}_{n}^{(i)} \right) \right\}$, i.e., at the $n$-th index of the offline preference dataset, the rater preferred the suboptimal trajectory. Then,

\begin{lemma}
\label{lemma:rater_error_upper_bound}
Given a rater with competence $(\lambda, \beta)$ such that $\beta > \frac{2\ln \left(2 d^{1/2} \right)}{\left| B\lambda^{2} - 2 \Delta_{\min}  \right|}$ with $\Delta_{\min} = \min_{n \in [N]} \left| r_{\theta}(\bar{\tau}_{n}^{(0)}) - r_{\theta}(\bar{\tau}_{n}^{(1)}) \right|$, and an offline preference dataset $\Dcal_{0}$ of size $N>2$, we have
\begin{equation}
\resizebox{0.98\linewidth}{!}{$
\begin{aligned}
    \Pr (\Ecal_{n} \given \beta, \vartheta) \leq \exp\left(- \beta B \sqrt{2\ln(2d^{1/2}N)}/\lambda - \beta \Delta_{\min} \right)  + \frac{1}{N} \; := \; \gamma_{\beta, \lambda, N} \nonumber
\end{aligned}
$}
\end{equation}
\end{lemma}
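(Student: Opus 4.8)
The plan is to separate the two independent sources of randomness hidden in $\Ecal_n$: the Bradley--Terry draw of the label $\bar Y_n$ given the rater's noisy parameter $\vartheta$, and the Gaussian perturbation $\vartheta-\theta$ itself. First I would condition on $\vartheta$ and write $v_n := \phi(\bar\tau_n^{(0)}) - \phi(\bar\tau_n^{(1)})$. Given $\vartheta$, the trajectory $\argmax_{i\in\{0,1\}} g_{\beta,\vartheta}(\bar\tau_n^{(i)})$ is simply the one with the larger value of $\langle \phi(\bar\tau_n^{(i)}), \vartheta\rangle$, so by the model in Equation \eqref{eq:pref_logistic_trajectories} the probability that the realized label contradicts this latent ranking is exactly $\sigma\!\left(-\beta|\langle v_n, \vartheta\rangle|\right)$. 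Applying the elementary bound $\sigma(-x)\le e^{-x}$ gives $\Pr(\Ecal_n \given \beta, \vartheta) \le \exp\!\left(-\beta|\langle v_n, \vartheta\rangle|\right)$, which reduces the whole lemma to producing a good lower bound on the rater's realized score gap $|\langle v_n, \vartheta\rangle|$.

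Next I would decompose this score gap into a deterministic signal and a Gaussian noise term. With $\xi := \vartheta - \theta \sim \Ncal(0, \mathbf{I}_d/\lambda^2)$ and $r_\theta(\tau)=\langle \phi(\tau),\theta\rangle$, we get $\langle v_n, \vartheta\rangle = \big(r_\theta(\bar\tau_n^{(0)}) - r_\theta(\bar\tau_n^{(1)})\big) + \langle v_n, \xi\rangle$, whose deterministic part has magnitude at least $\Delta_{\min}$ by definition. The noise $\langle v_n, \xi\rangle$ is mean-zero Gaussian with scale governed by $\|v_n\|_1 \le 2B$ (from $\|\phi(\tau)\|_1\le B$) and by $1/\lambda$. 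A standard Gaussian tail bound combined with a union bound over the $d$ coordinates and the $N$ dataset indices yields, on an event of probability at least $1-\tfrac1N$, a deviation bound of the form $|\langle v_n, \xi\rangle| \le \tfrac{B}{\lambda}\sqrt{2\ln(2 d^{1/2} N)}$. This is exactly where the logarithmic factor $\ln(2d^{1/2}N)$ and the additive slack $1/N$ appearing in $\gamma_{\beta,\lambda,N}$ come from, and where the parameter $1/\lambda$ (the rater's limited knowledgeability) enters.

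Finally I would combine the two steps: on the good event the reverse triangle inequality bounds $|\langle v_n,\vartheta\rangle|$ below in terms of $\Delta_{\min}$ and the deviation, and substituting this into the logistic bound of the first step produces the exponential factor in $\gamma_{\beta,\lambda,N}$, while the complementary event of probability at most $1/N$ is absorbed into the additive term. The stated lower bound on $\beta$, written through $B\lambda^2$, $\Delta_{\min}$, and $d$, is then precisely the condition that forces the resulting exponent to be negative enough for $\gamma_{\beta,\lambda,N}$ to be informative (in particular below $\tfrac12$); intuitively it is the regime in which the rater's deliberateness $\beta$ overcomes its limited knowledgeability $1/\lambda$. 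I expect the main obstacle to be the second step: because the Gaussian perturbation can, with nonnegligible probability, cancel the true reward gap and flip the rater's latent ranking, the deviation threshold must be calibrated so that its failure probability is exactly $1/N$ while the surviving deviation remains of order $\tfrac{B}{\lambda}\sqrt{\ln(d^{1/2}N)}$; keeping the conditioning on $\vartheta$ consistent with this high-probability statement is the delicate bookkeeping that drives the whole argument.
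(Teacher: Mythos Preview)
Your plan is correct and mirrors the paper's argument almost step for step: the paper writes $\Pr(\Ecal_n\mid\beta,\vartheta)$ via the Bradley--Terry formula, splits the exponent as $\langle v_n,\vartheta\rangle=\langle v_n,\theta\rangle+\langle v_n,\vartheta-\theta\rangle$, applies H\"older to get $\|v_n\|_1\|\vartheta-\theta\|_\infty$, invokes the Gaussian maximal inequality $\Pr(\|\vartheta-\theta\|_\infty\ge t)\le 2d^{1/2}e^{-t^2\lambda^2/2}$ with $t=\sqrt{2\ln(2d^{1/2}N)}/\lambda$ to produce the $1/N$ slack, and finally checks via a quadratic in $\ln N$ that the stated lower bound on $\beta$ is exactly what forces $\gamma_{\beta,\lambda,N}<1$. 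The only cosmetic discrepancies are that the paper writes the embedding-difference bound as $B$ rather than your $2B$, and that it reaches the $\|\cdot\|_\infty$ bound in one shot rather than phrasing it as a union over coordinates and indices.
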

\vspace{-0.3cm}

See Appendix \ref{proof:pspl_final_regret} for proof. This lemma establishes the relationship between the competence of a rater (in terms of $(\lambda, \beta)$) and the probability with which it is sub-optimal in its preference. Observe that as the rater tends to an expert i.e., $\lambda, \beta \to \infty$ and we have increasing offline dataset size $N$, we get $\Pr (\Ecal_{n} \given \beta, \vartheta) \to 0$ i.e., the rater \emph{always} prefers the trajectory with higher rewards in an episode, which is intuitive to understand. Using this Lemma \ref{lemma:rater_error_upper_bound}, we can upper bound the simple Bayesian regret of $\PSPL$ as below.

\begin{restatable}{theorem}{psplfinalregretbound}
\label{lemma:pspl_final_simple_regret}
For any confidence $\delta_{1} \in (0,\frac{1}{3})$ and offline preference dataset size $N>2$, the simple Bayesian regret of the learner $\Upsilon$ is upper bounded with probability of at least $1-3\delta_{1}$ by, 
    \begin{equation}
    \resizebox{\linewidth}{!}{$
    \begin{aligned}
        &\SR_{K}^{\Upsilon}(\pi^{\star}_{K+1}, \pi^{\star}) \leq \sqrt{ \frac{20 \delta_{2}  S^{2}AH^{3} \ln \left( \frac{2KSA}{\delta_{1}}  \right)}{2K \left(1 + \ln \frac{SAH}{\delta_{1}} \right) - \ln \frac{SAH}{\delta_{1}}} } \;\; \\ ,\text{with} & \; \delta_{2} = 2 \exp \left( - N \left( 1 + \gamma_{\beta, \lambda, N} \right)^{2} \right) + \exp \left( - \frac{N}{4} (1 - \gamma_{\beta, \lambda, N})^{3} \right). \nonumber
    \end{aligned}
    $}
    \end{equation}
\end{restatable}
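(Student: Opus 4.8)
The plan is to instantiate the general prior-dependent bound of Lemma \ref{lemma:pspl_error_regret} with the explicit construction of $\hat\pi^\star$ from the net counts, and to show that the failure probability $\delta_2$ of that construction is governed by the rater-error bound $\gamma_{\beta,\lambda,N}$ of Lemma \ref{lemma:rater_error_upper_bound}. Fix a reachable state-time pair $(s,h)$ with $p_h^{\pi^\star}(s)>0$ and let $a^\star = \pi_h^\star(s)$ be its optimal action. The event $\{\hat\pi_h^\star(s)\neq a^\star\}$ decomposes into two failure modes: (i) the total net count falls below threshold, $\sum_{a'} c_h(s,a') < \delta N$, so the construction defaults to the uniform branch over $U_h^U(s)$; and (ii) the threshold is met but $a^\star$ is either excluded from the winning set $U_h^W(s)$ (i.e. $c_h(s,a^\star)\le 0$) or beaten inside it by a suboptimal action. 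I would bound each probability separately and union bound.

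For the core step I would condition on the realized offline trajectory pairs and use Lemma \ref{lemma:rater_error_upper_bound} to control the labels: with probability at least $1-\gamma_{\beta,\lambda,N}$ each $\bar{Y}_n$ selects the higher-scoring (hence higher-reward) trajectory, so the winning counts $w_h(s,a^\star)$ accumulate on the optimal action while the losing counts $l_h(s,a^\star)$ are suppressed. Writing each contribution to $c_h(s,a^\star)$ as a bounded variable whose mean is pushed positive by a margin proportional to $(1-\gamma_{\beta,\lambda,N})$, a multiplicative Chernoff bound on the sum of $N$ terms yields a threshold/sign tail of the form $\exp\!\big(-\tfrac{N}{4}(1-\gamma_{\beta,\lambda,N})^3\big)$, while a two-sided Chernoff argument on the aggregate counts contributes the $2\exp\!\big(-N(1+\gamma_{\beta,\lambda,N})^2\big)$ term; summing gives $\delta_2$. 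The positivity of $p_{\min}^\star$ is what guarantees that every reachable state is visited often enough in $N$ draws for these concentration statements to bite.

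On the complementary event (probability at least $1-\delta_2$) the construction selects the optimal action at every state reachable under $\pi^\star$, so $\hat\pi^\star$ coincides with $\pi^\star$ along the entire occupancy measure and is in fact exactly optimal, hence $\varepsilon$-optimal for every $\varepsilon\ge 0$ at this same $\delta_2$. I would feed this $\delta_2$ back into Lemma \ref{lemma:pspl_error_regret} and, exploiting this freedom in $\varepsilon$, choose it so that the residual term $3SAH^2\varepsilon^2$ is at most the leading term $10\delta_2 S^2AH^3\ln(2KSA/\delta_1)$; this inflates the numerator by at most a factor of two, producing the stated bound with constant $20$ and no explicit $\varepsilon$ dependence.

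I expect the main obstacle to be the concentration step, because $\Dcal_0$ is generated by an arbitrary, unknown behavioral policy, so the distribution of which state-action pairs land in the winning versus losing trajectories is not under our control. The delicate part is arguing that correct labels (guaranteed only per-comparison with probability $1-\gamma_{\beta,\lambda,N}$ by Lemma \ref{lemma:rater_error_upper_bound}) translate into a strictly positive expected net count for the optimal action uniformly over reachable states, so that the Chernoff tails emerge with the precise exponents $(1-\gamma_{\beta,\lambda,N})^3$ and $(1+\gamma_{\beta,\lambda,N})^2$. Decoupling the rater noise from the trajectory-generation randomness, while keeping the final bound free of the behavioral policy, is where the real work concentrates.
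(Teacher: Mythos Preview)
Your plan matches the paper's proof closely: invoke Lemma~\ref{lemma:pspl_error_regret}, show the constructed $\hat\pi^\star$ is optimal with failure probability $\delta_2$ controlled by $\gamma_{\beta,\lambda,N}$, and send $\varepsilon\to 0$. Two minor structural differences are worth flagging. First, the paper partitions into \emph{four} events $E_1,\dots,E_4$ according to the joint status of $p_h^{\pi^\star}(s)\in\{>0,=0\}$ and $c_h(s)\gtrless\delta N$ with $\delta=(1-\gamma_{\beta,\lambda,N})/2$; the unreachable-state cases $E_3,E_4$ are handled explicitly, and the factor $2$ in $2\exp\!\big(-N(1+\gamma_{\beta,\lambda,N})^2\big)$ arises because $E_1$ and $E_3$ each contribute one copy via a one-sided binomial tail (the paper models $c_h(s)+N\sim\mathrm{Bin}(2N,1-\gamma_{\beta,\lambda,N})$), not from a two-sided Chernoff as you suggest. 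Your restriction to reachable states is cleaner and still correct for the value comparison, but you should be aware that the paper's route to the constant $2$ is different. Second, on the $\varepsilon$ step: the paper simply observes that the bound in Lemma~\ref{lemma:pspl_error_regret} is minimized at $\varepsilon\to 0$ once exact optimality is established, and concludes directly; your detour through bounding $3SAH^2\varepsilon^2$ by the leading term is unnecessary once you have argued exact optimality, though it does supply one justification for the $10\to 20$ inflation that the paper leaves unremarked.
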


See Appendix \ref{proof:pspl_final_regret} for the proof. For a fixed $N>2$, and large $S$ and $A$, the simple regret bound is $\widetilde{\Ocal}\left( \sqrt{S^{2}AH^{3}K^{-1}} \right)$. Note that this bound converges to zero exponentially fast as $N \to \infty$ and as the rater tends to an expert (large $\beta, \lambda$). In addition, as the number of episodes $K$ gets large, $\PSPL$ is able to identify the best policy with probability at least $(1-3\delta_{1})$.

\vspace{-0.3cm}
\textcolor{black}{
\begin{remark}
(i) This paper is different from existing hybrid RL works which usually consider some notion of `coverability' in the offline dataset $\Dcal_{0}$ \citep{wagenmaker2022instance, song2022hybrid}. We instead deal directly with the estimation of the optimal policy, which depends on the visits to each state-action pair in $\Dcal_{0}$. Rather than dealing with concentratability coefficients, we deal with counts of visits, which provides a simple yet effective way to incorporate coverability of the offline phase into the online phase. If the coverability (counts of visits) is low in $\Dcal_{0}$, the failure event in Lemma \ref{lemma:rater_error_upper_bound} has a higher upper bound, which directly impacts the simple regret in Theorem \ref{lemma:pspl_final_simple_regret} negatively. (ii) \textcolor{black}{Since bandit models are a special case of our setting, results specific for them are presented in Appendix \ref{sec:appendixbandits}.}
\end{remark}
}

\section{A Practical Approximation}
\label{sec:practical_approx_pspl}

The $\PSPL$ algorithm introduced above assumes that posterior updates can be solved in closed-form. In practice, the posterior update in equations \eqref{eq:theta_informed_prior} and \eqref{eq:eta_informed_prior} is challenging due to loss of conjugacy. Hence, we propose a novel approach using Bayesian bootstrapping to obtain approximate posterior samples. The idea is to perturb the loss function for the maximum a posteriori (MAP) estimate and treat the resulting point estimate as a proxy for an exact posterior sample.

We start with the MAP estimate problem for $(\theta, \vartheta, \eta)$ given the offline and online dataset $\Dcal_{k}$ at the beginning of episode time $k$. We show that this is equivalent to minimizing a particular surrogate loss function described below. For any parity of trajectory $j \in \{0,1\}$ during an episode $k$, we denote the estimated transition from state-action $s^{(j)}_{k,h}, a^{(j)}_{k,h}$ to state $s^{(j)}_{k,h+1}$ by $\Pr_{\eta} \left({s}_{t,h+1}^{(j)} \given {s}_{t,h}^{(j)}, {a}_{t,h}^{(j)} \right)$, where $\Pr_{\eta}(\cdot)$ is updated based on counts of visits. Note here that as transition dynamics are independent of trajectory parity, we update these counts based on all trajectories in $\Dcal_{k}$. See Appendix \ref{proof:mapestimatelemma} for proof. 

\begin{restatable}{lemma}{mapestimatelemma}
\label{th:mapestimatelemma} 
At episode $k$, the MAP estimate of $(\theta, \vartheta, \eta)$ can be constructed by solving the following equivalent optimization problem: 
\begin{equation}
\resizebox{\linewidth}{!}{$
\begin{aligned}
(\theta_{opt}, \vartheta_{opt}, \eta_{opt})  &= \underset{\theta, \vartheta, \eta}{\argmax} \; \Pr(\theta, \vartheta, \eta \, | \, \Dcal_{k}) \\ & \equiv \underset{\theta, \vartheta, \eta}{\argmin} \; \Lcal_{1}(\theta, \vartheta, \eta) +  \Lcal_{2}(\theta, \vartheta, \eta) +  \Lcal_{3}(\theta, \vartheta, \eta) \; , \, \text{where},  \\
 \Lcal_{1}(\theta, \vartheta, \eta)  &:= - \sum_{t=1}^{k-1} \vphantom{\int_1^2} \left[ \beta \langle {\tau}_t^{(Y_{t})} , \vartheta \rangle - \ln \bigg(e^{ \beta \langle {\tau}_t^{(0)}, \vartheta \rangle} + e^{\beta \langle {\tau}_t^{(1)}, \vartheta \rangle} \bigg) \right.  \\ & \qquad + \left. \sum_{j=0}^{1} \sum_{h=1}^{H-1} \ln \Pr_{\eta}\left({s}_{t,h+1}^{(j)} \given {s}_{t,h}^{(j)}, {a}_{t,h}^{(j)} \right) \vphantom{\int_1^2} \right], \\
 \Lcal_{2}(\theta, \vartheta, \eta) &:= - \sum_{n=1}^{N} \left[ \beta \langle \bar{\tau}_n^{(\bar{Y}_{n})} , \vartheta \rangle - \ln \bigg(e^{ \beta \langle \bar{\tau}_n^{(0)}, \vartheta \rangle} + e^{\beta \langle \bar{\tau}_n^{(1)}, \vartheta \rangle} \bigg) \right],  \\
 \Lcal_{3}(\theta, \vartheta, \eta) := \frac{\lambda^2}{2} & \norm{\theta - \vartheta}{2}{2} - SA \sum_{i=1}^{S} (\bm{\alpha}_{0,i} - 1) \ln \eta_{i} + \frac{1}{2} (\theta - \mu_{0})^{T} \Sigma_{0}^{-1} (\theta - \mu_{0}). \nonumber
\end{aligned}
$}
\label{eq:mapestimateproblem}
\end{equation}
\end{restatable}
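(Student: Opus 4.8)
The plan is to start directly from the definition of the MAP objective $\argmax_{\theta,\vartheta,\eta}\Pr(\theta,\vartheta,\eta \mid \Dcal_k)$, apply Bayes' rule to write the posterior as proportional to the joint likelihood of $\Dcal_k$ times the prior, and then pass to the negative logarithm so that the $\argmax$ becomes an $\argmin$. Since $\ln$ is monotone and the posterior's normalizing constant is independent of $(\theta,\vartheta,\eta)$, the minimizer is unchanged, and it suffices to show that $-\ln\Pr(\Dcal_k \mid \theta,\vartheta,\eta) - \ln\Pr(\theta,\vartheta,\eta)$ equals $\Lcal_1+\Lcal_2+\Lcal_3$ up to an additive constant.

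Next I would factor the joint distribution. Writing $\Dcal_k = \Dcal_0 \oplus \Hcal_k$, conditional independence of the samples factors the likelihood into a product over the $N$ offline tuples and the $k-1$ online tuples; within each tuple the preference label and the trajectory realizations factor further, exactly as in Equation \eqref{eq:pref_logistic_trajectories} and in the construction of the informed priors. For the prior I would use the hierarchical structure $\Pr(\theta,\vartheta,\eta) = \Pr(\vartheta \mid \theta)\,\Pr(\theta)\,\Pr(\eta)$, with $\vartheta \mid \theta \sim N(\theta,\mathbf{I}_d/\lambda^2)$, $\theta \sim N(\mu_0,\Sigma_0)$, and a $\mathrm{Dir}(\bm{\alpha}_0)$ prior per state-action pair. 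Taking negative logs turns each factor into an additive loss term.

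I would then match the terms one at a time. The Bradley-Terry factor $\Pr(Y \mid \tau^{(0)},\tau^{(1)};\vartheta) = \sigma(\beta\langle\phi(\tau^{(0)})-\phi(\tau^{(1)}),\vartheta\rangle)$ rewrites in softmax form, so its negative log equals $-\big[\beta\langle\phi(\tau^{(Y)}),\vartheta\rangle - \ln\big(e^{\beta\langle\phi(\tau^{(0)}),\vartheta\rangle}+e^{\beta\langle\phi(\tau^{(1)}),\vartheta\rangle}\big)\big]$; summing over online tuples produces the first bracket of $\Lcal_1$ and summing over offline tuples produces $\Lcal_2$. The transition factors $\prod_{j,h}\Pbb_\eta(s_{h+1}\mid s_h,a_h)$ contribute $-\sum_{j,h}\ln\Pr_\eta(\cdot)$, which is the second bracket of $\Lcal_1$; here I would invoke the stated aggregation of visitation counts over all trajectories in $\Dcal_k$ so that the offline transition contribution is absorbed into the count-based estimator $\Pr_\eta$. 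Finally the three prior factors give, respectively, $\frac{\lambda^2}{2}\norm{\theta-\vartheta}{2}{2}$, the Dirichlet log-density $-SA\sum_{i}(\bm{\alpha}_{0,i}-1)\ln\eta_i$ after aggregating the per-state-action priors, and the Gaussian quadratic $\frac12(\theta-\mu_0)^{\top}\Sigma_0^{-1}(\theta-\mu_0)$, which together constitute $\Lcal_3$. Collecting the three groups and discarding additive constants yields the claimed equivalence.

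The main obstacle I anticipate is bookkeeping rather than any deep argument: correctly disposing of the terms that are constant in $(\theta,\vartheta,\eta)$ and may therefore be dropped. In particular I must argue that the trajectory-propensity factors $\Pr(\bar\tau_n^{(j)}\mid\theta)$ appearing in the informed prior of Equation \eqref{eq:theta_informed_prior}, together with the unknown behavioral policy generating $\Dcal_0$, either do not depend on the optimization variables or contribute only additive constants, so they leave the $\argmin$ invariant. The one genuinely delicate point is reconciling the online transition log-likelihood with the pooled count-based estimator $\Pr_\eta$ — that is, justifying that optimizing $\eta$ against this surrogate, with counts aggregated over all of $\Dcal_k$, is the intended MAP objective — after which the equivalence follows by simple collection of terms.
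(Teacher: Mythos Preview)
Your proposal is correct and follows essentially the same approach as the paper's proof: Bayes' rule on the posterior, passage to the negative log, factorization of $\Dcal_k=\Dcal_0\oplus\Hcal_k$ into per-tuple Bradley--Terry and transition likelihoods, and the hierarchical prior decomposition $\Pr(\vartheta\mid\theta)\Pr(\theta)\Pr(\eta)$, with constants dropped. The paper handles the offline trajectory factor $\ln\Pr(\bar\tau_n^{(0)},\bar\tau_n^{(1)}\mid\theta,\vartheta,\eta)$ exactly as you anticipate, by declaring it independent of the optimization variables and absorbing it into a constant; your flagged ``delicate point'' about pooled counts is not addressed any more formally in the paper than in your sketch.
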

\vspace{-0.3cm}

\textbf{Perturbation of MAP estimate.} As mentioned above, the idea now is to \emph{perturb} the loss function in Equation \eqref{eq:mapestimateproblem} with some noise, so that the MAP point estimates we get from this perturbed surrogate loss function serve as \emph{samples} from a distribution that approximates the true posterior \citep{osband2019deep, NIPS2017_49ad23d1, qin2022analysis, dwaracherla2022ensembles}. To that end, we use a perturbation of the `online' loss function $\Lcal_1(\cdot)$ and of the `offline' loss function $\Lcal_2(\cdot)$ by multiplicative random weights, and of the `prior' loss function $\Lcal_3(\cdot)$ by random samples from the prior distribution \footnote{$\mathrm{Bern}(\cdot)$ parameters that maximize performance are provided.}:

\noindent(i) \textit{Online perturbation.} Let $\zeta_{t} \sim \mathrm{Bern}(0.75)$, all i.i.d. Then, the perturbed $\Lcal_{1}(\cdot)$ becomes,
\vspace{-0.25cm}
\begin{equation}
\resizebox{0.91\linewidth}{!}{$
\begin{aligned}
 \Lcal_{1}'(\theta, \vartheta, \eta) &= - \sum_{t=1}^{k-1} \zeta_{t}  \left[ \beta \langle {\tau}_t^{(Y_{t})} , \vartheta \rangle - \ln \bigg(e^{ \beta \langle {\tau}_t^{(0)}, \vartheta \rangle} + e^{\beta \langle {\tau}_t^{(1)}, \vartheta \rangle} \bigg)  \right. \\ & + \left. \sum_{j=0}^{1} \sum_{h=1}^{H-1} \ln P_{\eta} \left({s}_{t,h+1}^{(j)} \given {s}_{t,h}^{(j)}{a}_{t,h}^{(j)} \right) \right] \nonumber
\end{aligned}
$}
\end{equation}
\vspace{-0.5cm}

\noindent(ii) \textit{Offline perturbation.} Let $\omega_{n} \sim \mathrm{Bern}(0.6)$, all i.i.d. Then, the perturbed $\Lcal_{2}(\cdot)$ becomes,
\vspace{-0.3cm}
    \begin{equation}
    \resizebox{0.92\linewidth}{!}{$
    \begin{aligned}
     \Lcal_{2}'(\theta, \vartheta, \eta) = - \sum_{n=1}^{N} \omega_{n} \left[ \beta \langle \bar{\tau}_n^{(\bar{Y}_{n})} , \vartheta \rangle - \ln \bigg(e^{ \beta \langle \bar{\tau}_n^{(0)}, \vartheta \rangle} + e^{\beta \langle \bar{\tau}_n^{(1)}, \vartheta \rangle} \bigg) \right] \nonumber
    \end{aligned}
    $}
    \end{equation}
    
\noindent(iii) \textit{Prior perturbation.} Let $\theta' \sim \Ncal(\mu_{0}, \Sigma_{0})$, $\vartheta' \sim \Ncal (\mu_{0}, \Ibf_{d} /\lambda^{2})$, all i.i.d. The perturbed $\Lcal_{3}(\cdot)$ is,
\vspace{-0.6cm}
    \begin{equation}
    \resizebox{0.8\linewidth}{!}{$
    \begin{aligned}
       \Lcal_{3}'(\theta, \vartheta, \eta)  &= \frac{\lambda^2}{2} \norm{\theta - \vartheta + \vartheta'}{2}{2} - SA \sum_{i=1}^{S} (\bm{\alpha}_{0,i} - 1) \ln \eta_{i} \\ & + \frac{1}{2} (\theta - \mu_{0} - \theta')^{T} \Sigma_{0}^{-1} (\theta - \mu_{0}- \theta') \nonumber
    \end{aligned}
    $}
    \end{equation}
\vspace{-0.55cm}
    
Then, for the $k^{th}$ episode, we get the following MAP point estimate from the perturbed loss function, 

\vspace{-0.4cm}
\begin{equation}
\label{eq:surrogate_perturbed_loss}
\resizebox{0.87\linewidth}{!}{$
\begin{aligned}
    (\hat{\theta}_{k}, \hat{\vartheta}_{k}, \hat{\eta}_{k}) = \underset{\theta, \vartheta, \eta}{\argmin} \; \Lcal_{1}'(\theta, \vartheta, \eta) + \Lcal_{2}'(\theta, \vartheta, \eta) + \Lcal_{3}'(\theta, \vartheta, \eta), 
\end{aligned}
$}
\end{equation}
\vspace{-0.8cm}

which are understood to have a distribution that approximates the actual posterior distribution. Note that this loss function can be extended easily to the setting where the offline dataset comes from \emph{multiple} raters with different $(\lambda_{i}, \beta_{i})$ competencies. Specifically, for $M$ raters, there will be $M$ similar terms for $\Lcal_{1}'(\cdot)$, $\Lcal_{2}'(\cdot)$, and $\Lcal_{3}'(\cdot)$. The final algorithm is given as Algorithm \ref{alg:main_algo_empirical}.

\begin{figure}[t]
\vspace{-1em}
\begin{algorithm}[H]
   \caption{Bootstrapped Posterior Sampling for Preference Learning}
\begin{algorithmic}[1]
   \STATE {\bfseries Input:} Initial dataset $\Dcal_{0}$, priors $\nu_{0}(\theta)$ and $\chi_{0}(\eta)$.
    \FOR{$k = 1,2, \dots ,K+1$}
        \FOR{$i = 0,1$}
    	 \STATE Sample a set of perturbations $\Pcal_{k}^{(i)} = \{(\zeta_{t}^{(i)}, \omega_{n}^{(i)}, \theta'^{(i)}, \vartheta'^{(i)})\}$.
    	 \STATE Solve Equation \eqref{eq:surrogate_perturbed_loss} using $\Pcal_{k}^{(i)}$ to find $(\hat{\theta}_{k}^{(i)}, \hat{\vartheta}_{k}^{(i)}, \hat{\eta}_{k}^{(i)})$.
         \ENDFOR
         \STATE Compute optimal policies $\pi_{k}^{(0)}$ using $(\hat{\eta}_{k}^{(0)}, \hat{\theta}_{k}^{(0)})$ and $\pi_{k}^{(1)}$ using $(\hat{\eta}_{k}^{(1)}, \hat{\theta}_{k}^{(1)})$.
    \STATE Run two trajectories $\tau_{k}^{(0)} \sim \pi_{k}^{(0)}$ and $\tau_{k}^{(1)} \sim \pi_{k}^{(1)}$ for $H$ horizon.
    \STATE Get feedback $Y_{k}$ on $\tau_{k}^{(0)}$ and $\tau_{k}^{(1)}$, and append to dataset as $\Dcal_{k} = \Dcal_{k-1} \oplus \left(\tau_{k}^{(0)}, \tau_{k}^{(1)}, Y_{k} \right)$.
    \ENDFOR
    \STATE {\bfseries Output:} Optimal policy $\pi_{K+1}^{\star} \equiv \pi_{K+1}^{(0)} $.
\end{algorithmic}
\label{alg:main_algo_empirical}
\end{algorithm}
\vspace{-2em}
\end{figure}

\vspace{-0.3cm}
\section{Empirical results}
\label{sec:experiments}
\vspace{-0.3cm}

We now present results on the empirical performance of Bootstrapped $\PSPL$ algorithm. We first demonstrate the effectiveness of PSPL on synthetic simulation benchmarks and then on realworld datasets. We study: 
(i) How much is the reduction in simple Bayesian regret after warm starting with an offline dataset? (ii) How much does the competence ($\lambda$ and $\beta$) of the rater who generated the offline preferences affect simple regret? (iii) Is $\PSPL$ robust to mis-specification of $\lambda$ and $\beta$? 

\noindent\textbf{Baselines.} To evaluate the Bootstrapped $\PSPL$ algorithm, we  consider the following baselines:
(i) Logistic Preference based Reinforcement Learning (LPbRL) \citep{saha2023dueling}, 
and (ii) Dueling Posterior Sampling (DPS) \citep{novoseller2020dueling}. LPbRL does not specify how to incorporate prior offline data in to the optimization problem, and hence, no data has been used to warm start LPbRL, but, we initialize the transition and reward models in DPS using $\Dcal_{0}$. We run and validate the performance of all algorithms in the {\fontfamily{lmss}\selectfont RiverSwim} \citep{STREHL20081309} and {\fontfamily{lmss}\selectfont MountainCar} \citep{moore1990efficient} environments. See Figure \ref{fig:simple-cum-regrets} for empirical Bayesian regret comparison.


\textbf{Value of Offline Preferences.} We first understand the impact of $\Dcal_{0}$ on the performance of $\PSPL$ as the parameters ($\beta$, $\lambda$ and $N$) vary. Figure \ref{fig:main_ablation} shows that as $\lambda$ increases (the rater has a better estimate of the reward model $\theta$), the regret reduces substantially. Also notice that (for fixed $\lambda$ and $N$) as $\beta$ increases, the regret reduces substantially. Lastly, for fixed $\beta$ and $\lambda$, as dataset size $N$ increases, even with a `mediocre' expert ($\beta = 5$) the regret reduces substantially. \textcolor{black}{Overall, notice that $\PSPL$ correctly incorporates rater competence, which is intuitive to understand since in practice, rater feedback is imperfect, so weighting by competence is crucial.}

\textbf{Sensitivity to specification errors.}
The  Bootstrapped $\PSPL$ algorithm in Section \ref{sec:practical_approx_pspl} requires knowledge of rater's parameters $\lambda, \beta$. We study the sensitivity of $\PSPL$'s performance to mis-specification of these parameters. See Appendix \ref{sec:appendix_practical_pspl} for procedure to estimate $\lambda, \beta$ in practice, and Appendix \ref{sec:appendix_ablation} for ablation studies on robustness of PSPL to mis-specified parameters and mis-specified preference generation expert policies.

\textcolor{black}{
\textbf{Results on Image Generation Tasks.}
We instantiate our framework on the Pick-a-Pic dataset of human preferences for text–to–image generation \citep{kirstain2023pick}. The dataset contains over 500,000 examples and 35,000 distinct prompts. Each example contains a prompt, sequence generations of two images, and a corresponding preference label. We let each generation be a trajectory, so the dataset contains trajectory preferences $\mathcal D_{0} = { (\tau_{i}^{+},\tau_{i}^{-},y_{i}) }_{i=1}^{N}$ with $y_i = 1$ iff $\tau_{i}^{+} \succ \tau_{i}^{-}$. Each trajectory $\tau =(p,z_{0 : T})$ is the entire latent denoising chain $z_{0 : T}$ of length $T$ for prompt $p$ sampled from some prompt distribution. See Appendix \ref{appendix:pickapic} for the full experimental setup. As LPbRL involves optimization over confidence sets that grow exponentially in state-action space, we only present results of DPS and PSPL. See Figure \ref{fig:pickapic_example} for example generations given some prompts, where we observe higher final reward (hence, lower simple regret) of PSPL as compared to DPS.
}

\begin{figure*}[ht]
\noindent
\begin{minipage}{\textwidth}
\begin{tcolorbox}[width=.495\textwidth, nobeforeafter, coltitle = black, fonttitle=\fontfamily{lmss}\selectfont, title=MountainCar, halign title=flush center, colback=backg_blue!5, colframe=brown!25, boxrule=2pt, grow to left by=-0.5mm, left=-1pt, right=-1pt]
    \centering
    {   
        \includegraphics[height=0.33\textwidth, width=0.42\textwidth]{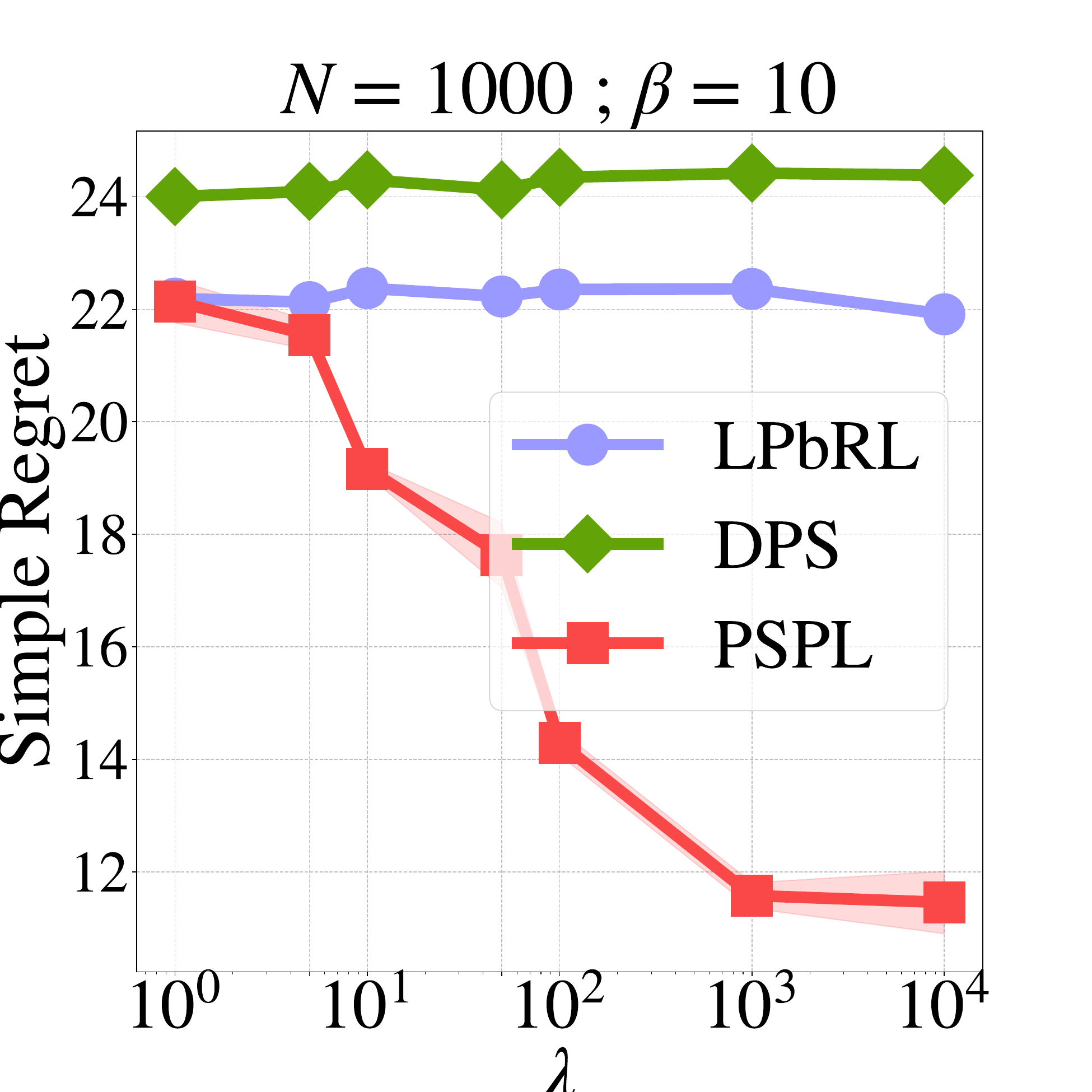}
    } 
    {
        \includegraphics[height=0.33\textwidth, width=0.42\textwidth]{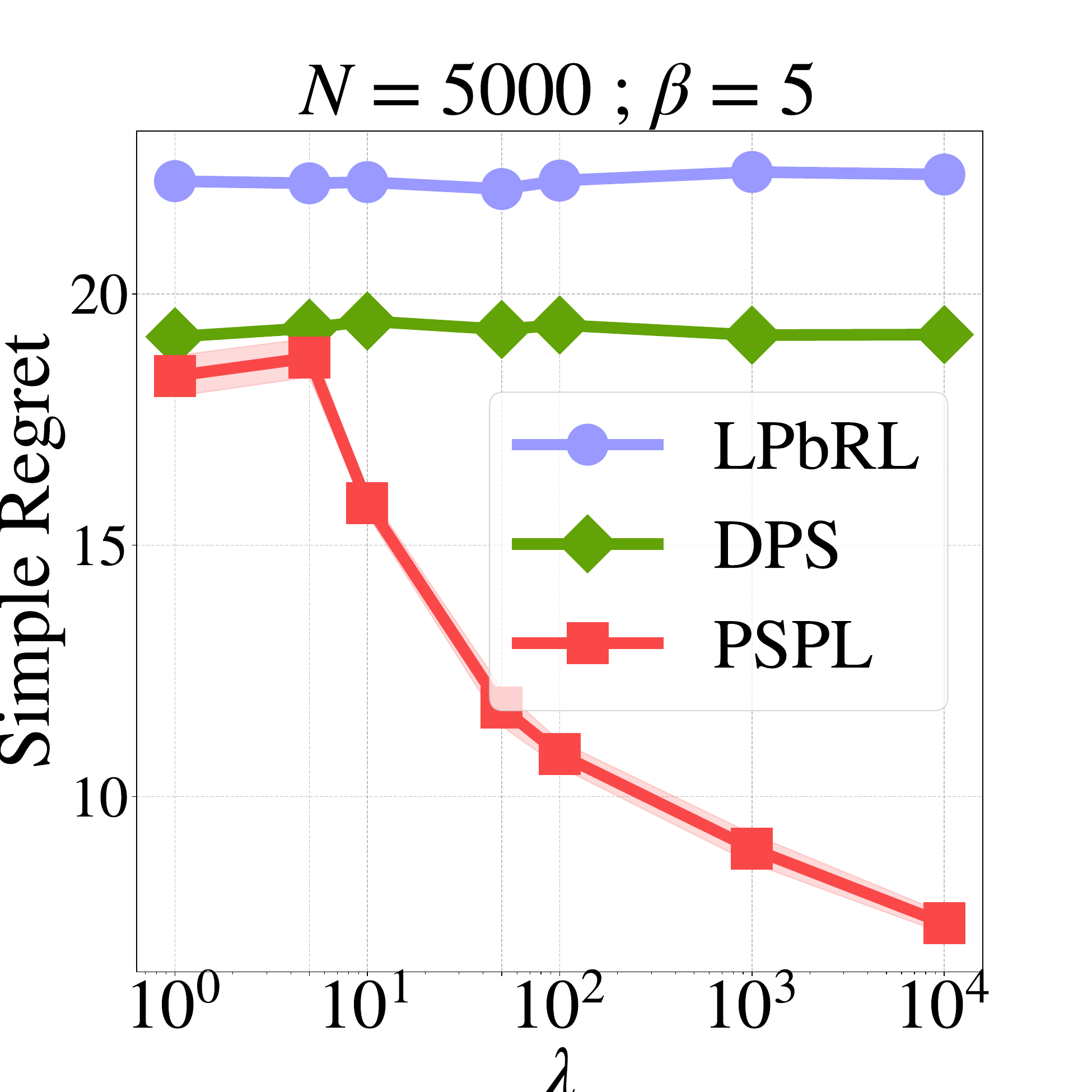}
    } 
    \newline
    \begin{center} \vspace{-0.60cm} 
        (a) Varying $\lambda$ for fixed $\beta$ and $N$. 
    \end{center}
    {
        \includegraphics[height=0.33\textwidth, width=0.42\textwidth]{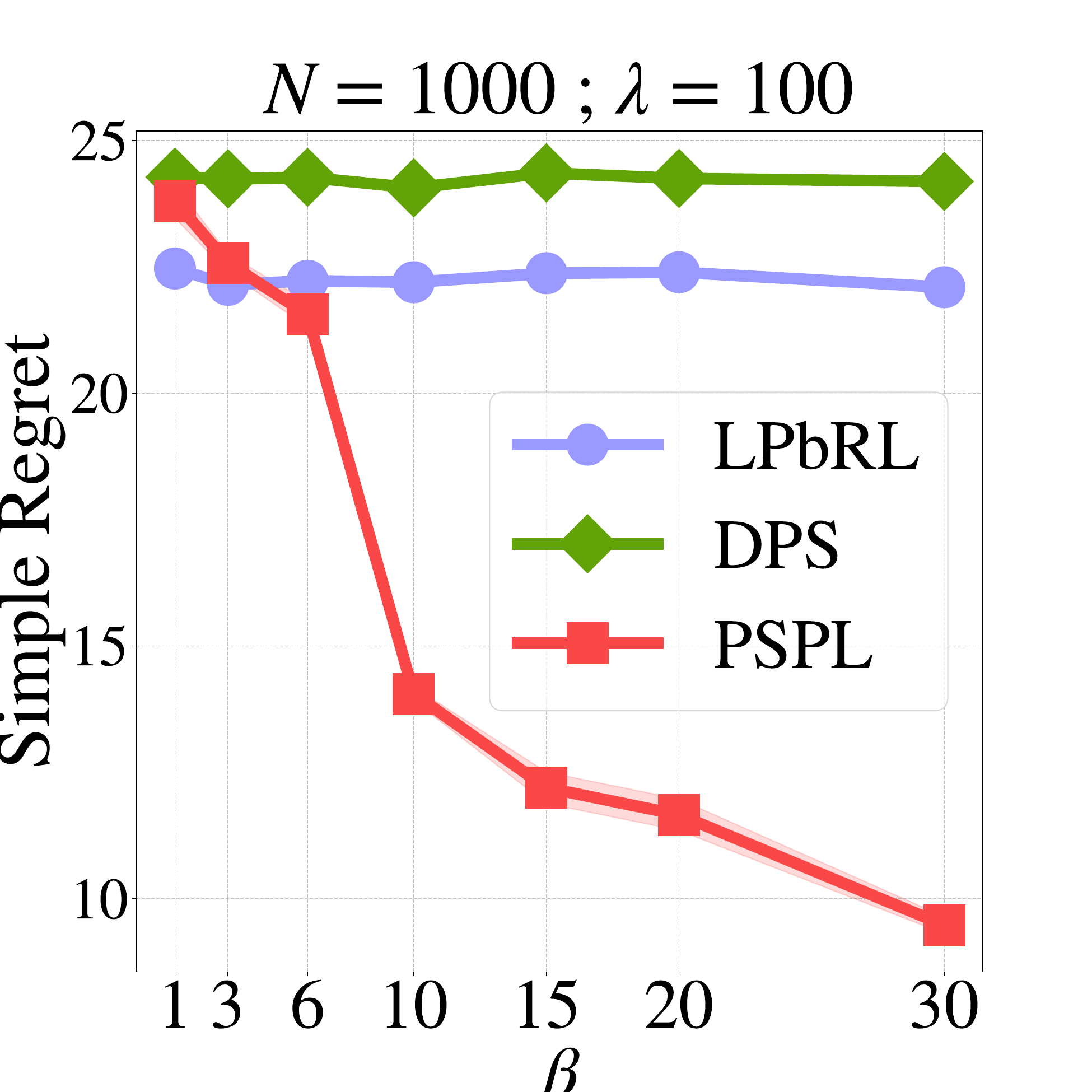}
    } 
    {
        \includegraphics[height=0.33\textwidth, width=0.42\textwidth]{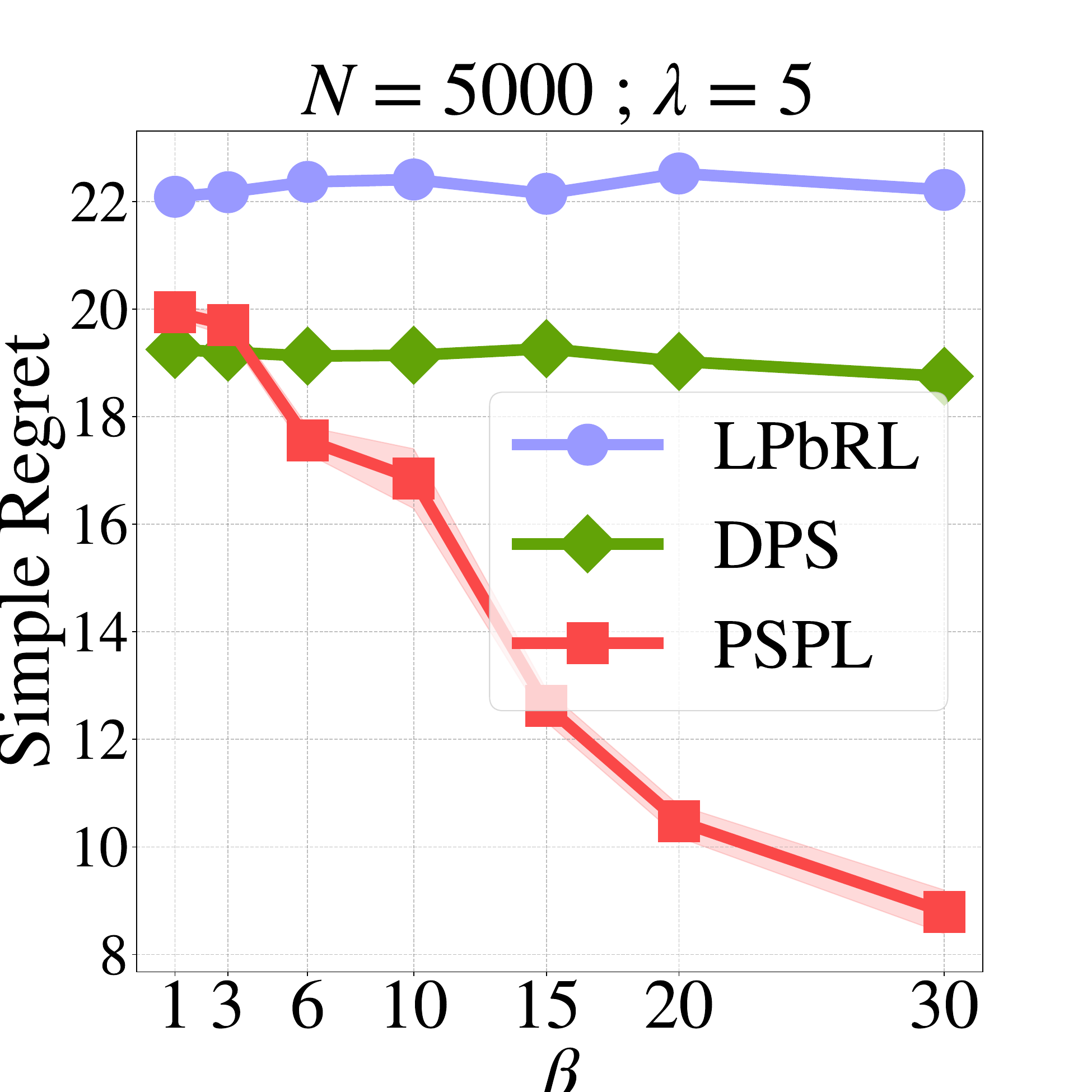}
    } 
    \newline
    \begin{center} \vspace{-0.60cm}
        (b) Varying $\beta$ for fixed $\lambda$ and $N$. 
    \end{center}
    {
        \includegraphics[height=0.33\textwidth, width=0.42\textwidth]{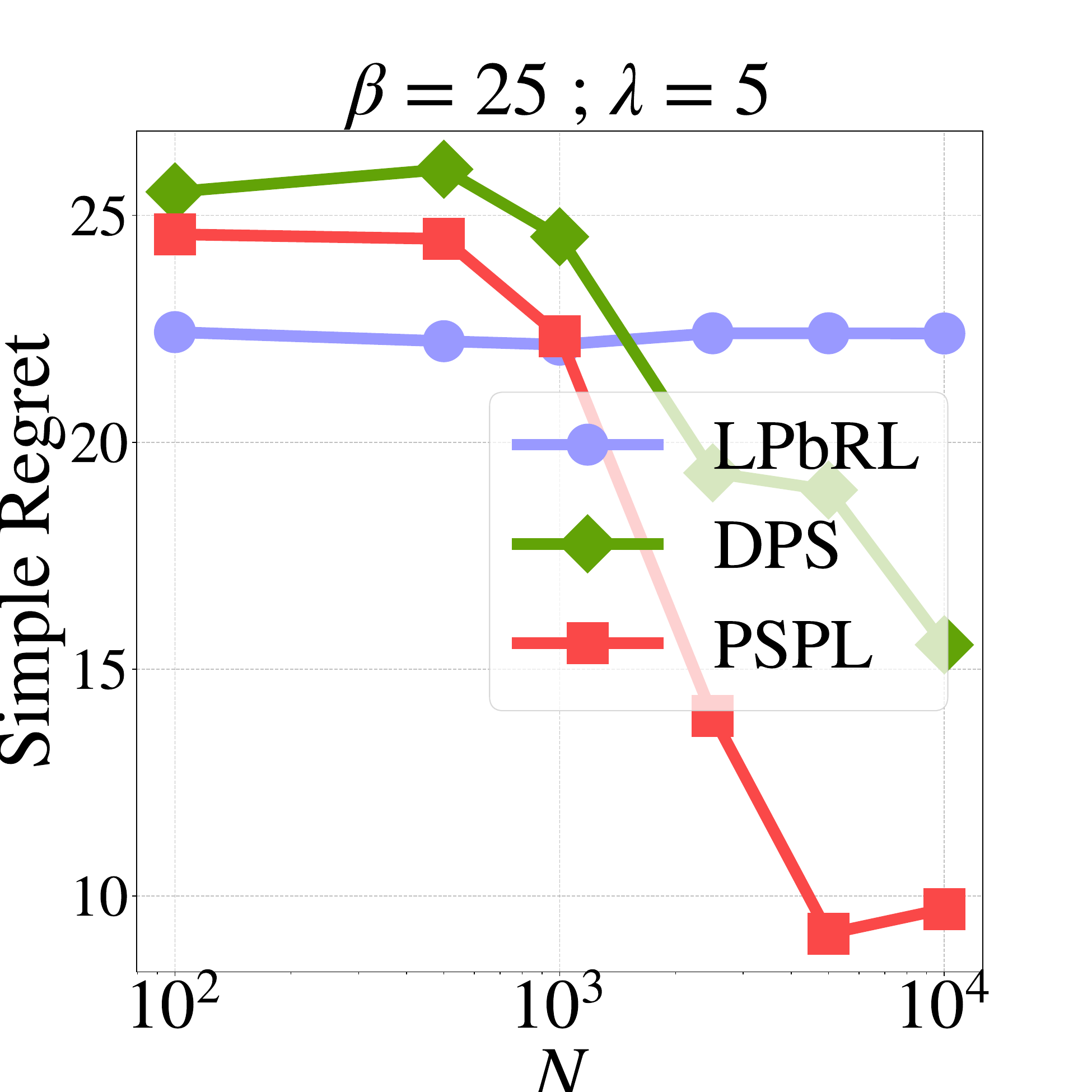}
    } 
    {
        \includegraphics[height=0.33\textwidth, width=0.42\textwidth]{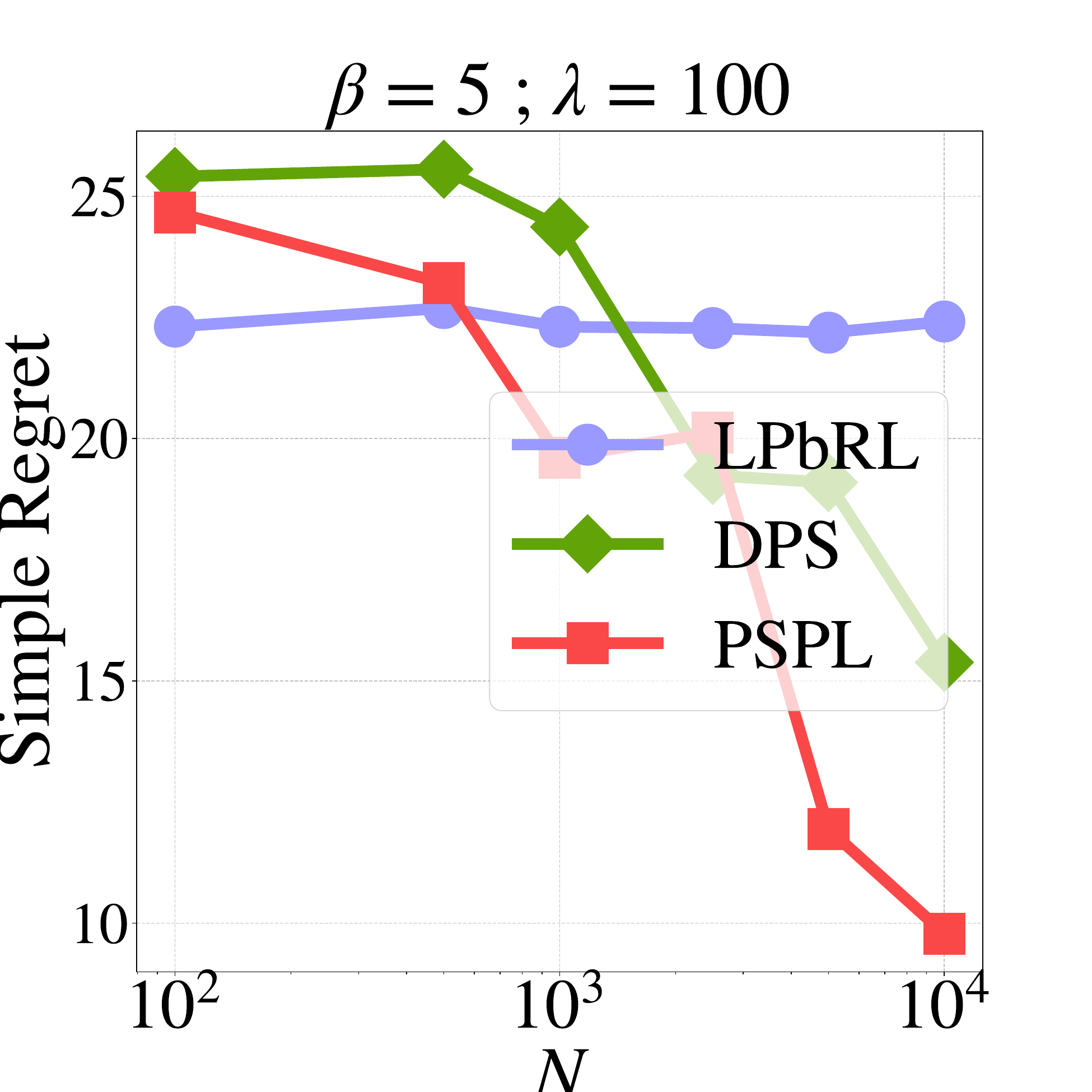}
    } 
    \newline
    \begin{center} \vspace{-0.60cm}
        (c) Varying $N$ for fixed $\beta$ and $\lambda$. 
    \end{center}
\end{tcolorbox} \hfill
\begin{tcolorbox}[width=.495\textwidth, nobeforeafter, fonttitle=\fontfamily{lmss}\selectfont, coltitle = black, title=RiverSwim, halign title=flush center,  colback=backg_blue!5, colframe=teal!15, boxrule=2pt, grow to left by=-0.5mm, grow to left by=-0.5mm, left=-1pt, right=-1pt]
\centering
    {   
        \includegraphics[height=0.33\textwidth, width=0.42\textwidth]{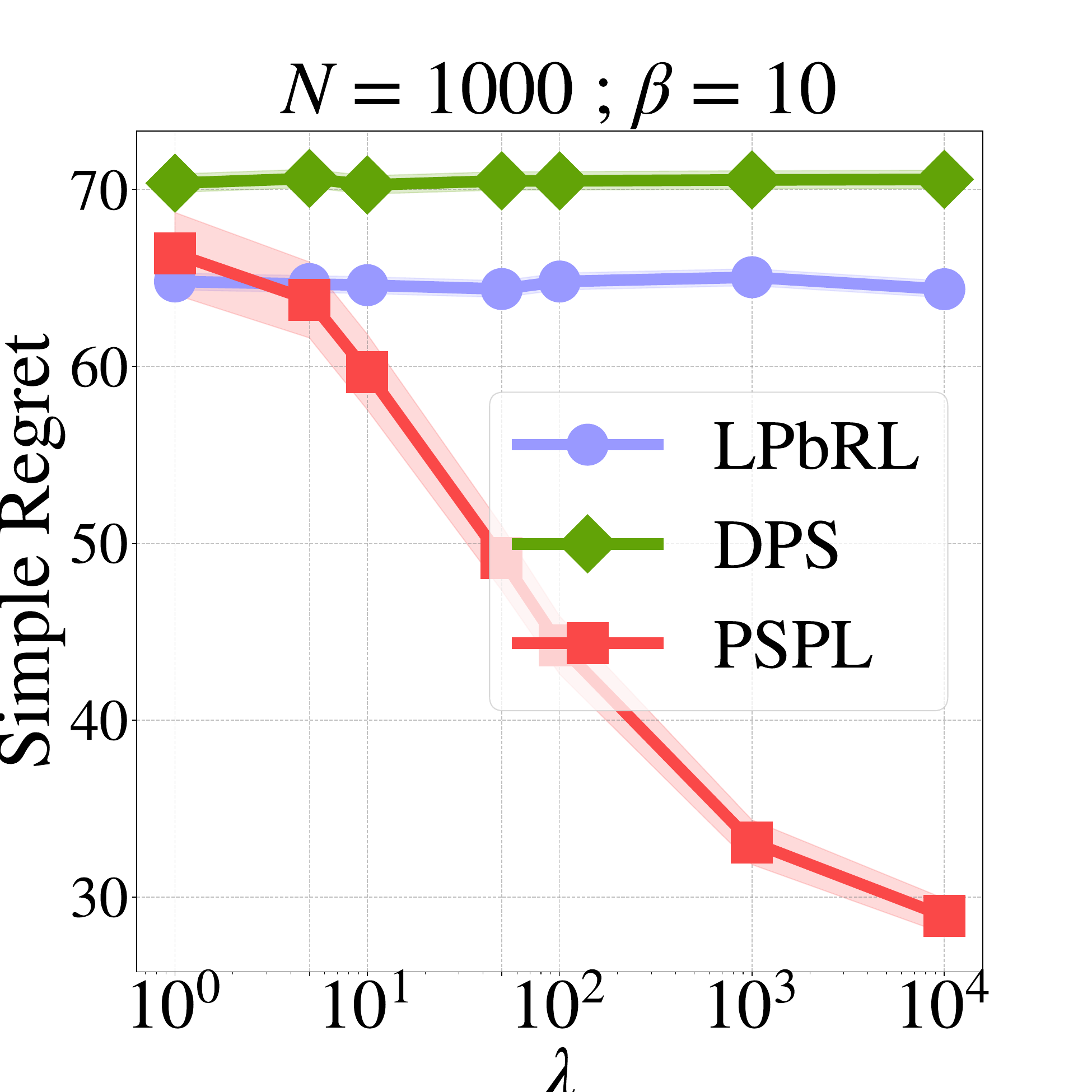}
    } 
    {
        \includegraphics[height=0.33\textwidth, width=0.42\textwidth]{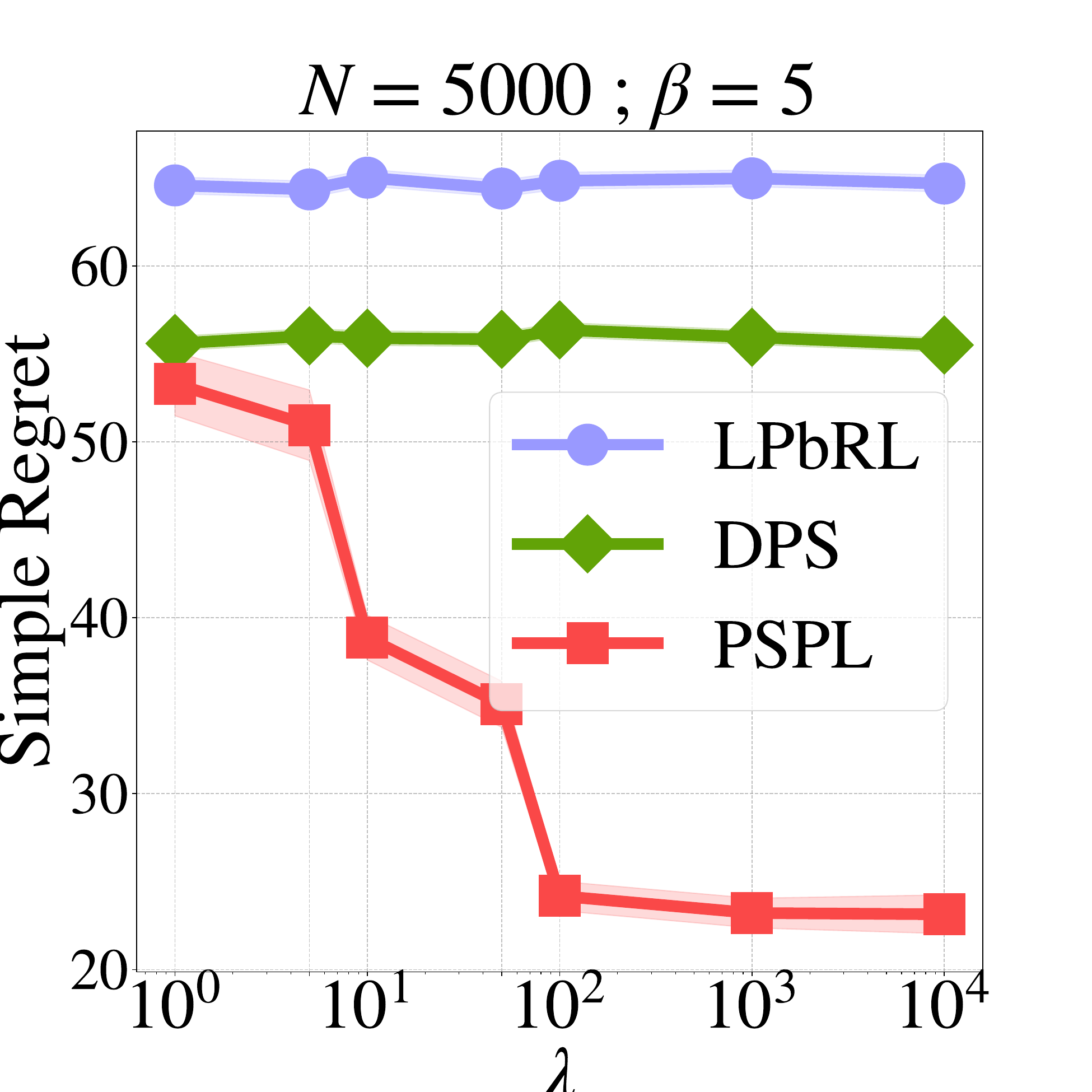}
    } 
    \newline
    \begin{center} \vspace{-0.60cm} 
        (a) Varying $\lambda$ for fixed $\beta$ and $N$. 
    \end{center}
    {
        \includegraphics[height=0.33\textwidth, width=0.42\textwidth]{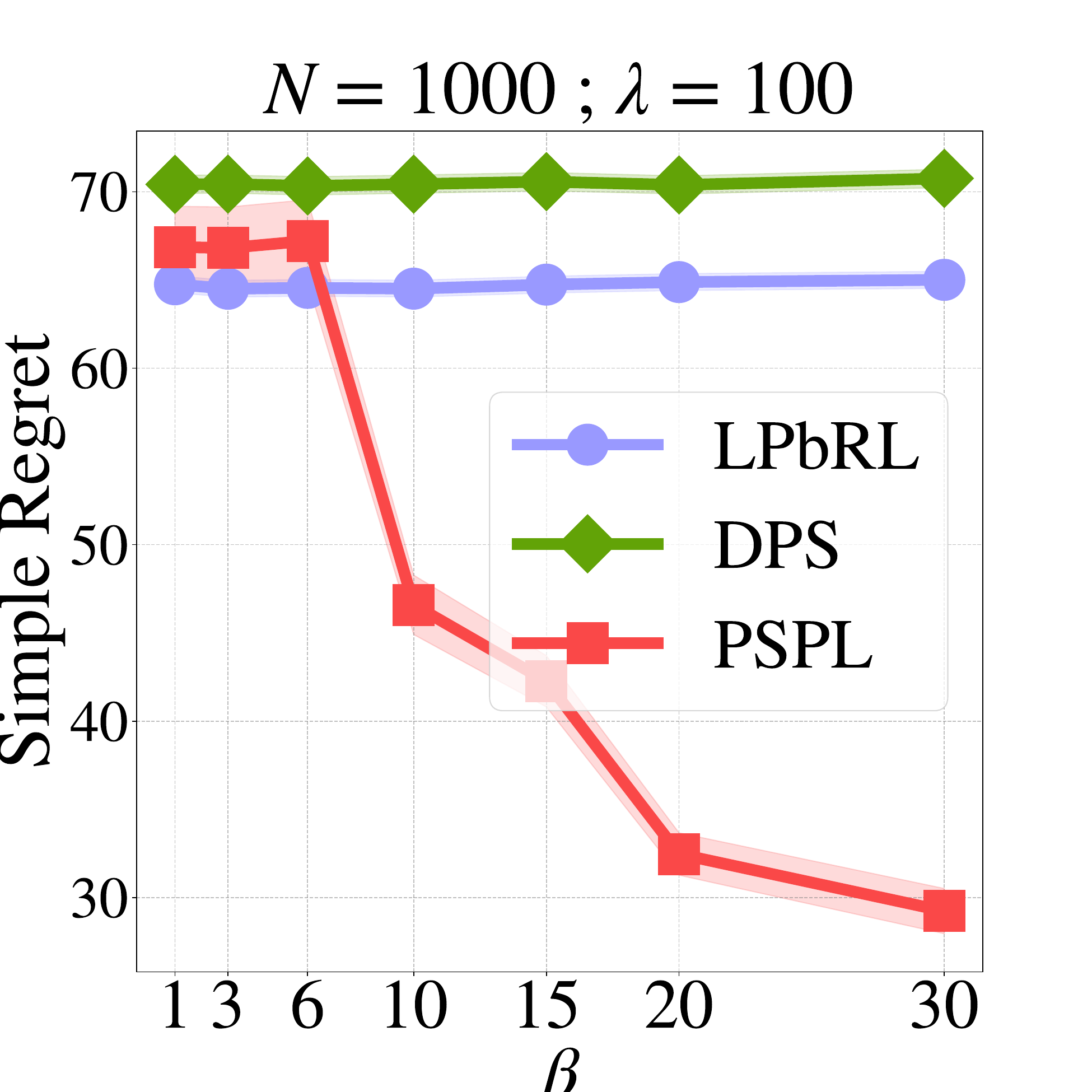}
    } 
    {
        \includegraphics[height=0.33\textwidth, width=0.42\textwidth]{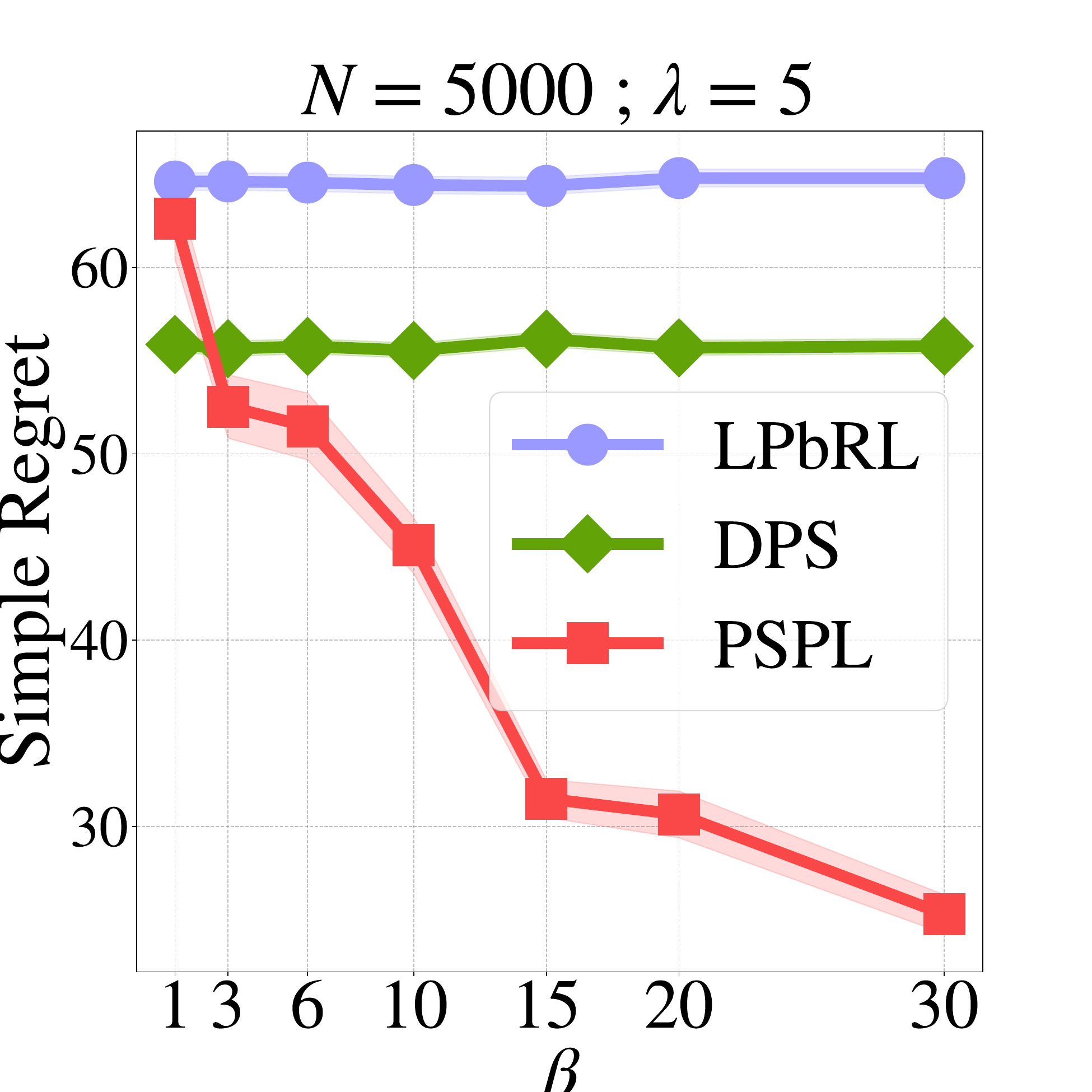}
    } 
    \newline
    \begin{center} \vspace{-0.60cm}
        (b) Varying $\beta$ for fixed $\lambda$ and $N$. 
    \end{center}
    {
        \includegraphics[height=0.33\textwidth, width=0.42\textwidth]{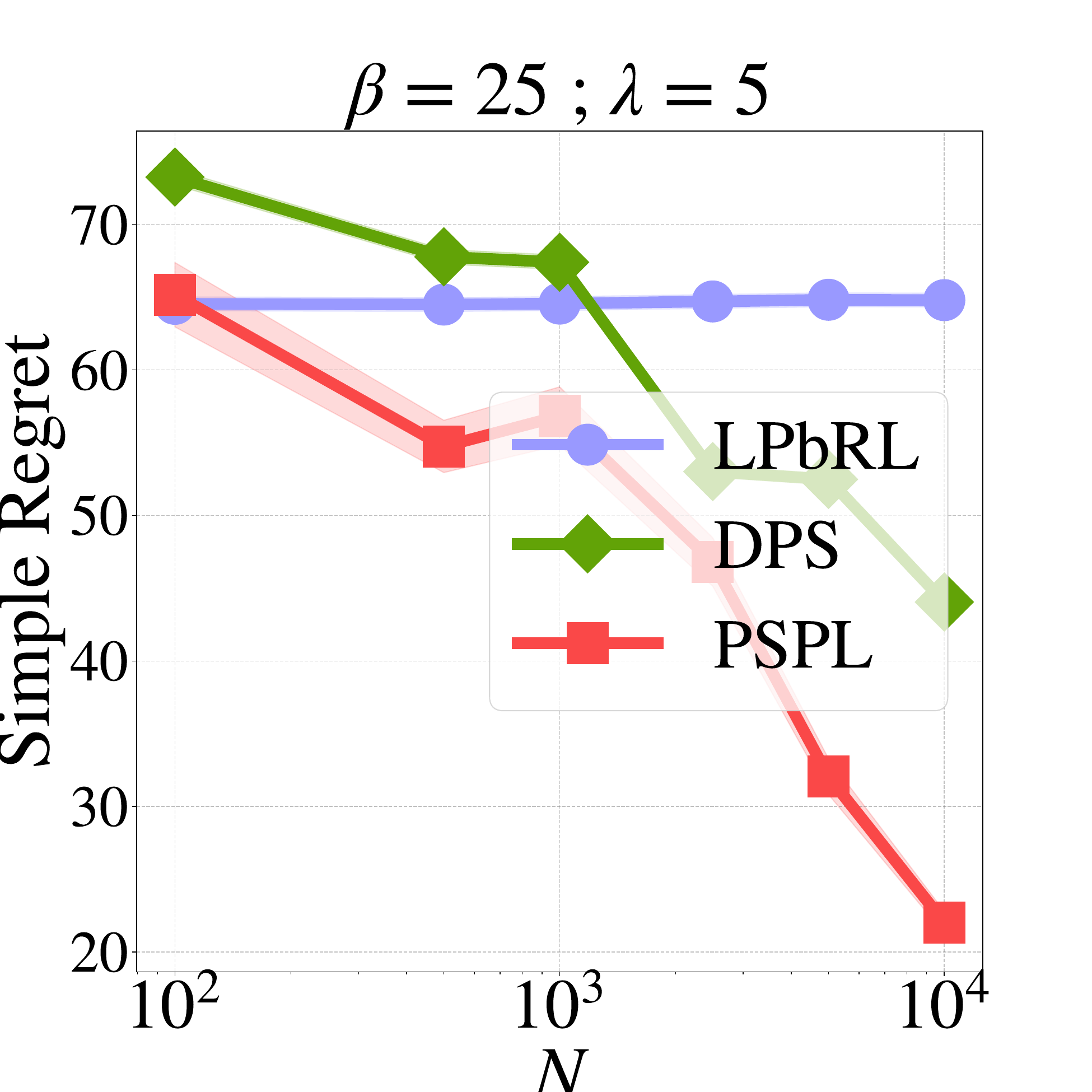}
    } 
    {
        \includegraphics[height=0.33\textwidth, width=0.42\textwidth]{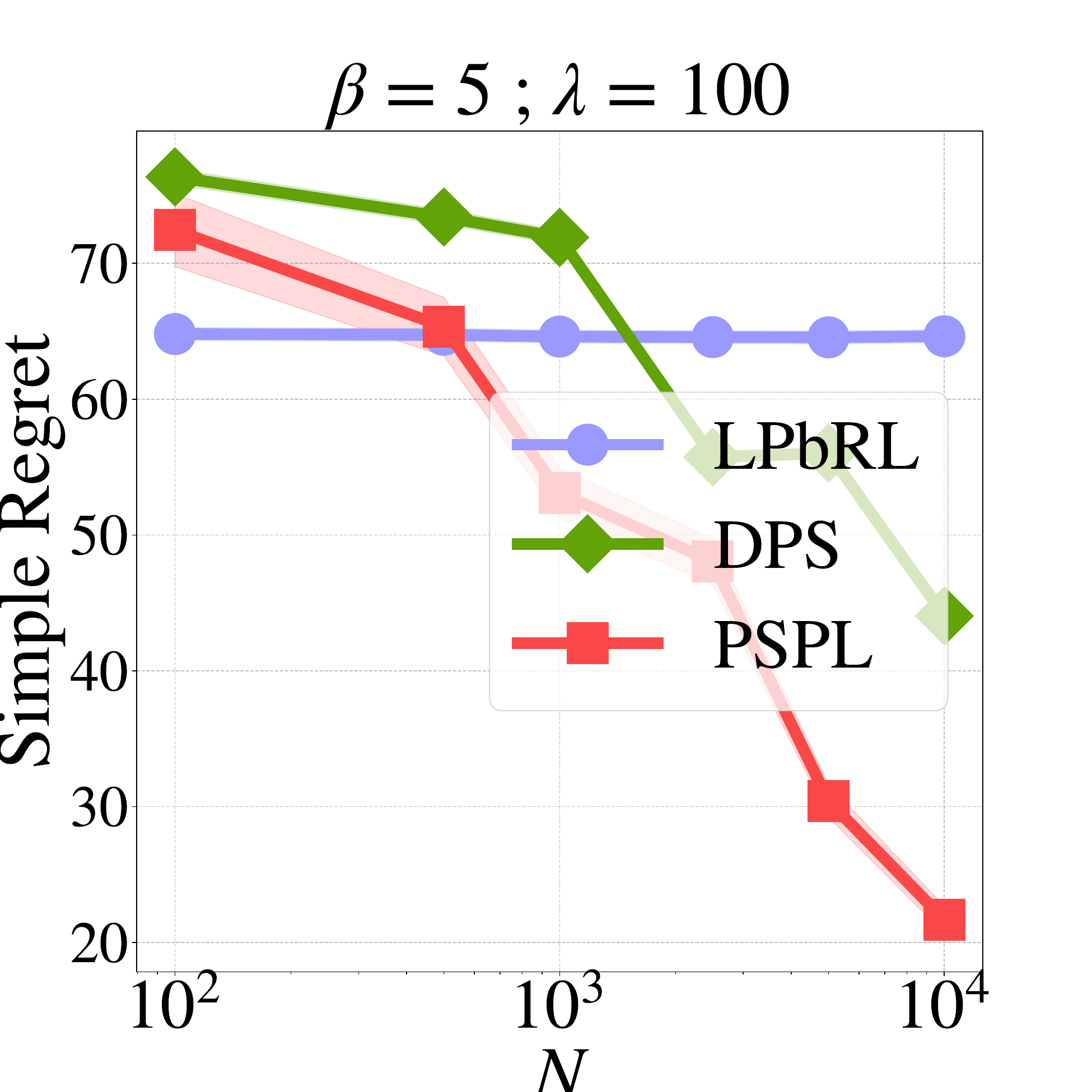}
    } 
    \newline
    \begin{center} \vspace{-0.60cm}
        (c) Varying $N$ for fixed $\beta$ and $\lambda$. 
    \end{center}
\end{tcolorbox}
\end{minipage}
\vspace{-0.3cm}
\caption{$\PSPL$ with varying $N$, $\beta$, and $\lambda$ in benchmark environments. Shaded region around mean line represents 1 standard deviation over 5 independent runs.}
\label{fig:main_ablation}
\end{figure*}

\begin{figure}[ht]
\centering
\begin{tcolorbox}[width=.45\textwidth, nobeforeafter, coltitle = black, fonttitle=\fontfamily{lmss}\selectfont, title= MountainCar, halign title=flush center, colback=backg_blue!5, colframe=lightorange!40, boxrule=2pt, grow to left by=-0.5mm, left=-15pt, right=-15pt]
    \centering
    {
        \includegraphics[height=0.32\textwidth, width=0.38\textwidth]{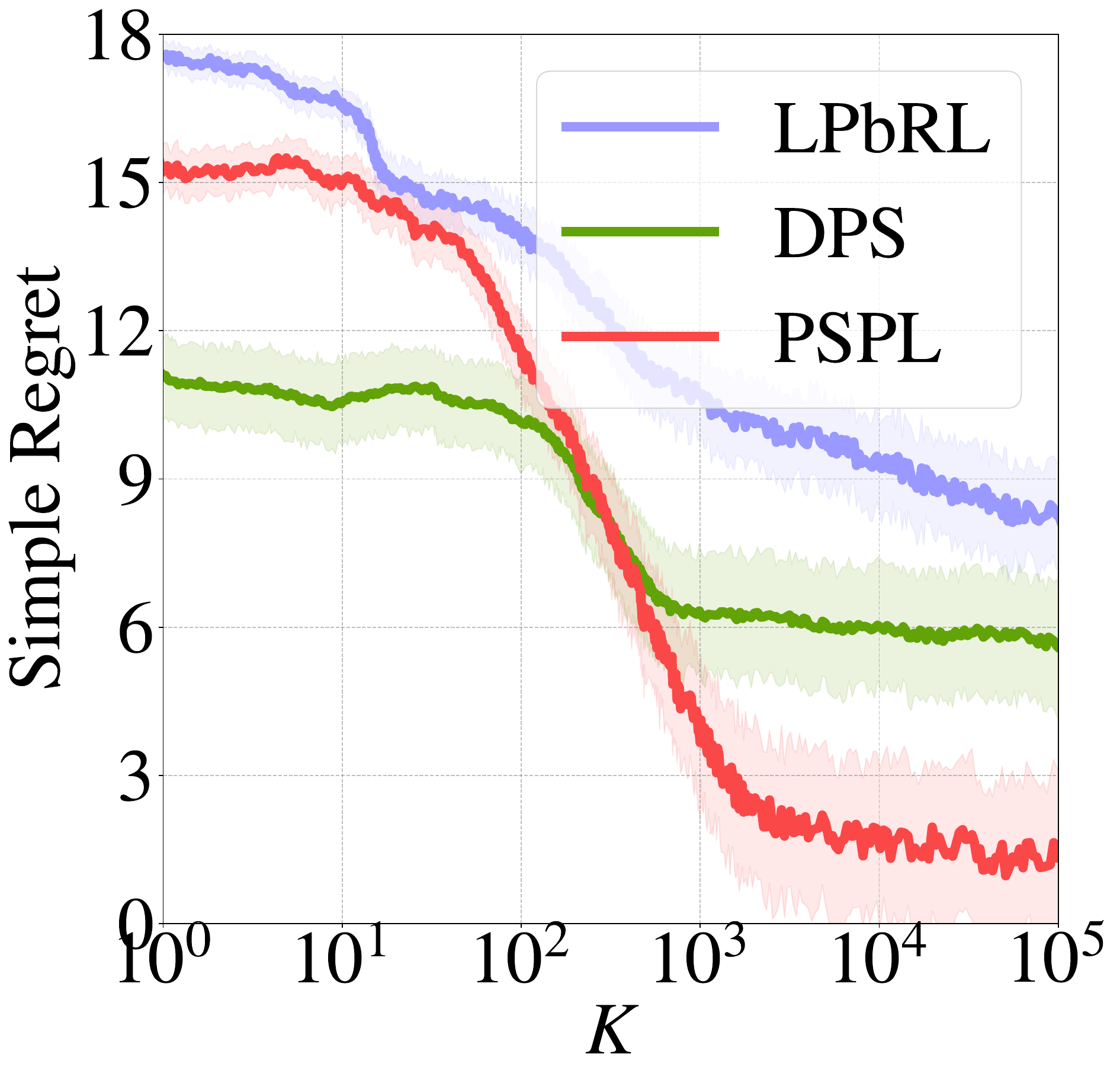}
    } 
    {
        \includegraphics[height=0.32\textwidth, width=0.38\textwidth]{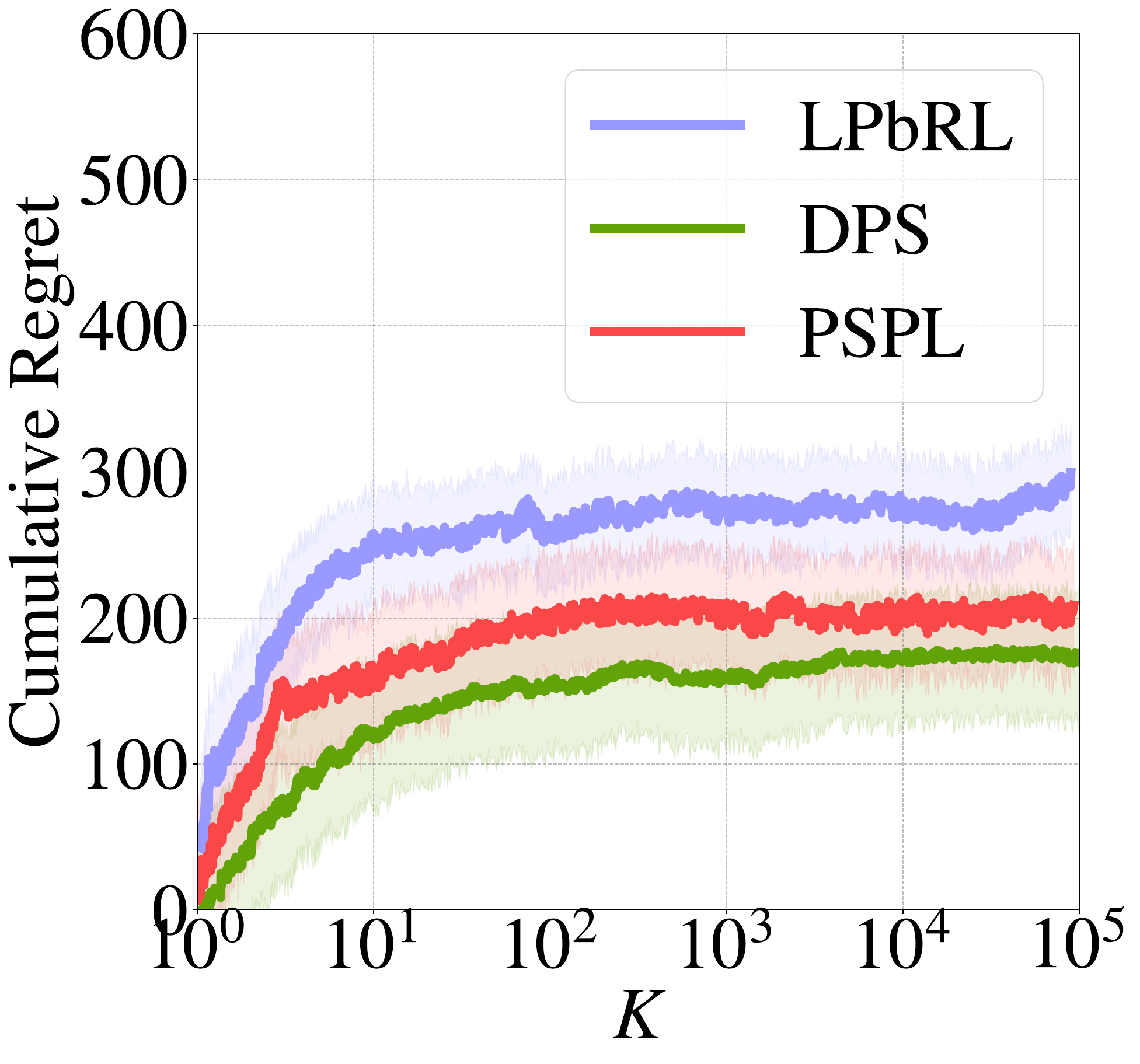}
    } 
\end{tcolorbox} 
\begin{tcolorbox}[width=.45\textwidth, nobeforeafter, coltitle = black, fonttitle=\fontfamily{lmss}\selectfont, title= RiverSwim, halign title=flush center, colback=backg_blue!5, colframe=brightblue!25, boxrule=2pt, grow to left by=-0.5mm, left=-15pt, right=-15pt]
    \centering
    {
        \includegraphics[height=0.32\textwidth, width=0.38\textwidth]{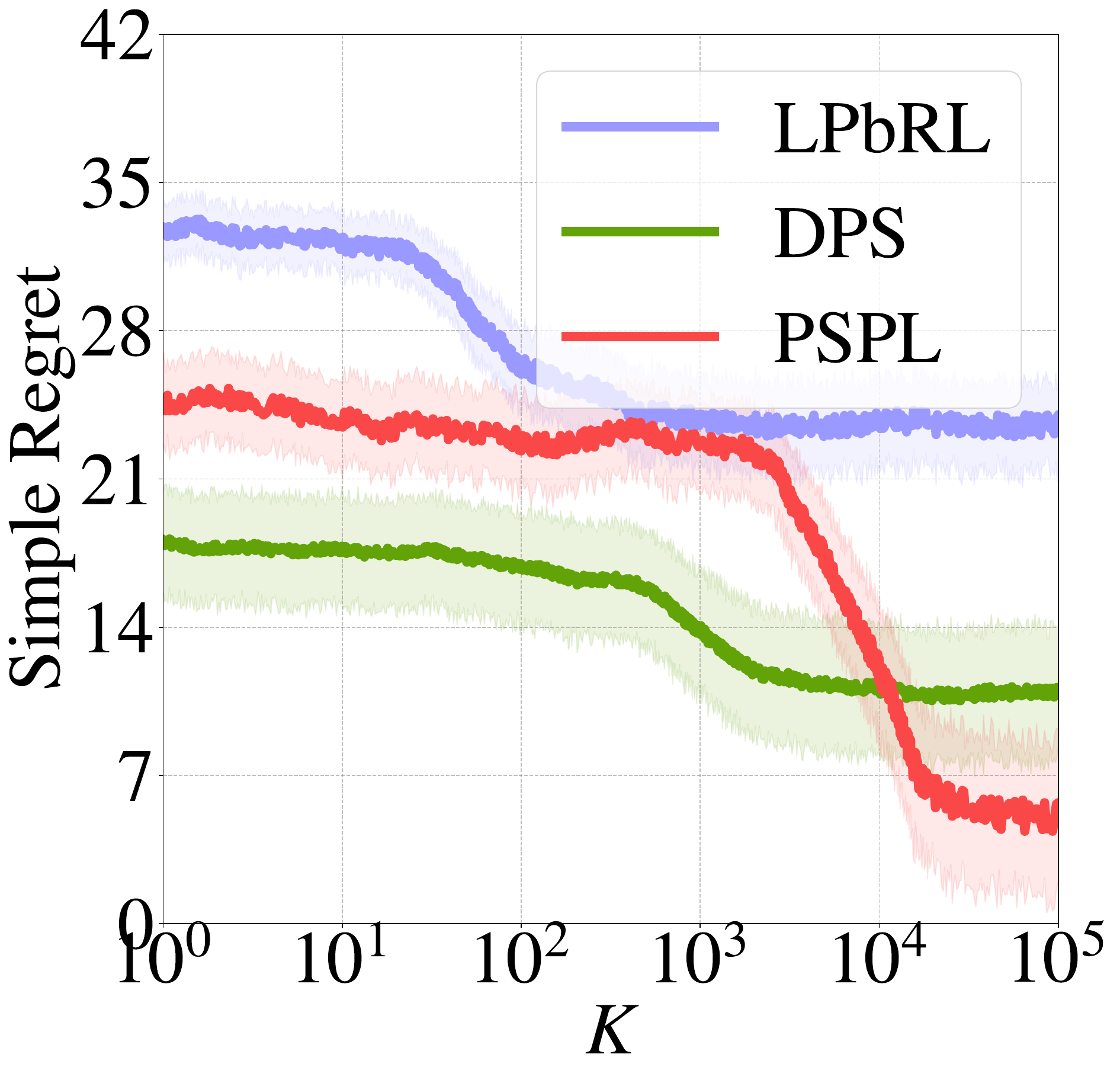}
    } 
    {
        \includegraphics[height=0.32\textwidth, width=0.38\textwidth]{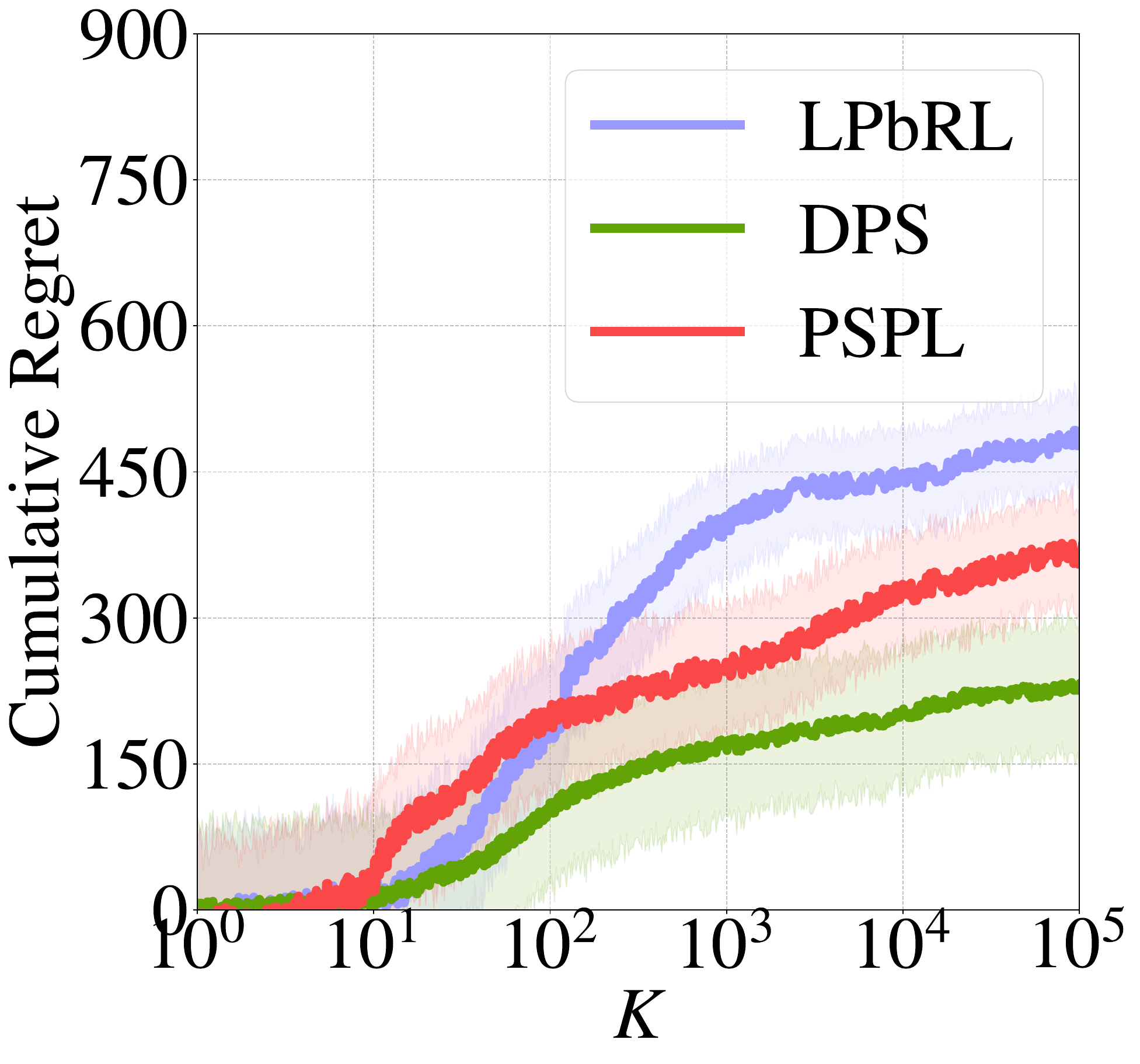}
    } 
\end{tcolorbox} 
\vspace{-0.2cm}
\caption{\small Simple and Cumulative Regret ($\div 10^{3}$) vs $K$ plots. $\PSPL$ is run with $\lambda=50,\beta=10,N=10^{3}$.}
\label{fig:simple-cum-regrets}
\end{figure}

\begin{figure}[ht]
\centering
\begin{tcolorbox}[width=.4\textwidth, nobeforeafter, coltitle = black, fonttitle=\fontfamily{lmss}\selectfont, title= Image Generation Tasks, halign title=flush center, colback=backg_blue!5, colframe=violet!25, boxrule=2pt, grow to left by=-0.2mm, left=-15pt, right=-15pt]
    \centering
    {
        \includegraphics[height=0.85\textwidth, width=0.85\textwidth]{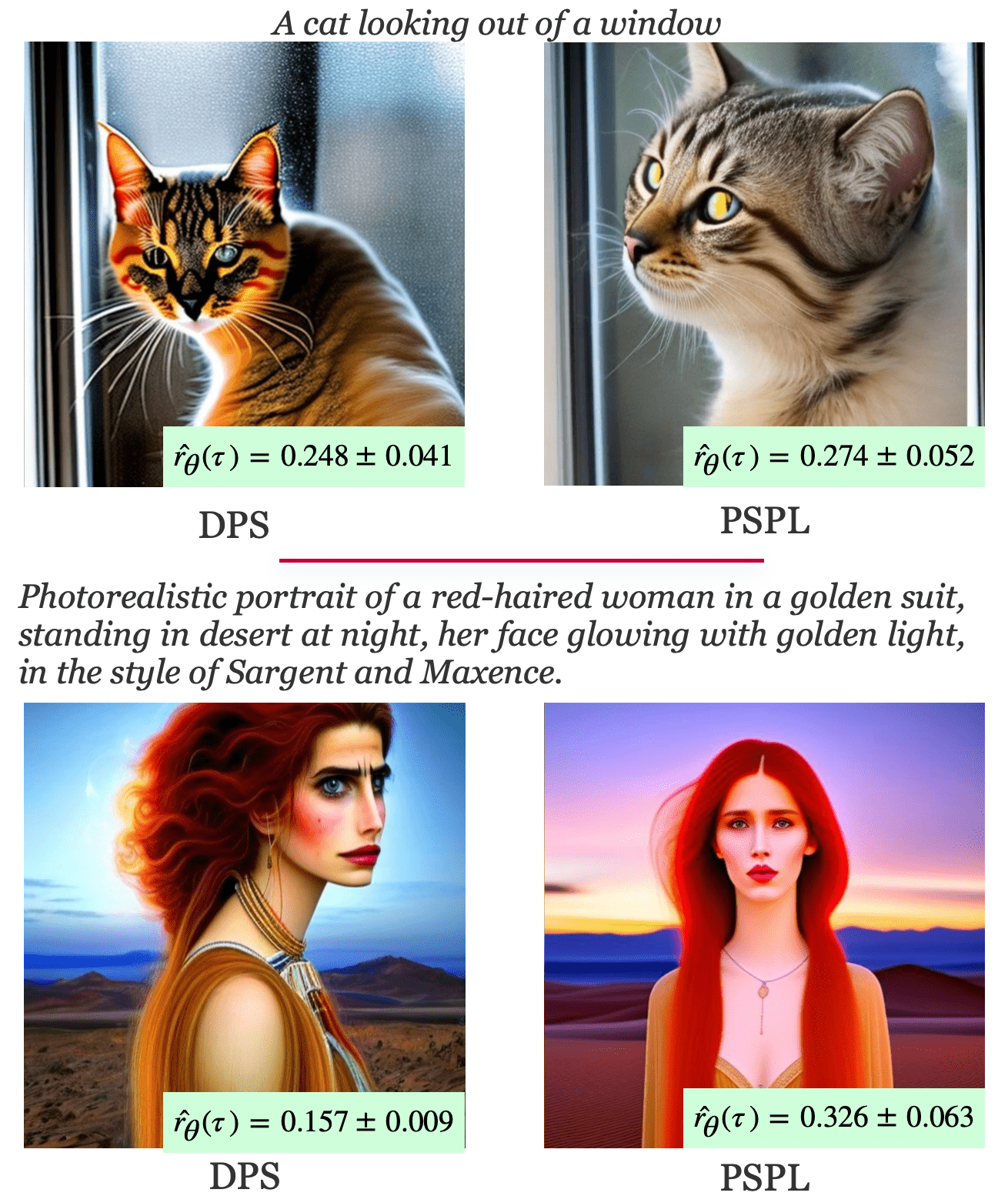}
    } 
\end{tcolorbox} 
\caption{\small Sample image generations along with final image reward $\hat{r}_{\theta}(\cdot)$ over 5 independent runs.}
\label{fig:pickapic_example}
\end{figure}
\vspace{-0.2cm}
\section{Conclusion}
\label{sec:pspl_conclusion}
\vspace{-0.2cm}

We have proposed the $\PSPL$ algorithm that incorporates offline preferences to learn an MDP with unknown rewards and transitions. We provide the first analysis to bridge the gap between offline preferences and online learning. We also introduced Bootstrapped $\PSPL$, a computationally tractable extension, and our empirical results demonstrate superior performance of $\PSPL$ in challenging benchmarks as compared to baselines. Overall, we provide a promising foundation for further development in PbRL and RLHF space.

\clearpage
\bibliography{references}
\bibliographystyle{plainnat}

\section*{Checklist}



\begin{enumerate}

  \item For all models and algorithms presented, check if you include:
  \begin{enumerate}
    \item A clear description of the mathematical setting, assumptions, algorithm, and/or model. Yes, please see Section \ref{sec:preliminaries_pspl}.
    \item An analysis of the properties and complexity (time, space, sample size) of any algorithm. Yes, please see Section \ref{sec:analysis}.
    \item (Optional) Anonymized source code, with specification of all dependencies, including external libraries. No, code will be released later.
  \end{enumerate}

  \item For any theoretical claim, check if you include:
  \begin{enumerate}
    \item Statements of the full set of assumptions of all theoretical results. Yes, please see Sections \ref{sec:algorithm} and \ref{sec:analysis}, and Appendix \ref{sec:appendix}.
    \item Complete proofs of all theoretical results. Yes, please see Appendix \ref{sec:appendix}.
    \item Clear explanations of any assumptions. Yes, please see Appendix \ref{sec:appendix}.
  \end{enumerate}

  \item For all figures and tables that present empirical results, check if you include:
  \begin{enumerate}
    \item The code, data, and instructions needed to reproduce the main experimental results (either in the supplemental material or as a URL). Yes, code will be released later. Please see Appendix \ref{sec:appendix} for instructions and experimental setup.
    \item All the training details (e.g., data splits, hyperparameters, how they were chosen). Yes, please see Appendix \ref{sec:appendix}.
    \item A clear definition of the specific measure or statistics and error bars (e.g., with respect to the random seed after running experiments multiple times). Yes, please see Section \ref{sec:experiments} and Appendix \ref{sec:appendix}.
    \item A description of the computing infrastructure used. (e.g., type of GPUs, internal cluster, or cloud provider). Yes, please see Appendix \ref{sec:appendix}.
  \end{enumerate}

  \item If you are using existing assets (e.g., code, data, models) or curating/releasing new assets, check if you include:
  \begin{enumerate}
    \item Citations of the creator If your work uses existing assets. Yes, please see Appendix \ref{sec:appendix}.
    \item The license information of the assets, if applicable. Yes, please see Appendix \ref{sec:appendix}.
    \item New assets either in the supplemental material or as a URL, if applicable. Not Applicable.
    \item Information about consent from data providers/curators. Yes, please see Appendix \ref{sec:appendix}.
    \item Discussion of sensible content if applicable, e.g., personally identifiable information or offensive content. Not Applicable.
  \end{enumerate}

  \item If you used crowdsourcing or conducted research with human subjects, check if you include:
  \begin{enumerate}
    \item The full text of instructions given to participants and screenshots. Not Applicable.
    \item Descriptions of potential participant risks, with links to Institutional Review Board (IRB) approvals if applicable. Not Applicable.
    \item The estimated hourly wage paid to participants and the total amount spent on participant compensation. Not Applicable.
  \end{enumerate}

\end{enumerate}

\clearpage
\appendix
\thispagestyle{empty}

\onecolumn
\aistatstitle{Supplementary Material}

\section{Appendix}
\label{sec:appendix}



\paragraph{Network Architecture for DPO \cite{rafailov2024direct} and IPO \cite{ipo}.} We use a lightweight shared‐trunk MLP with two hidden layers of 64 ReLU units each (Xavier initialization \cite{pmlr-v9-glorot10a}), followed by a policy action head, outputting action logits for {\fontfamily{lmss}\selectfont MountainCar} \citep{moore1990efficient} and {\fontfamily{lmss}\selectfont RiverSwim} \citep{STREHL20081309} environments. We train with Adam (learning rate \(3\times10^{-4}\) \cite{kingma2014adam}), and apply dropout (0.1) on the trunk to regularize. This architecture balances expressivity and sample efficiency on these control tasks.

DPO \cite{rafailov2024direct} is an alternative approach to the RL paradigm, which avoids the training of a reward model altogether. The loss that DPO optimizes to obtain the optimal policy, given an empirical dataset $\Dcal = \{y_{w}, y_{l}\}$ of the winning (preferred) $y_{w}$ and losing (not preferred) $y_{l}$ trajectories, as a function of the reference policy $\pi_{\text{ref}}$ and regularization strength $\tau \in \Rbb$, is given by:

\begin{align*}
    \pi^{\star}_{\text{DPO}} = \argmin_{\pi} \quad \Ebb_{(y_{w}, y_{l}) \sim \Dcal} \left[ - \log \sigma \left( \tau \log \left( \frac{\pi(y_{w})}{\pi(y_{l})} \right)  - \tau \log \left( \frac{\pi_{\text{ref}}(y_{w})}{\pi_{\text{ref}}(y_{l})} \right) \right)  \right]
\end{align*}

, where $\sigma(\cdot)$ denotes the sigmoid function.

IPO is an instance of the $\Psi\text{PO}$ algorithm \cite{ipo} . The loss function that IPO optimizes is given by,

\begin{align*}
    \pi^{\star}_{\text{IPO}} = \argmin_{\pi} \quad \Ebb_{(y_{w}, y_{l}) \sim \Dcal} \left[ h_{\pi}(y_{w}, y_{l}) - \frac{1}{2 \tau}  \right]^{2} \; \text{where} \, , \; h_{\pi}(y, y') := \log \left( \frac{\pi(y) \pi_{\text{ref}}(y')}{\pi(y') \pi_{\text{ref}}(y)} \right) \, .
\end{align*}

\subsection{Prior-dependent analysis}
\label{appendix:prior_dependent_analysis}

\begin{lemma}[Monotone Contraction]
\label{lemma:contractionPS}
$\PSPL$ is monotone with respect to optimality of candidate policies $\pi_{k}^{(0)}$ and $\pi_{k}^{(1)}$ at any episode $k \in [K]$ i.e. $\Pr(\pi_{k}^{(i)} \neq \pi^{\star}) \leq \Pr(\pi_{1}^{(i)} \neq \pi^{\star})$ for $i \in \{0,1\}$ $\forAll k \geq 1$.
\end{lemma}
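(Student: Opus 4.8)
The plan is to exploit the \emph{probability matching} property of posterior sampling together with a Doob martingale argument, without any explicit concentration computation. Throughout, let $\Fcal_{k} := \sigma(\Dcal_{k})$ denote the $\sigma$-field generated by all data observed before episode $k$, so that $\{\Fcal_{k}\}_{k\ge 1}$ is increasing because $\Dcal_{k+1}$ appends the new episode's tuple $(\tau_{k}^{(0)},\tau_{k}^{(1)},Y_{k})$. In the Bayesian model the true parameters $(\theta,\eta)$ are drawn from the prior and generate the data, with $\pi^{\star}=\pi^{\star}(\theta,\eta)$ the induced deterministic optimal policy. Since $\Scal,\Acal$ are finite and $H$ is fixed, the class of deterministic Markov policies is finite, so I can work with the posterior-induced distribution over policies $q_{k}(\pi) := \Pr(\pi^{\star}=\pi \mid \Fcal_{k})$, which is supported on finitely many $\pi$.

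\textbf{Probability matching.} First I would establish that, conditioned on $\Fcal_{k}$, the realized optimal policy $\pi^{\star}$ and the candidate $\pi_{k}^{(i)}$ are i.i.d.\ draws from $q_{k}$. By definition of a posterior, $\nu_{k}\times\chi_{k}$ is the conditional law of $(\theta,\eta)$ given $\Dcal_{k}$, hence $\pi^{\star}$ is distributed as $q_{k}$; the sample $(\hat{\theta}_{k}^{(i)},\hat{\eta}_{k}^{(i)})$ defining $\pi_{k}^{(i)}$ is drawn from the same posterior using fresh randomness that is independent of $(\theta,\eta)$ given $\Dcal_{k}$, so $\pi_{k}^{(i)}$ is an independent copy with the same law. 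Because a collision of two independent draws from $q_{k}$ has probability $\sum_{\pi}q_{k}(\pi)^{2}$, this gives
\begin{equation*}
\Pr\!\left(\pi_{k}^{(i)} \neq \pi^{\star} \,\middle|\, \Fcal_{k}\right) = 1 - \sum_{\pi} q_{k}(\pi)^{2} .
\end{equation*}

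\textbf{Submartingale and assembly.} For each fixed $\pi$, the sequence $q_{k}(\pi)=\Pr(\{\pi^{\star}=\pi\}\mid\Fcal_{k})$ is a Doob martingale with respect to the increasing filtration $\{\Fcal_{k}\}$. Since $x\mapsto x^{2}$ is convex, the conditional Jensen inequality yields $\mathbb{E}[q_{k+1}(\pi)^{2}\mid\Fcal_{k}]\ge q_{k}(\pi)^{2}$, so $q_{k}(\pi)^{2}$ is a submartingale and $\mathbb{E}[q_{k}(\pi)^{2}]$ is non-decreasing in $k$. Taking expectations in the matching identity and summing the finitely many non-decreasing terms,
\begin{equation*}
\Pr\!\left(\pi_{k}^{(i)} \neq \pi^{\star}\right) = 1 - \sum_{\pi}\mathbb{E}\!\left[q_{k}(\pi)^{2}\right]
\end{equation*}
is non-increasing in $k$, which specializes to $\Pr(\pi_{k}^{(i)}\neq\pi^{\star})\le\Pr(\pi_{1}^{(i)}\neq\pi^{\star})$ for each $i\in\{0,1\}$, as claimed.

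The main obstacle I anticipate is making the probability-matching step fully rigorous: one must be careful that the conditioning $\sigma$-field is exactly the \emph{observed-data} filtration $\sigma(\Dcal_{k})$ and \emph{excludes} the internal sampling noise used in episode $k$, so that the fresh posterior draw is genuinely conditionally independent of the realized $(\theta,\eta)$ and has the posterior as its conditional law. Once that coupling is pinned down, the convexity/submartingale argument and the interchange of expectation with the finite sum over policies are entirely routine.
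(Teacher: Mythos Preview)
Your proposal is correct and is essentially the same argument as the paper's: both use probability matching to write $\Pr(\pi_{k}^{(i)}\neq\pi^{\star}\mid\Dcal_{k})$ as a function of the posterior probabilities $q_{k}(\pi)$, invoke the Doob martingale property $\Ebb[q_{k}(\pi)\mid\Dcal_{1}]=q_{1}(\pi)$, and apply Jensen's inequality. The only cosmetic difference is that the paper packages the computation via the concave map $f(x)=x(1-x)$ and a single Jensen step from $\Dcal_{k}$ down to $\Dcal_{1}$, whereas you use the convex map $x\mapsto x^{2}$ to get a submartingale and iterate; since $\sum_{\pi}f(q_{k}(\pi))=1-\sum_{\pi}q_{k}(\pi)^{2}$, these are the same inequality.
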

\begin{proof}
    Define $f(x) = x(1-x)$. $f$ is a concave function. We have for any $i \in \{0,1\}$, 
    \begin{align*}
        \Pr(\pi_{k}^{(i)} \neq \pi^{\star} ) &= \Ebb \left[\Pr \left(\pi_{k}^{(i)} \neq \pi^{\star} \Given \Dcal_{k} \right) \right]= \Ebb \left[ \sum_{\pi \in \Pi} \Pr(\pi_{k}^{(i)} = \pi, \pi^{\star} \neq \pi \Given \Dcal_{k})  \right]\\
        &= \Ebb \left[ \sum_{\pi\in\Pi} f \left(\Pr(\pi^{\star} = \pi \Given \Dcal_{k}) \right)  \right] \\
        &=  \Ebb \left[\sum_{\pi\in\Pi} \E{\Dcal_{1}} \left[f \left(\Pr \left(\pi^{\star} = \pi \given \Dcal_{k} \right) \right) \right] \right] \leq \Ebb \left[ \sum_{\pi\in\Pi} f \left(\E{\Dcal_{1}} \left[\Pr \left(\pi^{\star} = \pi \given \Dcal_{k} \right) \right] \right)\right] \\
        &= \Ebb\left[\sum_{\pi\in\Pi} f \left( \Pr \left(\pi^{\star} = \pi \given \Dcal_{1} \right) \right) \right]= \Ebb\left[ \sum_{\pi\in\Pi} \Pr(\pi_{1}^{(i)} = \pi, \pi^{\star} \neq \pi \given \Dcal_{1})  \right]\\
        &= \Pr(\pi_{1}^{(i)} \neq \pi^{\star}) \\
        \implies &  \Pr(\pi_{k}^{(i)} \neq \pi^{\star} ) \leq \Pr(\pi_{1}^{(i)} \neq \pi^{\star})
    \end{align*}
\end{proof}

\begin{lemma}
\label{lemma:empiricalcountsbound}
For any policy $\pi$, let an event $ E_{1} := \left\{  k, s, a: \, n_{k}(s, a) < \frac{1}{2} \sum_{j<k} w_{j} (s, a) - H\ln \left( \frac{SAH}{\delta'} \right) \right\}$ for some $\delta' \in (0,1)$, where $w_{h,j} (s, a)$ is the probability of visiting the $(s, a)$ pair at time $h$ of episode $j$ under the chosen policy, and $w_{j}(s,a) = \sum_{h} w_{h,j} (s, a)$ is the sum of the probabilities of visiting the $(s, a)$ pair in episode $j$. Then, $\Pr(E_{1}) \leq \delta' SAH = \delta$, where we set $\delta' = \frac{\delta}{SAH}$.
\end{lemma}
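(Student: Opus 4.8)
The plan is to establish a uniform-in-$k$ lower-tail concentration bound for the visit counts through an exponential supermartingale, and then union bound over state--action--time triples. First I would decompose the empirical count into its per-timestep contributions, writing $n_k(s,a) = \sum_{j<k}\sum_{h=1}^{H} X_{j,h}$ with $X_{j,h} := \indicator\{s_{j,h}=s,\, a_{j,h}=a\}$. The key structural observation is that in episodic learning the policy $\pi_j$ played in episode $j$ is measurable with respect to the history $\Fcal_{j-1}$ collected before episode $j$, so that $\Ebb[X_{j,h}\mid\Fcal_{j-1}] = w_{h,j}(s,a)$; that is, the marginal visitation probability is exactly the predictable conditional mean of the indicator. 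This is what lets the (random, $\Fcal_{j-1}$-measurable) quantities $w_{h,j}(s,a)$ serve as the compensator in a martingale argument.

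Next, for a fixed triple $(s,a,h)$ I would build the nonnegative supermartingale
\[
M_t = \exp\!\Big(-\lambda \sum_{j\le t} X_{j,h} + (1-e^{-\lambda})\sum_{j\le t} w_{h,j}(s,a)\Big),
\]
using the elementary bound $\Ebb[e^{-\lambda X_{j,h}}\mid\Fcal_{j-1}] = 1 - w_{h,j}(s,a)(1-e^{-\lambda}) \le \exp(-w_{h,j}(s,a)(1-e^{-\lambda}))$ valid for any Bernoulli variable. Since $M_0=1$, Ville's maximal inequality gives $\Pr[\exists t:\, M_t \ge 1/\delta'] \le \delta'$; crucially this is uniform over all $t$, hence over all $k$, so no separate union bound over episodes is required. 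Rearranging the complementary event yields, with probability at least $1-\delta'$ and simultaneously for all $k$,
\[
\sum_{j<k} X_{j,h} \ge \frac{1-e^{-\lambda}}{\lambda}\sum_{j<k} w_{h,j}(s,a) - \frac{1}{\lambda}\ln(1/\delta').
\]
Choosing $\lambda$ to be the solution of $(1-e^{-\lambda})/\lambda = 1/2$ makes the leading coefficient exactly $1/2$; since this $\lambda>1$, the additive slack obeys $\tfrac{1}{\lambda}\ln(1/\delta') \le \ln(1/\delta') \le \ln(SAH/\delta')$.

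Finally I would union bound this event over all $SAH$ triples $(s,a,h)\in\Scal\times\Acal\times[H]$, each failing with probability at most $\delta'$, so the bad event has probability at most $\delta' SAH = \delta$ once $\delta' = \delta/(SAH)$. On the surviving event, summing the per-timestep bound over $h=1,\dots,H$ and using $\sum_h w_{h,j}(s,a) = w_j(s,a)$ gives $n_k(s,a) \ge \tfrac12\sum_{j<k} w_j(s,a) - H\ln(SAH/\delta')$ for every $(k,s,a)$ simultaneously, which is precisely the negation of $E_1$; hence $\Pr(E_1)\le\delta$.

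The main obstacle is the first step: correctly identifying the filtration so that the marginal visitation probabilities $w_{h,j}(s,a)$ are legitimate predictable compensators. One must be careful that $\pi_j$ is itself random (drawn from the posterior) yet $\Fcal_{j-1}$-measurable, and that for a fixed $h$ the across-episode indicators are genuinely conditionally Bernoulli with the claimed mean. Handling the intra-episode dependence is exactly why the decomposition is carried out per timestep rather than per episode, and this choice is also what produces the factor $H$ in both the additive slack and the $SAH$ union-bound count.
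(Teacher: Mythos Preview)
Your proposal is correct and follows essentially the same route as the paper: fix a triple $(s,a,h)$, use the episode-level filtration so that $w_{h,j}(s,a)$ is the predictable conditional mean of the Bernoulli indicator, obtain a uniform-in-$k$ lower-tail bound, then union bound over $SAH$ triples and sum over $h$. The only difference is cosmetic: where you explicitly construct the exponential supermartingale and invoke Ville's inequality, the paper simply cites Lemma~F.4 of \cite{dann2017unifying} (restated in the appendix as Lemma~\ref{visitation}) with $W=\ln(SAH/\delta')$, which is proved by exactly the argument you wrote out.
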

\begin{proof}

Consider a fixed $s \in \Scal, a \in \Acal, h \in [H]$, and denote the state and action visited in the $k^{\text{th}}$ episode at step $h$ as $s_{k,h}$ and $a_{k,h}$ respectively. We define $\mathcal F_k$ to be the sigma-field induced by the first $k-1$ episodes. Let $X_k$ be the indicator whether $s, a$ was observed in episode $k$ at time $h$. The probability $\Pr(s=s_{k,h}, a=a_{k,h} \given s_{k,1}, \pi_{k}^{(i)})$ (for $i \in \{0,1\}$) of whether $X_k = 1$ is $\mathcal F_k$-measurable and hence we can apply Lemma F.4 from \cite{dann2017unifying} with $W = \ln \frac{SAH}{\delta'}$ and obtain that $\Pr(E_{1}) \leq SAH \delta'$ after summing over all statements for $h \in [H]$ and applying the union bound over $s, a, h$.
\end{proof}

\begin{lemma}
\label{lemma:transition_est_error}
For all estimates $\hat{\theta}_{k}^{(i)}, \hat{\eta}_{k}^{(i)}$  ($i \in \{0,1\}$) of $\; \theta, \eta \;$ in any fixed episode $k \in [K]$, and for all $(s, a) \in \Scal \times \Acal$, with probability at least $1 - 2 \delta$ we have, 
$$
\sum_{s' \in \Scal} \left| \Pbb_{\theta, \eta}(s' \given s,a) - \Pbb_{\hat{\theta}_{k}^{(i)}, \hat{\eta}_{k}^{(i)}} (s' \given s,a) \right| \leq \sqrt{\frac{4S \ln \left( \frac{2KSA}{\delta}  \right)}{\sum_{j<k} w_{j} (s, a) - 2H\ln \left( \frac{SAH}{\delta} \right)}}
$$
\end{lemma}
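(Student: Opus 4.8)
The plan is to decouple the estimation error into a statistical-concentration piece and a count-conversion piece, mirroring the structure already set up by Lemma~\ref{lemma:empiricalcountsbound}. First I would observe that the transition law depends only on the dynamics parameter, so $\Pbb_{\theta,\eta}(\cdot\given s,a)=\Pbb_{\eta}(\cdot\given s,a)$ and likewise $\Pbb_{\hat{\theta}_k^{(i)},\hat{\eta}_k^{(i)}}(\cdot\given s,a)=\Pbb_{\hat{\eta}_k^{(i)}}(\cdot\given s,a)$; the reward samples $\hat{\theta}_k^{(i)}$ are irrelevant here, so it suffices to control $\Norm{\Pbb_{\hat{\eta}_k^{(i)}}(\cdot\given s,a)-\Pbb_{\eta}(\cdot\given s,a)}_1$ for each $(s,a)\in\Scal\times\Acal$ and each $i\in\{0,1\}$. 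Since $\hat{\eta}_k^{(i)}$ is drawn from the Dirichlet posterior whose mass concentrates on the empirical next-state distribution built from the $n_k(s,a)$ recorded visits to $(s,a)$, the estimate behaves (up to the prior pseudo-counts, which only sharpen matters) like an $n_k(s,a)$-sample categorical estimate over the $S$ possible next states.

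Next I would invoke a categorical $L_1$-deviation inequality of Bretagnolle--Huber--Carol / Weissman type: for a distribution on $S$ outcomes estimated from $m$ samples, the relevant next-state law obeys $\Pr(\Norm{\hat{P}-P}_1\ge\varepsilon)\le 2^{S}e^{-m\varepsilon^2/2}$. Setting the right-hand side to $\delta/(KSA)$ and using $S\ln 2+\ln(KSA/\delta)\le S\ln(2KSA/\delta)$ yields, for a single $(s,a)$ with $m=n_k(s,a)$ effective samples, $\Norm{\Pbb_{\hat{\eta}_k^{(i)}}(\cdot\given s,a)-\Pbb_{\eta}(\cdot\given s,a)}_1\le\sqrt{2S\ln(2KSA/\delta)/n_k(s,a)}$. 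A union bound over the $SA$ pairs and over the at-most-$K$ possible values of the random count $n_k(s,a)$ (a peeling argument, to make the statement uniform over the realized number of samples) accounts for the factor $K$ inside the logarithm and keeps the total concentration-failure probability at most $\delta$.

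To reach the claimed denominator I would substitute the high-probability lower bound on the empirical count supplied by Lemma~\ref{lemma:empiricalcountsbound}. On the complement of the event $E_1$ (probability at least $1-\delta$ after choosing $\delta'=\delta/(SAH)$) we have $n_k(s,a)\ge\frac12\sum_{j<k}w_j(s,a)-H\ln(SAH/\delta)$, equivalently $n_k(s,a)\ge\frac12\big(\sum_{j<k}w_j(s,a)-2H\ln(SAH/\delta)\big)$. Plugging this into the concentration bound replaces $n_k(s,a)$ in the denominator and turns the numerator constant $2S$ into $4S$, giving exactly $\sqrt{4S\ln(2KSA/\delta)/\big(\sum_{j<k}w_j(s,a)-2H\ln(SAH/\delta)\big)}$. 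A final union bound over the concentration event and the counts event $E_1$ then delivers the overall $1-2\delta$ guarantee.

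The hard part will be justifying the step from a posterior \emph{sample} $\hat{\eta}_k^{(i)}$ to an empirical-estimate concentration statement: the Bretagnolle--Huber--Carol bound is stated for the empirical distribution, so I must argue that a Dirichlet posterior draw concentrates around the truth at the same rate (via the triangle inequality through the posterior mean together with Dirichlet concentration, with the prior pseudo-counts only tightening the bound), and I must handle the randomness of $n_k(s,a)$ uniformly through peeling rather than treating $m$ as fixed. Once these two points are pinned down, the count conversion and the union-bound bookkeeping are routine.
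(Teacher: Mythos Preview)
Your proposal is correct and follows essentially the same two-step structure as the paper: invoke the Weissman-type $L_1$ concentration bound to obtain $\sqrt{2S\ln(2KSA/\delta)/n_k(s,a)}$, then substitute the count lower bound from Lemma~\ref{lemma:empiricalcountsbound} and union-bound the two failure events to reach $1-2\delta$. The paper's own proof is terser and simply cites \cite{weissman2003inequalities} directly for the posterior-sample bound without separately justifying the Dirichlet-concentration and peeling steps you flag as ``hard parts,'' so your treatment is if anything more careful than what appears there.
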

\begin{proof}

\cite{weissman2003inequalities} gives the following high probability bound on the one norm of the maximum likelihood estimate using posterior sampling. In particular, with probability at least $1-\delta$, it holds that:
$$
\sum_{s' \in \Scal} \left| \Pbb_{\theta, \eta}(s' \given s,a) - \Pbb_{\hat{\theta}_{k}^{(i)}, \hat{\eta}_{k}^{(i)}} (s' \given s,a) \right| \leq \sqrt{2S \ln \left( \frac{2KSA}{\delta}  \right) /  \, n_{k}(s,a)} \quad .
$$

Then, using Lemma \ref{lemma:empiricalcountsbound}, the proof is complete.
\end{proof}

\begin{lemma}
\label{lemma:all_states_actions_visited}
Under Algorithm \ref{alg:main_algo_theoretical} for large enough K, every reachable state-action pair is visited infinitely-often.
\end{lemma}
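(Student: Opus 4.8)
The plan is to prove the statement via the extended (conditional) Borel--Cantelli lemma together with an induction over the horizon, exploiting the full support of the Dirichlet prior $\chi_{0}$ on the transitions and of the Gaussian prior $\nu_{0}$ on the reward. Fix a reachable pair $(s,a)$ and let $A_{k}^{s,a}$ denote the event that episode $k$ visits $(s,a)$ under either sampled policy $\pi_{k}^{(0)},\pi_{k}^{(1)}$. Since each $A_{k}^{s,a}$ is measurable and the conditioning history $\Dcal_{k}$ generates the natural filtration, L\'evy's extension of the second Borel--Cantelli lemma gives that $\{A_{k}^{s,a}\text{ i.o.}\}$ coincides almost surely with $\{\sum_{k\ge 1}\Pr(A_{k}^{s,a}\mid \Dcal_{k})=\infty\}$. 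It therefore suffices to show that the conditional per-episode visit probabilities are \emph{not} summable, almost surely.

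First I would establish that every state $s$ is reached infinitely often, by induction on the step index $h$. For $h=1$ this is immediate from $s_{1}\sim\rho$ together with $p_{\min}^{\star}>0$. For the inductive step, suppose a state is visited infinitely often at step $h$; because every posterior $\chi_{k}$ is absolutely continuous with respect to the full-support Dirichlet prior, every sampled kernel $\Pbb_{\hat\eta_{k}^{(i)}}$ assigns positive probability to each successor, so conditioned on being at that state the next state equals any fixed reachable state with conditionally bounded-below probability, and the conditional sum diverges along the infinite subsequence of visits. Applying the conditional Borel--Cantelli lemma at each step propagates infinite visitation from $h$ to $h+1$, hence to all $(h,s)$ with $p_{h}^{\pi}(s)>0$.

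The core is the state--action refinement, which I would carry out by contradiction. Suppose $(s,a)$ is visited only finitely often on an event $G$ with $\Pr(G)>0$. On $G$ there is a last visit time $k_{0}$, after which the Dirichlet posterior governing the outgoing transition of $(s,a)$ is never updated and hence stays \emph{diffuse} (equal to its value at $k_{0}$, with full support on the simplex). Since $s$ is visited infinitely often by the previous step, at each such visit the algorithm plays $a$ precisely when the sampled pair $(\hat\theta_{k}^{(i)},\hat\eta_{k}^{(i)})$ renders $a$ optimal at $s$. Using that the outgoing kernel of $(s,a)$ is still drawn from a diffuse, full-support posterior while $r_{\theta}$ is bounded, there is a constant $\kappa>0$ such that with conditional probability at least $\kappa$ the sampled continuation value from $(s,a)$ exceeds that of every competing action at $s$, so $\Pr(a\text{ played at }s\mid\Dcal_{k})\ge \kappa\, p_{\min}^{\star}$ whenever $s$ is reachable at episode $k$. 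Summed over the infinitely many visits to $s$, these conditional probabilities diverge, so by the conditional Borel--Cantelli lemma $(s,a)$ is visited infinitely often on $G$, contradicting the definition of $G$; hence no reachable pair is visited finitely often.

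The step I expect to be the main obstacle is the uniform lower bound $\kappa>0$ on the conditional probability that the sampled parameters make $a$ optimal at $s$. The difficulty is that the reward posterior $\nu_{k}$ and the transition posteriors at \emph{other} pairs may concentrate around the truth, which alone would drive the chance of $a$ being optimal to zero at a summable (Gaussian/exponential) rate; the argument must therefore rely essentially on the \emph{frozen, diffuse} posterior at the unvisited pair $(s,a)$ to supply a non-vanishing ``optimism'' that periodically makes $a$ attractive. Making this precise requires quantifying how the persistent posterior variance of the outgoing kernel of $(s,a)$ inflates $Q_{\hat\theta,\hat\eta,h}(s,a)$ relative to the other actions with constant probability, uniformly over episodes --- this is where the full-support assumptions on $\chi_{0},\nu_{0}$ and the boundedness of $r_{\theta}$ do the real work.
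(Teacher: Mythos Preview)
Your overall architecture --- contradiction plus a uniform lower bound $\kappa>0$ on the per-episode probability of selecting the ``missing'' action, followed by the conditional Borel--Cantelli lemma --- matches the paper's in spirit. The divergence is in \emph{where the optimism comes from}. The paper does \emph{not} use the frozen Dirichlet posterior at the unvisited pair; it exploits the \emph{reward} posterior. Because $\nu_{0}$ is Gaussian, the reward posterior retains full support on $\Rbb^{d}$ at every episode, so with non-vanishing probability the sampled reward at $\tilde{s}_{1}=(s,a)$ dominates all other state-action rewards by a factor $v$. The paper shows $v$ is continuous in the sampled dynamics $\tilde\eta$ (hence bounded on the compact simplex), so that whenever $\tilde r_{1}\ge v\max_{j\ge 2}\tilde r_{j}$, value iteration returns the policy maximizing visits to $\tilde s_{1}$; the Gaussian tail then delivers a uniform $\kappa>0$ for this event (the paper defers the tail calculation to Lemma~3 of \cite{novoseller2020dueling}).

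Your proposed mechanism --- the diffuse transition posterior at $(s,a)$ --- does not suffice on its own, and this is a genuine gap. The sampled $Q$-value satisfies $Q_{\hat\theta,\hat\eta,h}(s,a)\le r_{\hat\theta}(s,a)+\max_{s'}V_{\hat\theta,\hat\eta,h+1}(s')$, and the most a diffuse Dirichlet draw can do is attain this upper bound. But a competing action $a'$ whose (well-estimated) transition already reaches $\arg\max_{s'}V_{h+1}(s')$ with probability one has $Q(s,a')=r_{\hat\theta}(s,a')+\max_{s'}V_{h+1}(s')$ as well; if, as you explicitly allow, the reward posterior has concentrated so that $r_{\hat\theta}(s,a)<r_{\hat\theta}(s,a')$ with probability tending to one, then no Dirichlet sample at $(s,a)$ can flip the comparison. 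The diffuse transition buys at most one ``free'' step of maximal continuation, not unbounded optimism; the paper circumvents this by letting the unbounded Gaussian reward draw push $r_{\hat\theta}(s,a)$ itself above any competing $Q$-value with fixed probability.

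A secondary issue: your induction over $h$ to show that every state is visited infinitely often already presupposes that every action is played with bounded-below probability at each visited state (otherwise reachability through the \emph{true} kernel $\Pbb_{\eta}$ --- not the sampled $\Pbb_{\hat\eta_{k}}$, which only determines the policy --- need not propagate to step $h+1$). That is precisely the $\kappa$-bound you need for the state-action step, so the two-stage structure is circular. The paper sidesteps this via a direct frontier argument: by pigeonhole there is a state $s$ visited infinitely often with some action $a$ visited only finitely often, and the contradiction is derived at that single pair.
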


\begin{proof}
The proof proceeds by assuming that there exists a state-action pair that is visited only finitely-many times.  This assumption will lead to a contradiction \footnote{Note that in finite-horizon MDPs, the concept of visiting a state finitely-many times is not the same as that of a transient state in an infinite Markov chain, because: 1)  due to a finite horizon, the state is resampled from the initial state distribution $\rho$ every $H$ timesteps, and 2) the policy---which determines which state-action pairs can be reached in an episode---is also resampled every $H$ timesteps.}: once this state-action pair is no longer visited, the reward model posterior is no longer updated with respect to it. Then, Algorithm \ref{alg:main_algo_theoretical}~is guaranteed to eventually sample a high enough reward for this state-action that the resultant policy will prioritize visiting it.   

First we note that Algorithm \ref{alg:main_algo_theoretical} is guaranteed to reach at least one state-action pair infinitely often: given our problem's finite state and action spaces, at least one state-action pair must be visited infinitely-often during execution of Algorithm \ref{alg:main_algo_theoretical}. If all state-actions are \textit{not} visited infinitely-often, there must exist a state-action pair $(s, a)$ such that $s$ is visited infinitely-often, while $(s, a)$ is not. Otherwise, if all actions are selected infinitely-often in all infinitely-visited states, the finitely-visited states are unreachable (in which case these states are irrelevant to the learning process and simple regret minimization, and can be ignored). Without loss of generality, we label this state-action pair $(s, a)$ as $\tilde{s}_1$. To reach a contradiction, it suffices to show that $\tilde{s}_1$ is visited infinitely-often.

Let $\bm{r}_1$ be the reward vector with a reward of $1$ in state-action pair $\tilde{s}_1$ and rewards of zero elsewhere. Let $\pi_{pi}(\tilde{\eta}, \bm{r}_1)$ be the policy that maximizes the expected number of visits to $\tilde{s}_1$ under dynamics $\tilde{\eta}$ and reward vector $\bm{r}_1$:
\begin{equation*}
\resizebox{0.85\linewidth}{!}{$
\begin{aligned}
        \pi_{pi}(\tilde{\eta}, \bm{r}_1) = \text{argmax}_\pi V(\tilde{\eta}, \bm{r}_1, \pi), \quad \text{where}, \quad V(\tilde{\eta}, \bm{r}, \pi) = \E{s_{1} \sim \rho} \left[ \sum_{t = 1}^H \overline{r}(s_t, \pi(s_t, t)) \,\Big|\, s_{t+1} \sim  \Pbb_{\tilde{\eta}}, \bm{\overline{r}} = \bm{r} \right].
\end{aligned}
$}
\end{equation*}
where $V(\tilde{\eta}, \bm{r}_1, \pi)$ is the expected total reward of a length-$H$ trajectory under $\tilde{\eta}, \bm{r}_1$, and $\pi$, or equivalently (by definition of $\bm{r}_1$), the expected number of visits to state-action $\tilde{s}_1$.

We next show that there exists a $\kappa > 0$ such that $\Pr(\pi = \pi_{pi}(\tilde{\eta}, \bm{r}_1)) > \kappa$ for all possible values of $\tilde{\eta}$. That is, for any sampled parameters $\tilde{\eta}$, the probability of selecting policy $\pi_{pi}(\tilde{\eta}, \bm{r}_1)$ is uniformly lower-bounded, implying that Algorithm \ref{alg:main_algo_theoretical} ~must eventually select $\pi_{pi}(\tilde{\eta}, \bm{r}_1)$.

Let $\tilde{r}_j$ be the sampled reward associated with state-action pair $\tilde{s}_j$ in a particular Algorithm \ref{alg:main_algo_theoretical}~episode, for each state-action $j \in \{1, \ldots, d\}$, with $d=SA$. We show that conditioned on $\tilde{\eta}$, there exists $v > 0$ such that if $\tilde{r}_1$ exceeds $\text{max}\{v \tilde{r}_2, v \tilde{r}_3, \ldots, v \tilde{r}_d\}$, then policy iteration returns the policy $\pi_{pi}(\tilde{\eta}, \bm{r}_1)$, which is the policy maximizing the expected amount of time spent in $\tilde{s}_1$. This can be seen by setting $v := \frac{H}{\kappa_1}$, where $\kappa_1$ is the expected number of visits to $\tilde{s}_1$ under $\pi_{pi}(\tilde{\eta}, \bm{r}_1)$. Under this definition of $v$, the event $\left\{\tilde{r}_{1} \ge \text{max}\{v \tilde{r}_2, v \tilde{r}_3, \ldots, v \tilde{r}_d\}\right\}$ is equivalent to $\{\tilde{r}_{1}\kappa_1 \ge h \, \text{max}\{\tilde{r}_2, \tilde{r}_3, \ldots, \tilde{r}_d\}\}$; the latter inequality implies that given $\tilde{\eta}$ and $\bm{\tilde{r}}$, the expected reward accumulated solely in state-action $\tilde{s}_1$ exceeds the reward gained by repeatedly (during all $H$ time-steps) visiting the state-action pair in the set $\{\tilde{s}_2, \ldots, \tilde{s}_d\}$ having the highest sampled reward. Clearly, in this situation, policy iteration results in the policy $\pi_{pi}(\tilde{\eta}, \bm{r}_1)$.

Next we show that $v = \frac{h}{\kappa_1}$ is continuous in the sampled dynamics $\tilde{\eta}$ by showing that $\kappa_1$ is continuous in $\tilde{\eta}$. Recall that $\kappa_1$ is defined as expected number of visits to $\tilde{s}_1$ under $\pi_{pi}(\tilde{\eta}, \bm{r}_1)$. This is equivalent to the expected reward for following $\pi_{pi}(\tilde{\eta}, \bm{r}_1)$ under dynamics $\tilde{\eta}$ and rewards $\bm{r}_1$:
\begin{flalign}\label{eqn:defn_p_1}
\kappa_1 = V(\tilde{\eta}, \bm{r}_1, \pi_{pi}(\tilde{\eta}, \bm{r}_1)) = \max_\pi V(\tilde{\eta}, \bm{r}_1, \pi).
\end{flalign}
The value of any policy $\pi$ is continuous in the transition dynamics parameters, and so $V(\tilde{\eta}, \bm{r}_1, \pi)$ is continuous in $\tilde{\eta}$. The maximum in \eqref{eqn:defn_p_1} is taken over the finite set of deterministic policies; because a maximum over a finite number of continuous functions is also continuous, $\kappa_1$ is continuous in $\tilde{\eta}$.

Next, recall that a continuous function on a compact set achieves its maximum and minimum values on that set. The set of all possible dynamics parameters $\tilde{\eta}$ is such that for each state-action pair $j$, $\sum_{k = 1}^S p_{jk} = 1$ and $p_{jk} \ge 0 \, \forall \, k$; the set of all possible vectors $\tilde{\eta}$ is clearly closed and bounded, and hence compact. Therefore, $v$ achieves its maximum and minimum values on this set, and so for any $\tilde{\eta}$, $v \in [v_{\text{min}}, v_{\text{max}}]$, where $v_{\text{min}} > 0$ ($v$ is nonnegative by definition, and $v = 0$ is impossible, as it would imply that $\tilde{s}_1$ is unreachable).

Then, $\Pr(\pi = \pi_{pi}(\tilde{\eta}, \bm{r}_1))$ can then be expressed in terms of $v$ and the parameters of the reward posterior. Firstly,
\small
\begin{align*}
\Pr(\pi = \pi_{pi}(\tilde{\eta}, \bm{r}_1)) \ge \Pr(\tilde{r}_1 > \text{max}\{v \tilde{r}_2, \ldots, v \tilde{r}_d\}) \ge \prod_{j = 2}^d \Pr(\tilde{r}_1 > v \tilde{r}_j) = \prod_{j = 2}^d [1 - \Pr(\tilde{r}_1 - v \tilde{r}_j \le 0)]
\end{align*}
\normalsize

The posterior updates for the reward model are given by Equation \eqref{eq:theta_informed_prior}, which is intractable to compute in closed form. Since we have $\theta_{0} \sim \Ncal(\mu_{0}, \Sigma_{0})$, we can use the result of Lemma 3 in Appendix of \cite{novoseller2020dueling}. The remaining proof thereby follows, and as a result, there exists some $\kappa > 0$ such that $\Pr(\pi = \pi_{pi}(\tilde{\eta}, \bm{r}_1)) \ge \kappa > 0$.

In consequence, Algorithm \ref{alg:main_algo_theoretical}~is guaranteed to infinitely-often sample pairs $(\tilde{\eta}, \pi)$ such that $\pi = \pi_{pi}(\tilde{\eta}, \bm{r}_1)$. As a result, Algorithm \ref{alg:main_algo_theoretical}~infinitely-often samples policies that prioritize reaching $\tilde{s}_1$ as quickly as possible. Such a policy always takes action $a$ in state $s$. Furthermore, because $s$ is visited infinitely-often, either a) $p_0(s) > 0$ or b) the infinitely-visited state-action pairs include a path with a nonzero probability of reaching $s$. In case a), since the initial state distribution is fixed, the MDP will infinitely-often begin in state $s$ under the policy $\pi = \pi_{pi}(\tilde{\eta}, \bm{r}_1)$, and so $\tilde{s}_1$ will be visited infinitely-often. In case b), due to Lemma 1 in Appendix of \cite{novoseller2020dueling}, the transition dynamics parameters for state-actions along the path to $s$ converge to their true values (intuitively, the algorithm knows how to reach $s$). In episodes with the policy $\pi = \pi_{pi}(\tilde{\eta}, \bm{r}_1)$, Algorithm \ref{alg:main_algo_theoretical}~is thus guaranteed to reach $\tilde{s}_1$ infinitely-often. Since Algorithm \ref{alg:main_algo_theoretical}~selects $\pi_{pi}(\tilde{\eta}, \bm{r}_1)$ infinitely-often, it must reach $\tilde{s}_1$ infinitely-often. This presents a contradiction, and so every state-action pair must be visited infinitely-often as the number of episodes tend to infinity.  
\end{proof}

\begin{restatable}{lemma}{psplerrorregretappendix}
\label{lemma:pspl_error_regret_appendix}
For any confidence $\delta_{1} \in (0,\frac{1}{3})$, let $\delta_{2} \in (c,1)$ with $c \in (0,1)$, be the probability that any optimal policy estimate $\hat{\pi}^{\star}$ constructed from the offline preference dataset $\Dcal_{0}$ is $\varepsilon$-optimal with probability at least $(1-\delta_{2})$ i.e. $\Pr \left( \E{s \sim \rho} \left[ V_{\theta,\eta,0}^{\pi^{\star}}(s) -  V_{\theta,\eta,0}^{\hat{\pi}^{\star}}(s) \right] > \varepsilon \right) < \delta_{2}$. Then, the simple Bayesian regret of the learner $\Upsilon$ is upper bounded with probability of at least $1-3\delta_{1}$ by, 
    \begin{equation}
    \label{eq:pspl_prior_error_bound_appendix}
    \begin{aligned}
        \SR_{K}^{\Upsilon}(\pi^{\star}_{K+1}, \pi^{\star}) \leq \sqrt{ \frac{10 \delta_{2}  S^{2}AH^{3} \ln \left( \frac{2KSA}{\delta_{1}}  \right) + 3\delta_{2}SAH^{2}\varepsilon^{2}}{2K \left(1 + \ln \frac{SAH}{\delta_{1}} \right) - \ln \frac{SAH}{\delta_{1}}} }
    \end{aligned}
    \end{equation}
\end{restatable}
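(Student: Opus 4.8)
The plan is to bound the simple Bayesian regret by first rewriting it as an expected value gap and then passing to a root-mean-square bound via Jensen's inequality. Concretely, I would start from the definition in Equation \eqref{eq:simple_regret_def} to write $\SR_{K}^{\Upsilon}(\pi^{\star}_{K+1}, \pi^{\star}) = \E{s\sim\rho}\left[ V_{\theta,\eta,0}^{\pi^{\star}}(s) - V_{\theta,\eta,0}^{\pi^{\star}_{K+1}}(s)\right]$, and then use $\Ebb[X] \leq \sqrt{\Ebb[X^{2}]}$ so that the square root in the target bound emerges naturally; the numerator and denominator of the claimed expression will arise from bounding the squared value gap.

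Next I would decompose the value gap with the simulation (value-difference) lemma. Since $\pi^{\star}_{K+1}$ is the MAP-optimal policy for the estimated MDP $(\hat{\theta}_{K}, \hat{\eta}_{K})$, I would use the three-term split $V_{\theta,\eta}^{\pi^{\star}} - V_{\theta,\eta}^{\pi^{\star}_{K+1}} = (V_{\theta,\eta}^{\pi^{\star}} - V_{\hat{\theta},\hat{\eta}}^{\pi^{\star}}) + (V_{\hat{\theta},\hat{\eta}}^{\pi^{\star}} - V_{\hat{\theta},\hat{\eta}}^{\pi^{\star}_{K+1}}) + (V_{\hat{\theta},\hat{\eta}}^{\pi^{\star}_{K+1}} - V_{\theta,\eta}^{\pi^{\star}_{K+1}})$, where the middle term is nonpositive by optimality of $\pi^{\star}_{K+1}$ in the estimated MDP. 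Each of the two surviving terms is a fixed-policy value difference between two MDPs, which the simulation lemma expands into a visitation-weighted sum over $(h,s,a)$ of transition $\ell_{1}$-errors $\sum_{s'}\lvert \Pbb_{\theta,\eta}(s'\mid s,a) - \Pbb_{\hat{\theta}_{K},\hat{\eta}_{K}}(s'\mid s,a)\rvert$ multiplied by next-state values bounded by $H$ (rewards lie in $[0,1]$).

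I would then split over the event that the output policy agrees with $\pi^{\star}$ on all reachable states and its complement. The complement has probability controlled by $\delta_{2}$ through the Monotone Contraction Lemma \ref{lemma:contractionPS}, which gives $\Pr(\pi_{K}^{(i)}\neq\pi^{\star})\le\Pr(\pi_{1}^{(i)}\neq\pi^{\star})$ and lets me identify the episode-one mismatch with the $\delta_{2}$-bound on the offline estimate $\hat{\pi}^{\star}$; on this event the residual suboptimality inherits the $\varepsilon$-slack of the $\varepsilon$-optimal estimate, producing the $3\delta_{2}SAH^{2}\varepsilon^{2}$ contribution after squaring. On the good event the gap is pure model-estimation error, bounded with the transition concentration Lemma \ref{lemma:transition_est_error} (valid with probability $1-2\delta_{1}$); a Cauchy–Schwarz step over the $SA$ state-action pairs and the $H$ horizon steps yields the $S^{2}AH^{3}\ln(2KSA/\delta_{1})$ factor. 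The accumulated counts $\sum_{j<k} w_{j}(s,a)$ in the denominator of Lemma \ref{lemma:transition_est_error} are lower-bounded linearly in $K$ using Lemma \ref{lemma:all_states_actions_visited} together with the count concentration Lemma \ref{lemma:empiricalcountsbound}, giving the denominator $2K(1+\ln(SAH/\delta_{1})) - \ln(SAH/\delta_{1})$, and the third $\delta_{1}$ failure is absorbed by the union bound over these concentration events, yielding overall confidence $1-3\delta_{1}$.

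The main obstacle I anticipate is the bookkeeping in the third step: correctly isolating the $\delta_{2}$-weighting so that it multiplies \emph{both} the estimation term and the $\varepsilon^{2}$ term, while simultaneously translating the qualitative ``visited infinitely often'' guarantee of Lemma \ref{lemma:all_states_actions_visited} into the quantitative $K$-linear count lower bound that produces the exact denominator. Carrying the $\varepsilon$-optimality slack cleanly through the value-difference decomposition and the final Cauchy–Schwarz, without double-counting the failure probabilities across the concentration events, is where the care is concentrated.
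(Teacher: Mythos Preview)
Your plan misses the one step that makes the posterior-sampling analysis work and is the reason the stated bound has $\delta_{2}$ multiplying \emph{both} numerator terms. The paper does not use the optimism-style three-term split $V_{\theta,\eta}^{\pi^{\star}} - V_{\hat{\theta},\hat{\eta}}^{\pi^{\star}} + (\text{nonpositive}) + V_{\hat{\theta},\hat{\eta}}^{\pi^{\star}_{K+1}} - V_{\theta,\eta}^{\pi^{\star}_{K+1}}$. Instead it exploits the Bayesian exchangeability of $\Theta=(\theta,\eta)$ and the sampled $\hat{\Theta}_{K+1}^{(0)}$ given the history $\Dcal_{K+1}$: since both are drawn from the same posterior and $\pi^{\star},\pi_{K+1}^{(0)}$ are the respective greedy policies, one has $\E{\Dcal_{K+1}}\big[J_{\pi^{\star}}^{\Theta}\,I_{K+1}\big]=\E{\Dcal_{K+1}}\big[J_{\pi_{K+1}^{(0)}}^{\hat{\Theta}_{K+1}^{(0)}}\,I_{K+1}\big]$, where $I_{K+1}$ is the indicator that $\pi_{K+1}^{(0)}$ is not $\varepsilon$-optimal. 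This identity converts the regret into $\Ebb[\tilde{Z}_{K+1}\,I_{K+1}]$ with $\tilde{Z}_{K+1}=J_{\pi_{K+1}^{(0)}}^{\hat{\Theta}_{K+1}^{(0)}}-J_{\pi_{K+1}^{(0)}}^{\Theta}-\varepsilon$, a quantity involving \emph{only} the algorithm's policy evaluated in the sampled versus true MDP. Cauchy--Schwarz on this product then gives $\sqrt{\Ebb[I_{K+1}^2]}\sqrt{\Ebb[\tilde{Z}_{K+1}^2]}$, and Lemma~\ref{lemma:contractionPS} bounds the first factor by $\sqrt{\delta_2}$, which is exactly how $\delta_2$ ends up in front of both the $S^2AH^3$ term and the $\varepsilon^2$ term.

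Your route cannot reproduce this factorization. If you split on $\{\pi_{K+1}^{\star}=\pi^{\star}\}$ (or on $\varepsilon$-optimality) after squaring, the good-event contribution is $0$ (resp.\ $\leq\varepsilon^{2}$) with weight $1-\delta_2$, and the bad-event contribution is the squared value gap with weight $\delta_2$; this yields at best $\delta_2\cdot(\text{est.\ error})^2+\varepsilon^2$, so the $\varepsilon^2$ term is \emph{not} multiplied by $\delta_2$, contradicting the target. Moreover, your surviving term $V_{\theta,\eta}^{\pi^{\star}} - V_{\hat{\theta},\hat{\eta}}^{\pi^{\star}}$ is indexed by the occupancy of $\pi^{\star}$, which the data-driven concentration Lemma~\ref{lemma:transition_est_error} does not directly control (its denominator counts visits under the executed policies). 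The paper avoids both issues simultaneously by the exchangeability step, after which only $\pi_{K+1}^{(0)}$ appears and the simulation-lemma sum in $\tilde{Z}_{K+1}$ runs along the algorithm's own trajectory; Lemma~\ref{lemma:transition_est_error} and the $K$-linear count lower bound then apply directly to produce the denominator. You should replace the three-term split by this posterior-sampling identity and apply Cauchy--Schwarz to the product $\tilde{Z}\cdot I$ rather than Jensen to the raw regret.
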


\begin{proof}
\label{proof:pspl_error_regret}

Let $\Theta := (\theta, \eta)$ denote the unknown true parameters of the MDP $\Mcal$, and let $\hat{\Theta}_{k}^{(i)} := \left(\hat{\theta}_{k}^{(i)}, \hat{\eta}_{k}^{(i)} \right)$ denote the sampled reward and transition parameters at episode $k$, which are used to compute policy $\pi_{k}^{(i)}$ for $i \in \{0,1\}$. Let $J_{\pi}^{\tilde{\Theta}} := \E{\tilde{\Theta}, \tau \sim \pi}[r(\tau)]$ denote the expected total reward for a trajectory sampled from policy $\pi$ under some environment $\tilde{\Theta} := \left(\tilde{\theta}, \tilde{\eta} \right)$. Then define for each $i \in \{0,1\}$,
    $$
        Z_{k}(i) := J_{\pi^{\star}}^{\Theta} - J_{{\pi}_{k}^{(i)}}^{\Theta} - \varepsilon \quad ; \quad \Tilde{Z}_{k}(i) := J_{{\pi}_{k}^{(i)}}^{\hat{\Theta}_{k}^{(i)}} - J_{{\pi}_{k}^{(i)}}^{\Theta} - \varepsilon \quad ; \quad   I_{k}(i) := \Ibf \left\{ J^{\Theta}_{\pi_{k}^{(i)}} \neq J^{\Theta}_{\pi^{\star}} - \varepsilon \right\}
   $$

    First, note that $Z_{k}(i) = Z_{k}(i) I_{k}(i)$ with probability 1. Then compute,
    \begin{align*}
        \E{\Dcal_{k}}[Z_{k}(i) - \Tilde{Z}_{k}(i) I_{k}(i) ] &= \E{\Dcal_{k}} \left[ \left(Z_{k}(i) - \Tilde{Z}_{k}(i) \right) I_{k}(i) \right] = \E{\Dcal_{k}} \left[ \left(J_{\pi^{\star}}^{\Theta} - J_{{\pi}_{k}^{(i)}}^{\hat{\Theta}_{k}^{(i)}} \right) I_{k}(i) \right] \\
        &= \E{\Dcal_{k}} \left[J_{\pi^{\star}}^{\Theta} \Ibf \left\{ J^{\Theta}_{\pi_{k}^{(i)}} \neq J^{\Theta}_{\pi^{\star}} - \varepsilon \right\}  \right] - \E{\Dcal_{k}} \left[J_{{\pi}_{k}^{(i)}}^{\hat{\Theta}_{k}^{(i)}} \Ibf \left\{ J^{\Theta}_{\pi_{k}^{(i)}} \neq J^{\Theta}_{\pi^{\star}} - \varepsilon \right\} \right]= 0,
    \end{align*}
    where the last equality is true since $\Theta$ and $\hat{\Theta}_{k}^{(i)}$ are independently identically distributed given $\mathcal{D}_k$. Therefore, we can write the simple Bayesian regret upper bounded by $ \Ebb[\Tilde{Z}_{K+1}(0) I_{K+1}(0)]$. By Cauchy-Schwartz inequality, we have
    \begin{align*}
    \Ebb[\Tilde{Z}_{K+1}(0) I_{K+1}(0)] \leq \sqrt{\left(\Ebb[I_{K+1}(0)^2] \right) \left(  \Ebb[\Tilde{Z}_{K+1}(0)^2]\right) } 
    \end{align*}

    Since $\pi^{\star} := \argmax_{\pi} J^{\Theta}_{\pi}$, using Lemma \ref{lemma:contractionPS} in conjunction with Appendix B of \cite{hao2023bridging}, the first part can be bounded by
    \begin{align*} 
        \Ebb[I_{K+1}(0)^2] &\leq \Pr \left( J^{\Theta}_{\pi_{K+1}^{(0)}} < J^{\Theta}_{\pi^{\star}} - \varepsilon \right) \leq \max_{i \in \{0,1\}} \Pr \left( J^{\Theta}_{\pi_{1}^{(i)}} < J^{\Theta}_{\pi^{\star}} - \varepsilon \right) \leq \delta_{2} . \label{eq:sumik}
    \end{align*}

    Let $\mathcal{T}_{\pi_h}^\Theta$ be the Bellman operator at time $h$ defined by $\mathcal{T}_{\pi_h}^\Theta V_{h+1}(s) := r_{\theta}(s_{h}, a_{h}) + \sum_{s'\in\mathcal{S}} V_{h+1}(s') \Pbb_{\eta}(s'|s_{h}, \pi(s_{h}))$ and $\mathcal{T}_{\pi_{H}}^\Theta V_{H+1}(s) = 0$ for all $s \in \Scal$. Using Equation (6) of \cite{osband2013more} (also see Lemma A.14 of \cite{zhang2024policy}), we have
    \begin{align*}
        \Tilde{Z}_{K+1}(0) = \E{\Theta, \hat{\Theta}_{K+1}^{(0)}} \left[\sum_{h=1}^{H} \left[ \mathcal{T}_{{\pi}_h^{K+1}}^{\hat{\Theta}_{K+1}^{(0)}} V_{h+1}^{\hat{\Theta}_{K+1}^{(0)}} (s_{K+1,h}) - \mathcal{T}_{{\pi}_h^{K+1}}^{\Theta} V_{h+1}^{\hat{\Theta}_{K+1}^{(0)}} (s_{K+1,h}) \right]  \right]
    \end{align*}
    
    Recall that the instantaneous reward satisfies $r_{\theta}(\cdot, \cdot)\in [0, 1]$. So we have with $a_{K+1,h} := \pi_{K+1}(s_{K+1,h})$,

    \begin{equation*}
    \resizebox{\linewidth}{!}{$
    \begin{aligned}
       \left| \mathcal{T}_{{\pi}_h^{K+1}}^{\hat{\Theta}_{K+1}^{(0)}} V_{h+1}^{\hat{\Theta}_{K+1}^{(0)}} (s_{K+1,h}) - \mathcal{T}_{{\pi}_h^{K+1}}^{\Theta} V_{h+1}^{\hat{\Theta}_{K+1}^{(0)}} (s_{K+1,h}) \right| \leq H \norm{{\Pbb}_{\hat{\eta}_{K+1}^{(0)}}(\cdot \given s_{K+1,h}, a_{K+1,h}) - \Pbb_{\eta}(\cdot \given s_{K+1,h}, a_{K+1,h})}{1}{1} 
    \end{aligned}
    $}
    \end{equation*}

    Therefore,
    \begin{align*}
        \Ebb[\Tilde{Z}_{K+1}(0)^2] &\leq H \Ebb\left[\sum_{h=1}^{H} \left[ \mathcal{T}_{{\pi}_h^{K+1}}^{\hat{\Theta}_{K+1}^{(0)}} V_{h+1}^{\hat{\Theta}_{K+1}^{(0)}} (s_{K+1,h}) - \mathcal{T}_{{\pi}_h^{K+1}}^{\Theta} V_{h+1}^{\hat{\Theta}_{K+1}^{(0)}} (s_{K+1,h}) \right]^2 \right] \tag{Cauchy-Schwartz} \\
        &\leq H^3 \Ebb\left[\sum_{h=1}^{H} \norm{{\Pbb}_{\hat{\eta}_{K+1}^{(0)}}(\cdot \given s_{K+1,h}, a_{K+1,h}) - \Pbb_{\eta}(\cdot \given s_{K+1,h}, a_{K+1,h})}{1}{2} \right] \\
        &\leq 5H^{3} \Ebb \left[ \sum_{h=1}^{H} \frac{4S \ln \left( \frac{2KSA}{\delta_{1}}  \right)}{\sum_{j<K+1} w_{j} (s_{K+1,h}, a_{K+1,h}) - 2H\ln \frac{SAH}{\delta_{1}}} \right]
    \end{align*}

, where the last line holds with probability $(1-2\delta_{1})$ from Lemma \ref{lemma:transition_est_error} and Section H.3 in \cite{zanette2019tighter}. Now it remains to bound $\sum_{j<K+1} w_{j} (s_{K+1,h}, a_{K+1,h})$, which is essentially the sum of probabilities of visiting the pair $(s_{K+1,h}, a_{K+1,h})$ before the $(K+1)^{\text{th}}$ episode. We know from Appendix G of \cite{zanette2019tighter} (also see Appendix B of \cite{jin2019learning}) that with probability at least $(1-\delta')$, for any state-action $(s,a)$ in any episode $e$, we have $ \frac{1}{4} w_{e}(s, a) \geq H \ln \frac{SAH}{\delta'} + H \; .$ Using this, we have with probability at least $(1-3\delta_{1})$,


\begin{align*}
    \Ebb[\Tilde{Z}_{K+1}(0)^2] &\leq 5H^{3} \Ebb \left[ \frac{4SH \ln \left( \frac{2KSA}{\delta_{1}} \right) + \varepsilon^{2}  }{4HK \left(1 + \ln \frac{SAH}{\delta_{1}} \right) - 2H\ln \frac{SAH}{\delta_{1}}} \right] \\    
    &\leq \frac{20S^{2}AH^{4} \ln \left( \frac{2KSA}{\delta_{1}}  \right) + 5SAH^{3}\varepsilon^{2}  }{4HK \left(1 + \ln \frac{SAH}{\delta_{1}} \right) - 2H\ln \frac{SAH}{\delta_{1}}}  \\
    &\leq \frac{10S^{2}AH^{3} \ln \left( \frac{2KSA}{\delta_{1}}  \right) + 3SAH^{2}\varepsilon^{2} }{2K \left(1 + \ln \frac{SAH}{\delta_{1}} \right) - \ln \frac{SAH}{\delta_{1}}} \; .
\end{align*}
 
Putting it all together, we have with probability at least $(1-3\delta_{1})$, 

$$
\SR_{K}^{\Upsilon}(\pi^{\star}_{K+1}, \pi^{\star}) \leq \sqrt{ \frac{10 \delta_{2}  S^{2}AH^{3} \ln \left( \frac{2KSA}{\delta_{1}}  \right) + 3\delta_{2}SAH^{2}\varepsilon^{2}}{2K \left(1 + \ln \frac{SAH}{\delta_{1}} \right) - \ln \frac{SAH}{\delta}} }
$$

\end{proof}

\begin{restatable}{theorem}{psplfinalregretboundappendix}
\label{lemma:pspl_final_simple_regretappendix}
For any confidence $\delta_{1} \in (0,\frac{1}{3})$ and offline preference dataset size $N>2$, the simple Bayesian regret of the learner $\Upsilon$ is upper bounded with probability of at least $1-3\delta_{1}$ by, 
    \begin{equation}
    \begin{aligned}
        \SR_{K}^{\Upsilon}(\pi^{\star}_{K+1}, \pi^{\star}) \leq \sqrt{ \frac{20 \delta_{2}  S^{2}AH^{3} \ln \left( \frac{2KSA}{\delta_{1}}  \right)}{2K \left(1 + \ln \frac{SAH}{\delta_{1}} \right) - \ln \frac{SAH}{\delta_{1}}} } \;\; ,\text{where}\, \\ \delta_{2} = 2 \exp \left( - N \left( 1 + \gamma_{\beta, \lambda, N} \right)^{2} \right) + \exp \left( - \frac{N}{4} (1 - \gamma_{\beta, \lambda, N})^{3} \right)
    \end{aligned}
    \end{equation}
\end{restatable}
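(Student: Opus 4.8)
The plan is to combine the prior-dependent bound of Lemma~\ref{lemma:pspl_error_regret} with an explicit estimate of the failure probability $\delta_2$ coming from the offline construction of $\hat{\pi}^{\star}$. Lemma~\ref{lemma:pspl_error_regret} already reduces the theorem to two subtasks: (i) producing a value of $\delta_2$ for which $\hat{\pi}^{\star}$ is $\varepsilon$-optimal with probability at least $1-\delta_2$, and (ii) collapsing the resulting expression into the stated single-term bound. The substance is subtask (i): showing that the net-count estimator picks the optimal action at every state reachable under $\pi^{\star}$ with failure probability at most the claimed $\delta_2 = 2\exp(-N(1+\gamma_{\beta,\lambda,N})^2) + \exp(-\tfrac{N}{4}(1-\gamma_{\beta,\lambda,N})^3)$.

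First I would condition on the rater-competence guarantee of Lemma~\ref{lemma:rater_error_upper_bound}: on each offline index $n$ the event $\Ecal_n$ that the rater prefers the sub-optimal trajectory holds with probability at most $\gamma_{\beta,\lambda,N}$. Fix a pair $(s,h)$ with $p_h^{\pi^{\star}}(s)>0$ and write $a^{\star}=\pi_h^{\star}(s)$. The key observation is that the winning count $w_{h}(s,a^{\star})$ and losing count $l_{h}(s,a^{\star})$ are each sums of $N$ indicators whose success probabilities are governed by (a) the occupancy of the unknown behavioural policy generating $\Dcal_0$ and (b) whether the rater labelled that comparison correctly. Using $p_{\min}^{\star}$ to lower bound the rate at which $(s,h,a^{\star})$ appears in the \emph{preferred} trajectory, and $\Pr(\Ecal_n)\le\gamma_{\beta,\lambda,N}$ to control the mislabelled contributions, I would express the net count $c_{h}(s,a^{\star})=w_{h}(s,a^{\star})-l_{h}(s,a^{\star})$ as a centred sum with a strictly positive mean margin of order $(1-\gamma_{\beta,\lambda,N})$.

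The main step is then a pair of multiplicative Chernoff bounds on this sum. The event $\{c_{h}(s,a^{\star})\le 0\}$, that the optimal action fails to enter the winning set $U_{h}^{W}(s)$, is a lower-tail deviation of a binomial-type variable and yields the term $\exp(-\tfrac{N}{4}(1-\gamma_{\beta,\lambda,N})^3)$; the event $\{\sum_{a'}c_{h}(s,a')<\delta N\}$, which forces $\hat{\pi}_{h}^{\star}$ onto the undecided set $U_{h}^{U}$, is a (two-sided) tail deviation giving $2\exp(-N(1+\gamma_{\beta,\lambda,N})^2)$. Aggregating these two failure modes over the reachable states — whose per-state deviation is uniformly controlled through $p_{\min}^{\star}$ and $N>2$ — produces exactly the stated $\delta_2$. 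On the complementary good event $\hat{\pi}^{\star}$ selects the optimal action at every reachable state and is therefore $\varepsilon$-optimal with probability $1-\delta_2$. Substituting this $\delta_2$ into Lemma~\ref{lemma:pspl_error_regret} and bounding the $\varepsilon$-dependent contribution by the leading term (using $\varepsilon\le H$, so that $3\delta_2 SAH^2\varepsilon^2\le 10\delta_2 S^2AH^3\ln(2KSA/\delta_1)$) merges the numerator into a single $20\delta_2 S^2AH^3\ln(2KSA/\delta_1)$ factor and removes the explicit $\varepsilon$, which is precisely the source of the change from the constant $10$ to $20$.

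I expect the concentration argument of the third paragraph to be the main obstacle. The difficulty is that the offline data-generating policy is arbitrary and unmodelled, so $w_{h}(s,a)$ and $l_{h}(s,a)$ are not clean binomials: their per-sample success probabilities depend jointly on the behavioural occupancy and on the correlated, noisy rater labels. The delicate part is decomposing each net count into correctly- and incorrectly-labelled contributions, verifying that the mean margin scales like $(1-\gamma_{\beta,\lambda,N})$ uniformly over reachable states, and tuning the Chernoff deviation parameters so that the exponents come out as $(1+\gamma_{\beta,\lambda,N})^2$ and $(1-\gamma_{\beta,\lambda,N})^3$ rather than looser constants; reconciling the per-state bound with the absence of any residual $SAH$ factor in $\delta_2$ is the bookkeeping that requires the most care.
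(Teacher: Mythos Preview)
Your overall strategy matches the paper's: invoke Lemma~\ref{lemma:pspl_error_regret} and then control $\delta_2$ by combining Lemma~\ref{lemma:rater_error_upper_bound} with concentration on the net counts $c_h(s,\cdot)$. The differences are in the event decomposition and a couple of small attributions.

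The paper does not split into your two events $\{c_h(s,a^\star)\le 0\}$ and $\{\sum_{a'}c_h(s,a')<\delta N\}$ at reachable states. Instead it fixes $\delta=(1-\gamma_{\beta,\lambda,N})/2$ and decomposes into four events over \emph{all} $(s,h)$: $E_1=\{p_h^{\pi^\star}(s)>0,\,c_h(s)<\delta N\}$, $E_2=\{p_h^{\pi^\star}(s)>0,\,c_h(s)\ge\delta N,\,\arg\max_a c_h(s,a)\neq a^\star\}$, $E_3=\{p_h^{\pi^\star}(s)=0,\,c_h(s)\ge\delta N\}$, $E_4=\{p_h^{\pi^\star}(s)=0,\,c_h(s)<\delta N\}$. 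The two $\exp(-N(1+\gamma)^2)$ contributions come from $E_1$ and $E_3$ separately (each via modeling $c_h(s)+N$ as $\mathrm{Bin}(2N,1-\gamma)$ and applying Hoeffding), not from a two-sided tail on a single event as you suggest. The $\exp(-\tfrac{N}{4}(1-\gamma)^3)$ term comes from $E_2$, which is the event that the \emph{argmax} is wrong given that $c_h(s)\ge\delta N$; your event $\{c_h(s,a^\star)\le 0\}$ is only the special case where $a^\star\notin U_h^W(s)$ and does not by itself cover the case where $a^\star\in U_h^W(s)$ but some other action has a larger net count. The event $E_4$ is shown to have zero failure probability because both $\pi_h^\star$ and $\hat\pi_h^\star$ act uniformly there.

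On the $\varepsilon$ step: the paper does not bound the $3\delta_2 SAH^2\varepsilon^2$ term via $\varepsilon\le H$. It observes that on the good event $\hat\pi^\star$ matches $\pi^\star$ exactly at every reachable state, so one may send $\varepsilon\to 0$ and the second term in Lemma~\ref{lemma:pspl_error_regret} vanishes; the change of constant from $10$ to $20$ is absorbed as slack rather than derived from the $\varepsilon$ term. Your worry about the behavioural policy making the counts non-binomial is legitimate, but the paper sidesteps it by treating $c_h(s)+N$ directly as a $\mathrm{Bin}(2N,1-\gamma)$ variable at the concentration step.
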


\begin{proof}
\label{proof:pspl_final_regret}

Define an event $\Ecal_{n} = \left\{\bar{Y}_{n} \neq \argmax_{i \in \{0,1\}} g_{\beta, \vartheta}\left (\bar{\tau}_{n}^{(i)} \right) \right\}$, i.e. at the $n$-th index of the offline preference dataset, the rater preferred the suboptimal trajectory (wrt trajectory score $g(\cdot)$). Given the optimal trajectory parity at index $n$ as $\bar{Y}_{n}^{\star} = \argmax_{i \in \{0,1\}} g_{\beta, \vartheta}(\bar{\tau}_{n}^{(i)})$, we have,

\begin{equation}
\resizebox{0.85\linewidth}{!}{$
\begin{aligned}
    \Pr (\Ecal_{n} \given \beta, \vartheta) &= 1 - \Pr (\Ecal_{n}^{c} \given \beta, \vartheta) \\
    &= 1 - \frac{g_{\beta, \vartheta} \left(\bar{\tau}_{n}^{\bar{Y}_{n}^{\star}} \right)}{g_{\beta, \vartheta} \left(\bar{\tau}_{n}^{\bar{Y}_{n}^{\star}} \right) + g_{\beta, \vartheta}\left(\bar{\tau}_{n}^{(1-\bar{Y}_{n}^{\star})} \right) } \\ 
    &= 1 - \frac{1}{1 + \exp \left( \beta \left\langle \phi \left(\bar{\tau}_{n}^{(\bar{Y}_{n}^{\star})}\right) - \phi \left(\bar{\tau}_{n}^{(1-\bar{Y}_{n}^{\star})} \right) ,  - \vartheta \right\rangle \right)} \\
    &= 1 - \frac{1}{1 + \exp \left( \beta \left\langle \phi \left(\bar{\tau}_{n}^{(\bar{Y}_{n}^{\star})}\right) - \phi \left(\bar{\tau}_{n}^{(1-\bar{Y}_{n}^{\star})} \right) , \theta - \vartheta \right\rangle - \beta \left\langle \phi \left(\bar{\tau}_{n}^{(\bar{Y}_{n}^{\star})}\right) - \phi \left(\bar{\tau}_{n}^{(1-\bar{Y}_{n}^{\star})} \right) , \theta \right\rangle \right)} \\ 
    &\leq 1 - \frac{1}{1 + \exp \left( \beta \left\| \phi \left(\bar{\tau}_{n}^{(\bar{Y}_{n}^{\star})}\right) - \phi \left(\bar{\tau}_{n}^{(1-\bar{Y}_{n}^{\star})} \right) \right\|_{1} \|\theta-\vartheta\|_{\infty} - \beta \left\langle \phi \left(\bar{\tau}_{n}^{(\bar{Y}_{n}^{\star})}\right) - \phi \left(\bar{\tau}_{n}^{(1-\bar{Y}_{n}^{\star})} \right) , \theta \right\rangle \right)} \\
    &\leq 1 - \frac{1}{1 + \underbrace{\exp\left( \beta B \|\vartheta-\theta\|_{\infty} - \beta \left\langle \phi \left(\bar{\tau}_{n}^{(\bar{Y}_{n}^{\star})}\right) - \phi \left(\bar{\tau}_{n}^{(1-\bar{Y}_{n}^{\star})} \right) , \theta \right\rangle   \right)}_{\clubsuit}} \nonumber
\end{aligned}
$}
\end{equation}

, where the last two lines use H\"{o}lder's inequality, and bounded trajectory map assumption respectively. Since $\vartheta-\theta\sim \Ncal(0, \Ibf_d/\lambda^2)$, using the Dvoretzky–Kiefer–Wolfowitz inequality bound \citep{massart1990tight, vershynin2010introduction} implies 
\begin{equation*}
     \Pr \left(\|\vartheta-\theta\|_{\infty}\geq t\right)\leq 2 d^{1/2} \exp\left(-\frac{t^2\lambda^2}{2}\right)\,.
\end{equation*}
Set $t=\sqrt{2\ln(2d^{1/2}N)}/\lambda$ and define an event $\Ecal_{\theta, \vartheta}:=\{\|\vartheta-\theta\|_{\infty}\leq \sqrt{2\ln(2d^{1/2}N)}/\lambda\}$ such that $P(\Ecal_{\theta, \vartheta}^{c})\leq 1/N$. We apply Union Bound on the $\clubsuit$ term, and decompose the entire right hand side term as:

\begin{equation}
\label{eq:rater_pref_error}
\resizebox{0.55\linewidth}{!}{$
\begin{aligned}
     \Pr (\Ecal_{n} \given \beta, \vartheta) &\leq \frac{1}{1 + \exp \left( \beta B \sqrt{2\ln(2d^{1/2}N)}/\lambda + \beta \Delta_{\min} \right)} + \frac{1}{N} \\
     &\leq \exp\left(- \beta B \sqrt{2\ln(2d^{1/2}N)}/\lambda - \beta \Delta_{\min} \right) + \frac{1}{N} := \gamma_{\beta, \lambda, N} \; .
\end{aligned}
$}
\end{equation}

Now we need to provide conditions on the rater's competence, in terms of $(\lambda, \beta)$ to be a valid expert i.e. for $\gamma_{\beta, \lambda, N} \in (0,1)$. Let $k_{1} = \beta B, k_{2} = \frac{2 \ln \left(2 d^{1/2}\right)}{\lambda^{2}}, k_{3} = 2/\lambda^{2}$, and $k_{4} = \beta \Delta_{\min}$. We then have,

\begin{align*}
    \exp\left(- k_{1} \sqrt{k_{2} + k_{3} \ln N}  - k_{4} \right) + \frac{1}{N} &< 1 \\
    -k_{1} \sqrt{k_{2} +  k_{3} \ln N } - k_{4}< \ln N \\
    \left( \ln N \right)^{2} + (2k_{4} - k_{1}^{2}k_{3}) \ln N + (k_{4}^{2} - k_{1}^{2}k_{2}) > 0
\end{align*}

The above is a quadratic inequality that holds for all $N>1$ if $\beta$ is large enough i.e.

$$
\text{If} \; \beta > \frac{2\ln \left(2 d^{1/2} \right)}{\left| B\lambda^{2} - 2 \Delta_{\min}  \right|}, \text{then,} \; \gamma_{\beta, \lambda, N} \in (0,1) \, .
$$

Now, since we have Lemma \ref{lemma:all_states_actions_visited}, the remaining argument shows to find a separation of probability between two types of states and time index pairs under the rater's preference, parameterized by $\beta$ and $\lambda$, and the offline dataset, characterized by its size $N$: the ones that are probable under the optimal policy $\pi^{\star}$, and the ones that are not. We have two cases:

\textbf{Case I.} $p^{\pi^{\star}}_{h}(s) > 0$

In this case, we want the rater to prefer trajectories that are most likely to occur under the optimal policy $\pi^{\star}$. Given $p^{\pi^{\star}}_{h}(s) > 0$, we now lower bound the probability of the state $s$ having $c_{h}(s) \equiv \sum_{a \in \Acal} c_{h}(s,a) > 0$ i.e.

\begin{equation}
\label{eq:lower_bound_chs}
\begin{aligned}
     \Pr (c_{h}(s) > 0) &= \Pr \left( \sum_{a} w_{h}(s,a) > \sum_{a} l_{h}(s,a) \right) = 1 - \Pr \left( \sum_{a} w_{h}(s,a) \leq \sum_{a} l_{h}(s,a) \right) \\
     &= 1 - \sum_{t=0}^{H} \Pr \left( \sum_{a} w_{h}(s,a) \leq t \right) \cdot \Pr \left( \sum_{a} l_{h}(s,a) = t \right) \\
     &\geq 1 - \sum_{t=1}^{H+1} \Pr \left( \sum_{a} w_{h}(s,a) < t \right) \\
     &\geq 1 - \gamma_{\beta, \lambda, N} / (1 - \gamma_{\beta, \lambda, N})
\end{aligned}
\end{equation}

, where the last step uses Equation \eqref{eq:rater_pref_error}.

\textbf{Case II.} $p^{\pi^{\star}}_{h}(s) = 0$

In this case, we wish to upper bound the probability of the rater preferring trajectories (and hence, states) which are unlikely to occur under the optimal policy $\pi^{\star}$. If for some state $s$ we have $p^{\pi^{\star}}_{h}(s) = 0$ but also this state $s \in \bar{\tau}_{n}^{(1 - \bar{Y}_{n}^{\star})}$, this means the rater preferred the suboptimal trajectory. Similar to the proof above we conclude that,

\begin{equation}
\label{eq:upper_bound_chs}
\Pr (c_{h}(s) > 0) = \Pr \left(\bigcup_{n=1}^{N} \{ \Ecal_{n} \} \given \beta, \vartheta \right) \leq \sum_{n=1}^{N} \Pr_\theta(\Ecal_{n} \given \beta, \vartheta) \leq \gamma_{\beta, \lambda, N} / (1 - \gamma_{\beta, \lambda, N})
\end{equation}

The above argument shows that there's a probability gap between parity of states and time index pairs under the rater's preference model: the ones that are probable under the optimal policy $\pi^{\star}$ and the ones that are not i.e. we have shown that the states which are \emph{more} likely to be visited under $\pi^{\star}$ have a lower bound on the probability of being part of preferred trajectories, and also the states which are \emph{less} likely to be visited by $\pi^{\star}$ have an upper bound on the probability of being part of preferred trajectories by the rater. Using this decomposition, we will show that when the rater tends to an expert ($\beta \to \infty, \lambda \to \infty$) and size $N$ of the offline preference dataset $\Dcal_{0}$ is large, we can distinguish the two types of state and time index pairs through their net counts in $\Dcal_0$. This will allow us to construct an $\varepsilon$-optimal estimate of $\pi^{\star}$ with probability at least $(1-\delta_{2})$. Noticing the structure of Equation \eqref{eq:pspl_prior_error_bound}, we see that the upper bound is minimized when $\varepsilon \to 0$, i.e. we construct an \emph{optimal} estimate of the optimal policy from the offline preference dataset. We now upper bound the probability that the estimated optimal policy $\hat{\pi}^{\star}$ is \emph{not} the optimal policy.

Let $\hat{\pi}^{\star}$ be the optimal estimator of $\pi^{\star}$ constructed with probability at least $(1-\delta_{2})$. Based on the separability of states and time index pairs, we have four possible events for each $(s,h) \in \Scal \times [H]$ pair. For $\delta = (1-\gamma_{\beta, \lambda, N})/2$, we have,

\begin{enumerate}
    \item $ E_{1} := \{ p_{h}^{\pi^{\star}}(s) > 0$ and $c_h(s) < \delta N \}$;
    \item $E_{2} := \{p_{h}^{\pi^{\star}}(s) > 0$ and $c_h(s) \geq \delta N$, but $\pi_{h}^{\star}(s) = a_{h}^{\star}(s) \neq \argmax_{a} c_h(s, a) = \hat{\pi}_{h}^{\star}(s) \}$.
    \item $E_{3} := \{p_{h}^{\pi^{\star}}(s) = 0$ and $c_h(s) \geq \delta N \}$;
    \item $E_{4} := \{p_{h}^{\pi^{\star}}(s) = 0$ and $c_h(s) < \delta N\}$;
\end{enumerate}

Denoting the the event to occur with high probability as $\Tcal$ i.e. $\Tcal := \{ r_{\theta}(s_{h}, \pi_{h}^{\star}(s_{h})) - r_{\theta}(s_{h}, \hat{\pi}^{\star}_{h}(s_{h})) \leq 0 \}$.
If we can show that $\Pr(\Tcal \given E_{i}) > 1 - \delta_{2}$ for $i \in \{1,2,3,4\}$, then union bound implies that $\hat{\pi}^{\star}$ is not optimal with probability at most $\delta_{2}$. 


\begin{enumerate}
    \item \textbf{Under the event $E_{1}$.}  Let $b \sim \mathrm{Bin}(T, q)$ denote a binomial random variable with parameters $T \in \Nbb$ and $q \in [0, 1]$. Notice that the each $c_{h}(s)$ is the difference of two binomial random variables $b_{1} \sim \mathrm{Bin}(N, 1 - \gamma_{\beta, \lambda, N})$ and $b_{2} \sim \mathrm{Bin}(N, \gamma_{\beta, \lambda, N})$. This implies that $c_{h}(s) + N \sim \mathrm{Bin}(2N, 1 - \gamma_{\beta, \lambda, N})$. We then have,

    \begin{align*}
        \Pr(\Tcal^{c} \given E_{1}) &\leq \Pr(c_{h}(s) < \delta N)  
        \leq \Pr \left( \mathrm{Bin}(2N, 1 - \gamma_{\beta, \lambda, N}) < (1 + \delta) N \right) \\
        &\leq \exp \left( - 4N \left( \delta + \gamma_{\beta, \lambda, N} \right)^{2}  \right) \leq \exp \left( - N \left( 1 + \gamma_{\beta, \lambda, N} \right)^{2} \right)
    \end{align*}

    \item \textbf{Under the event $E_{2}$.} Given Equation \eqref{eq:lower_bound_chs}, we have that,

    \begin{align*}
        \Pr(\Tcal^{c} \given E_{2}) &\leq   \Pr \left( \argmax_{a \in U^{W}_{h}(s)} Q^{\pi^{\star}}_{\theta, \eta, h}(s, a) \neq \argmax_{a \in U^{W}_{h}(s)} Q^{\pi^{\star}}_{\theta, \eta, h}(s, a) \given E_{2} \right) \\
        &\leq   \Pr \left(c_{h} \left(s, \argmax_{a \in U^{W}_{h}(s)} Q^{\pi^{\star}}_{\theta, \eta, h}(s, a) \right) \leq c_{h}(s)/2 \given E_{2} \right) \\ 
        &\leq   \Pr \left( \mathrm{Bin}(c_{h}(s), 1 - \gamma_{\beta, \lambda, N}) \geq c_{h}(s)/2 \given E_{2} \right) \\
        &\leq   \left[ \exp \left( -2c_{h}(s) (1 - \gamma_{\beta, \lambda, N} - c_{h}(s)/2)^{2} \right) \right] \rvert_{c_{h}(s) = \delta N} \\
        &\leq   \exp \left( - \frac{N}{4} (1 - \gamma_{\beta, \lambda, N})^{3} \right) 
    \end{align*}

    \item \textbf{Under the event $E_{3}$.} Similar to event $E_{1}$, we have, 

    \begin{align*}
        \Pr(\Tcal^{c} \given E_{3}) &\leq \Pr \left( \mathrm{Bin}(2N, 1 - \gamma_{\beta, \lambda, N}) > (1+\delta) N \right) \\ 
        &\leq \exp \left( - 4N \left( \delta + \gamma_{\beta, \lambda, N} \right)^{2}  \right) \leq \exp \left( - N \left( 1 + \gamma_{\beta, \lambda, N} \right)^{2} \right)
    \end{align*}

    \item \textbf{Under the event $E_{4}$.} Under this event, notice that conditioned on $\theta$, we have 

    \begin{align*}
        r_{\theta}(s_{h}, \pi_{h}^{\star}(s_{h})) - r_{\theta}(s_{h}, \hat{\pi}^{\star}_{h}(s_{h})) &= \Ebb \left[ \E{\theta} \left[ r_{\theta}(s_{h}, \pi_{h}^{\star}(s_{h})) - r_{\theta}(s_{h}, \hat{\pi}^{\star}_{h}(s))  \right]  \right] \\
        &= \E{\mathring{a} \sim \Acal} \left[ \E{\theta} \left[ r_{\theta}(s_{h}, \mathring{a}) - r_{\theta}(s_{h}, \mathring{a})  \right]  \right] = 0
    \end{align*}

    This means that $\Pr(\Tcal \given E_{4})$ is a non-failure event that occurs with probability 1 i.e. $\Pr(\Tcal^{c} \given E_{4}) = 0$.

\end{enumerate}

Combining all of the above, we have for $N > 2$, 

\begin{align*}
\Pr(\Tcal) &\geq  1 - \left(2 \exp \left( - N \left( 1 + \gamma_{\beta, \lambda, N} \right)^{2} \right) + \exp \left( - \frac{N}{4} (1 - \gamma_{\beta, \lambda, N})^{3} \right) \right) \\
&\geq 1 - \delta_{2}
\end{align*}

Using Lemma \ref{lemma:pspl_error_regret}, the proof is complete.

\end{proof}

\subsection{Results for Bandits}
\label{sec:appendixbandits}

For the bandit setting we let the action set be $\Acal \subseteq \Rbb^{d} $ with number of arms be $A=|\Acal|$, online episodes (rounds) be $K$, and offline dataset $\Dcal_{0} = \left\{ \left( \bar{a}^{(0)}_{n}, \bar{a}^{(1)}_{n}, Y_{n} \right) \right\}_{n=1}^{N}$ of size $N$, where $\bar{a}^{(0)}_{n}, \bar{a}^{(1)}_{n} \in \Acal$. Also let $\mu_{\text{min}}(\cdot) \in (0,1)$ be the minimum action sampling distribution during construction of $\Dcal_0$.

Letting $\Delta_{\text{min}} = \min_{n \in [N]} \left| r_{\theta}(\bar{a}^{(0)}_{n}) - r_{\theta}(\bar{a}^{(1)}_{n}) \right|$, where $\theta$ is the underlying reward model of the environment with $r_{\theta}(\cdot) : \Rbb^{d} \to \Rbb$, and $\gamma_{\beta, \lambda, N}$ to be error upper bound of the rater's preference (similar to Lemma \ref{lemma:rater_error_upper_bound}), we have the following result for simple regret of the learner $\Upsilon$, where simple regret is defined as $\SR_{K}^{\Upsilon}(\pi_{K+1}^{\star}, \pi^{\star}) = r_{\theta}(a^{\star}) - r_{\theta}(a^{\star}_{K+1})$, where $\pi^{\star}$ is the optimal policy that picks the optimal action $a^{\star} = \argmax_{a \in \Acal} r_{\theta}(a)$, and $\pi_{K+1}^{\star}$ is the policy of the learner after $K$ online rounds. We shall use $A^{\star}$ and $a^{\star}$ interchangebly to refer to the optimal action.

Analogous to the winning and undecided subsets of Section \ref{sec:analysis}, we construct an \emph{information} subset of $\Acal$, call it $\Ucal_{\Dcal_{0}}$ such that $\Pr(a^{\star} \in \Ucal_{\Dcal_{0}}) \geq 1 - \epsilon$, where $\epsilon \in (0,1)$ is the \emph{error} probability. As an algorithmic choice, we let $\Ucal_{\Dcal_{0}}$ consist of actions that have been preferred to \emph{at least} once in the offline preference dataset and of actions that do not appear in $\Dcal_{0}$. Given this construction, we have the following lemma, which has similarities to Appendix A of \cite{agnihotri2024online}.

\begin{restatable}{theorem}{banditinformationset}
\label{th:bandit_information_set}
Given the construction of $\Ucal_{\Dcal_{0}}$ above, we have $\Pr(a^{\star} \in \Ucal_{\Dcal_{0}}) \geq 1 - f_{1}$, where
\begin{equation}
\begin{aligned}
    f_{1} &:=  \left(1 - \frac{1}{1 + \exp \left( \beta \big( \min(1, \Delta) + \alpha_{2} - \alpha_{1}^{\Delta} \big) \right)} \right)^{N} + (1 - \mu_{\text{min}})^{2N} + \frac{1}{K} \\ 
    & \text{and}, \quad \Delta := \ln(K\beta)/\beta \, , \, \alpha_{1}^{\Delta} := A \min(1,\Delta), \; \text{and} \; \alpha_{2} := \lambda^{-1} \sqrt{2\ln(2d^{1/2}K)}
\end{aligned}
\end{equation}
\end{restatable}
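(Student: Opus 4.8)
The plan is to control the failure probability $\Pr(a^\star \notin \UD)$ directly. By construction $\UD$ collects every action that is preferred at least once together with every action absent from $\Dcal_0$, so $a^\star$ can be excluded only if it appears in $\Dcal_0$ yet is never the preferred action. I would bound this failure event by a union bound over three sources, which are exactly the three summands of $f_1$: (i) the rater's estimate $\vartheta$ is far from $\theta$; (ii) $a^\star$ never appears in any of the $N$ comparisons; and (iii) $a^\star$ appears but loses every comparison it participates in, on the event that $\vartheta$ is accurate.

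For source (i), I would reuse the Dvoretzky–Kiefer–Wolfowitz argument from Lemma \ref{lemma:rater_error_upper_bound} and the proof of Theorem \ref{lemma:pspl_final_simple_regretappendix}. Since $\vartheta - \theta \sim \Ncal(0,\Ibf_d/\lambda^2)$, choosing the threshold $\alpha_2 = \lambda^{-1}\sqrt{2\ln(2d^{1/2}K)}$ gives $\Pr(\Ecal_{\theta,\vartheta}^c) \leq 1/K$ for the good event $\Ecal_{\theta,\vartheta} := \{\|\vartheta-\theta\|_\infty \leq \alpha_2\}$, matching the third summand. For source (ii), each of the two slots of a single comparison draws $a^\star$ with probability at least $\mu_{\min}$, so $a^\star$ is absent from one comparison with probability at most $(1-\mu_{\min})^2$; because the $N$ comparisons are i.i.d., $a^\star$ is absent from all of $\Dcal_0$ with probability at most $(1-\mu_{\min})^{2N}$, the second summand.

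The substantive step is source (iii): lower bounding, on $\Ecal_{\theta,\vartheta}$, the per-comparison probability that $a^\star$ is preferred, and then propagating it across the i.i.d. comparisons to obtain $\bigl(1-\tfrac{1}{1+\exp(\beta(\min(1,\Delta)+\alpha_2-\alpha_1^\Delta))}\bigr)^N$. Writing the Bradley–Terry win probability of $a^\star$ over an opponent $a$ as $\sigma(\beta\langle\phi(a^\star)-\phi(a),\vartheta\rangle)$ and applying Hölder's inequality as in \eqref{eq:rater_pref_error}, on $\Ecal_{\theta,\vartheta}$ the inner product is at least $(r_\theta(a^\star)-r_\theta(a)) - \alpha_2$. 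The difficulty is that opponents with reward gap near $0$ push this margin—and hence the win probability—toward $1/2$, so no uniform per-opponent lower bound exists. I would resolve this with the gap threshold $\Delta := \ln(K\beta)/\beta$: the at-most-$A$ arms lying within $\Delta$ of optimal are treated as a single block (this is where the factor $A$ in $\alpha_1^\Delta = A\min(1,\Delta)$ enters), while the residual margin $\min(1,\Delta)$ and the estimation slack $\alpha_2$ combine into the logistic exponent $\beta(\min(1,\Delta)+\alpha_2-\alpha_1^\Delta)$, yielding the claimed per-comparison bound.

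Assembling the three bounds by the union bound gives $\Pr(a^\star\notin\UD)\leq f_1$, equivalently $\Pr(a^\star\in\UD)\geq 1-f_1$. I expect the gap-threshold bookkeeping in source (iii)—choosing $\Delta=\ln(K\beta)/\beta$ so that the block of near-optimal arms stays small while the induced logistic margin remains large enough for the first summand to decay—to be the main obstacle, together with reconciling the $N$-th power of the loss term with the i.i.d. appearance structure; this separation-of-arms device mirrors the construction in Appendix A of \cite{agnihotri2024online}.
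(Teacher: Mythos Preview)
Your three-summand decomposition and the handling of sources (i) and (ii) match the paper. The gap is in source (iii): your proposed resolution via a deterministic ``block of at most $A$ close arms'' does not work, and it is not how the paper obtains the term $\min(1,\Delta)$ or the factor $A$.

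For a \emph{fixed} $\theta$, the per-comparison margin you isolate, $(r_\theta(a^\star)-r_\theta(a))-\alpha_2$, can be arbitrarily close to $-\alpha_2$ whenever arms cluster near the optimum; saying ``at most $A$ arms lie within $\Delta$'' is vacuously true and gives no control on how often such close opponents are sampled, so it cannot produce the exponent $\beta(\min(1,\Delta)+\alpha_2-\alpha_1^\Delta)$. The paper's resolution is Bayesian: it explicitly integrates over the \emph{prior} on $\theta$. Writing $\theta_a=\langle a,\theta\rangle$ with univariate Gaussian law $\rho$, the paper introduces the event $\Ecal_{(n)}=\{\langle A^\star-a_n,\theta\rangle\le\Delta\}$ and computes $\Pr(\Ecal_{(n)}\mid A^\star=\mathring a)$ as a ratio of Gaussian integrals $g(\theta_{\mathring a})=\Phi(\theta_{\mathring a})^{-1}\int_{\theta_{\mathring a}-\Delta}^{\theta_{\mathring a}}d\rho$; an elementary analysis then yields $g(\theta_{\mathring a})\le\min(1,\Delta)$. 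Thus $\min(1,\Delta)$ is a \emph{prior probability} that the sampled opponent is $\Delta$-close, not a combinatorial gap. The factor $A$ (equivalently $A-1$) then enters when one takes the expectation over which arm $\mathring a\in\Acal$ is optimal and decomposes on the Bernoulli event $\{X_{\theta_{\mathring a}}=0\}$, not from counting close arms. Without invoking the Gaussian prior on $\theta$ at this step, the bound cannot be closed; your outline needs to add this Bayesian integration (conditioning on $A^\star=\mathring a$, rewriting the close-opponent probability as a truncated-Gaussian ratio, and bounding its supremum) in place of the block argument.
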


\begin{proof}

We construct $\UD$ as a \emph{set} of actions that have been preferred to \emph{at least} once in the offline dataset $\Dcal_{0}$ and of actions that do not appear in the $\Dcal_{0}$. Thus, $\UD$ contains at most $A$ actions.

Now, we consider the formulation below. Recall that $\bar{A}_{n}^{(0)}$ and $\bar{A}_{n}^{(1)}$ are i.i.d. sampled from the action set and each datapoint in the dataset $\Dcal_{0}^{i}$, conditioned on $\vartheta, \beta$, is independent of $\Dcal_{0}^{j}$ for $i \neq j$. Now,

\begin{equation}
\begin{aligned}
    P \left(A^{\star} \notin \UD \right)&\leq P(A^{\star} \; \text{has lost all comparisons in} \; \Dcal_{0}) + P(A^{\star} \; \text{is not present in} \; \Dcal_{0}) \\
    &\leq \Ebb \bigg[ \prod_{n=1}^{N} \frac{\exp\big( \beta \langle a_{n}, \vartheta \rangle \big)}{\exp\big( \beta \langle a_{n}, \vartheta \rangle \big) + \exp\big( \beta \langle A^{\star}, \vartheta \rangle \big)} + (1-\mu_{\text{min}})^{2N} \bigg] \\
    &\leq \Ebb \bigg[ \prod_{n=1}^{N} \bigg( 1 - \frac{\exp\big( \beta \langle A^{\star}, \vartheta \rangle \big)}{\exp\big( \beta \langle a_{n}, \vartheta \rangle \big) + \exp\big( \beta \langle A^{\star}, \vartheta \rangle \big)}  \bigg)\bigg]  + (1-\mu_{\text{min}})^{2N} \\
    &\leq \Ebb \bigg[ \prod_{n=1}^{N} \bigg( 1 - \underbrace{\frac{1}{1 + \exp\big( - \beta \langle A^{\star} - a_{n}, \vartheta \rangle \big)}}_{\clubsuit} \bigg) \bigg]  + (1-\mu_{\text{min}})^{2N} 
\label{eq:optimalnotinu}
\end{aligned}
\end{equation} 
, where $A^{\star}$ is a function of $\theta$ and thus a random variable as well. Looking closely at the term $\clubsuit$ above, it can be written as $P(Y_{n} = A^{\star} \given \vartheta)$. We now analyze this term.

\begin{align*}
    P(Y_{n} = A^{\star} \given \vartheta) &= \frac{1}{1 + \exp \big( - \beta \langle A^{\star} - a_{n}, \vartheta \rangle \big)} \\
    &=\big( 1 + \exp\left(\beta \langle A^{\star}-a_{n}, \theta-\vartheta\rangle-\beta \langle A^{\star}-a_{n}, \theta \rangle \right) \big)^{-1} \\
    &\geq \big( 1 + \exp\left(\beta \| A^{\star}-a_{n} \|_{1} \| \theta-\vartheta \|_{\infty} - \beta \langle A^{\star}-a_{n}, \theta \rangle \right) \big)^{-1} \tag{H\"{o}lder's inequality} \\
    &\geq \big( 1 + \exp\left(\beta \| \vartheta-\theta \|_{\infty} - \beta \langle A^{\star}-a_{n}, \theta \rangle \right) \big)^{-1} \tag{$\| A^{\star}-a_{n} \|_{1} \leq 1 \forAll a_{n} \in \Acal$}
\end{align*}

Since $\vartheta-\theta\sim N(0, \Ibf_d/\lambda^2)$, using the Dvoretzky–Kiefer–Wolfowitz inequality bound \citep{massart1990tight, vershynin2010introduction} implies 
\begin{equation*}
     P\left(\|\vartheta-\theta\|_{\infty}\geq t\right)\leq 2 d^{1/2} \exp\left(-\frac{t^2\lambda^2}{2}\right)\,.
\end{equation*}
Set $t=\sqrt{2\ln(2d^{1/2}K)}/\lambda$ and define an event $\Ecal_1:=\{\|\vartheta-\theta\|_{\infty}\leq \sqrt{2\ln(2d^{1/2}K)}/\lambda\}$ such that $P(\Ecal_1^c)\leq 1/K$. We decompose Equation \eqref{eq:optimalnotinu} using Union Bound as:

\begin{equation}
\label{eqn2}
\resizebox{.85\linewidth}{!}{$
\begin{aligned}
     P\left(A^{\star} \notin \UD \right)&\leq \mathbb E\left[ \prod_{n=1}^N\left(1-P\left(Y_n=A^{\star} \Given \theta, \vartheta \right) \right) \Ibb_{\Ecal_1} \right] + P(\Ecal_1^c) + (1-\mu_{\text{min}})^{2N}  \\
     &\leq \mathbb E\left[\prod_{n=1}^N\left(1-\left(1 + \exp\left(\frac{\beta \sqrt{2\ln(2d^{1/2}K)}}{\lambda}\right) \underbrace{\exp\left(-\beta \langle A^{\star}-a_{n}, \theta \rangle \right)}_{\blacktriangle} \right)^{-1} \right) \right]+ \frac{1}{K} + (1-\mu_{\text{min}})^{2N} \,.
\end{aligned}
$}
\end{equation}

Now, we define another event $\Ecal_{(n)}:=\{ \langle A^{\star}-a_{n}, \theta \rangle \leq \Delta \}$. Based on $\Ecal_{(n)}$ we analyze the $\blacktriangle$ term as follows.

\begin{align*}
\exp\left(-\beta \langle A^{\star}-a_{n}, \theta \rangle \right) &= \Ebb \big[ \exp\left(-\beta \langle A^{\star}-a_{n}, \theta \rangle \right) \Ibb_{\Ecal_{(n)}} \big] + \Ebb \big[ \exp\left(-\beta \langle A^{\star}-a_{n}, \theta \rangle \right) \Ibb_{\Ecal_{(n)}^{c}} \big] \\
&\leq \exp\left(0 \right) P(\Ecal_{(n)}) + \exp\left(-\beta \Delta \right) P(\Ecal_{(n)}^{c}) \\
&\leq P(\Ecal_{(n)}) + (1-P(\Ecal_{(n)}))\exp\left(-\beta \Delta \right)
\end{align*}

Plugging this back in Equation \eqref{eqn2} we get,

\begin{equation}
\resizebox{.98\linewidth}{!}{$
\begin{aligned}
 P \left( A^{\star} \notin \UD \right) &\leq \Ebb \left[\prod_{n=1}^N \left(1 - \left(1 + \exp\left(\frac{\beta \sqrt{2\ln(2d^{1/2}K)}}{\lambda}\right) \left( \Ibb_{\Ecal_{(n)}} + (1-\Ibb_{\Ecal_{(n)}}) \exp\left(-\beta \Delta \right) \right) \right)^{-1} \right) \right]+ \frac{1}{K} + (1-\mu_{\text{min}})^{2N} \\
\end{aligned}
$}
\label{eq:eqn3}
\end{equation}

Note that the random variable $\Ibb_{\Ecal_{(n)}}$ depends on the action sampling distribution $\mu$. Denote the probability of sampling this action $a_{n}$ by $\mu_{n}$, and as before we have $\mu$ supported by $[\mu_{\text{min}}, \mu_{\text{max}}]$. We first analyze this for any arbitrary $n \in [N]$ and study the the distribution of $\Ibb_{\Ecal_{(n)}}$ conditionaled on $A^{\star}$. Without loss of generality, we first condition on $A^{\star}= \ring a$ for some $\ring a \in \Acal$. For that, let $\rho(\cdot)$ be the univariate Gaussian distribution and $\theta_{a} = \langle a, \theta \rangle$ for any action $a$.

\begin{equation}
\resizebox{.9\linewidth}{!}{$
\begin{aligned}
    P\left(\Ibb_{\Ecal_{(n)}} = 1 \given A^{\star} = \ring a \right) &= P\left( \Ibb \left(\langle A^{\star} - a_{n}, \theta \rangle \leq \Delta \right) = 1 \Given A^{\star} = \ring a \right) & \\
    &= \frac{1}{P\left(A^{\star} = \ring a \right)}  P\left( \Ibb \left(\langle A^{\star} - a_{n}, \theta \rangle \leq \Delta \right) = 1 \, , \, A^{\star} = \ring a \right) & \\
    &= \frac{1}{ P \left(A^{\star}= \ring a \right)} P \left( \Ibb \left(\theta_{a_{n}} \geq \theta_{\ring a} - \Delta \right) = 1 \, , \, \bigcap_{a \in \Acal} \{\theta_{\ring a} \geq \theta_a \} \right) & \\
    &=\frac{1}{ P\left(A^{\star} = \ring a \right)} \bigintssss_{\Rbb} \left[\int_{\theta_{\ring a} - \Delta}^{\infty} d \rho(\theta) \right] d \rho(\theta_{\ring a}) & \\
    &=\frac{1}{ P\left(A^{\star} = \ring a \right)} \bigintsss_{\Rbb} \left[\int_{\theta_{\ring a} - \Delta}^{\theta_{\ring a}} d \rho(\theta) \right] d \rho(\theta_{\ring a}) \, & (\text{since} \, \theta_{a} \leq \theta_{A^{\star}} \forAll a)
\label{eqn:B_dis}
\end{aligned}
$}
\end{equation}

Noticing that the term inside the integral can be represented as a distribution, we first find a normalizing constant to represent the probabilities. So, define

$$
\Phi(\theta_{\ring a}) =\int_{-\infty}^{\theta_{\ring a}} (2\pi)^{-1/2} \exp(-x^2/2) \; d x \; \qquad ; \qquad g(\theta_{\ring a}) = \frac{1}{\Phi(\theta_{\ring a})} \int_{\theta_{\ring a} - \Delta}^{\theta_{\ring a}} d \rho(\theta) \; .
$$

For fixed $\theta_{\ring a}$, let $X_{\theta_{\ring a}} \sim \text{Bernoulli}(1, g(\theta_{\ring a}))$. With Eq. \eqref{eqn:B_dis} and letting $d \mu(\theta_{\ring a}) = \frac{\Phi(\theta_{\ring a})}{ P(A^{\star} = a_{\ring a})} d \rho(\theta_{\ring a})$ we have,

\begin{equation}
\label{eqn:B_prob}
\resizebox{.75\linewidth}{!}{$
     P\left(\Ibb_{\Ecal_{(n)}} = 1 \given A^{\star} = \ring a \right) = \int_{\Rbb} P(X_{\theta_{\ring a}}=1) \frac{\Phi(\theta_{\ring a})} { P(A^{\star} = \ring a)} d \rho(\theta_{\ring a})=\int_{\Rbb} P(X_{\theta_{\ring a}}=1) \; d \mu(\theta_{\ring a}) \; .
$}
\end{equation}

Plugging this back in Equation \eqref{eq:eqn3} and upper bounding the probabilities we get,

\begin{equation}
\resizebox{.98\linewidth}{!}{$
\begin{aligned}
 P \left( A^{\star} \notin \UD \right) &\leq \sum_{a \in \Acal} 
 \bigintssss_{\Rbb} P(X_{\theta_{a}}=0) d \mu(\theta_a) P\left(A^{\star} = a \right) \left(1 - \left(1 + \exp\left( \beta \left( \lambda^{-1} \sqrt{2\ln(2d^{1/2}K)} - \Delta \right) \right) \right)^{-1} \right)^{N} \cdot \mu_{\text{max}}^{N} \\ & \qquad + \frac{1}{K} + (1-\mu_{\text{min}})^{2N} \\
 &\leq  \sum_{a \in \Acal} \bigintssss_{\Rbb} P(X_{\theta_{a}}=0) \left(1 - \left(1 + \exp\left( \beta \left( \lambda^{-1} \sqrt{2\ln(2d^{1/2}K)} - \Delta \right) \right) \right)^{-1} \right)^{N} \cdot \mu_{\text{max}}^{N} \; d \mu(\theta_a) P\left(A^{\star} = a \right)  \\ & \qquad + \frac{1}{K} + (1-\mu_{\text{min}})^{2N} \\
 &\leq  \bigintssss_{\Rbb} \E{\ring a \, \in \, \Acal} \left[
 \left(1 - \left(1 + \exp\left( \beta \left( \lambda^{-1} \sqrt{2\ln(2d^{1/2}K)} - (1 - X_{\theta_{\ring a}})\Delta \right) \right) \right)^{-1} \right)^{N} \right] \cdot \mu_{\text{max}}^{N} d \mu(\theta_{\ring a})  + \frac{1}{K} + (1-\mu_{\text{min}})^{2N} \, ,
 \label{eq:eq02}
\end{aligned}
$}
\end{equation}

where $\mu_{\text{max}}$ is used to obtain the exponent $N$ by accounting for the sampling distribution $\mu$, and last step follows from the uniformity of each action being optimal. Finally, we need to find the supremum of $g(\theta_{\ring a})$ and hence Equation \eqref{eq:eq02}. Recall that,

$$
g(\theta_{\ring a}) = \frac{1}{\Phi(\theta_{\ring a})} \int_{\theta_{\ring a} - \Delta}^{\theta_{\ring a}} d \rho(\theta) = \frac{\int_{\theta_{\ring a} - \Delta}^{\theta_{\ring a}} d \rho(\theta)}{\int_{-\infty}^{\theta_{\ring a}} d \rho(\theta)} = \frac{\int_{-\infty}^{\theta_{\ring a}} d \rho(\theta) - \int_{-\infty}^{\theta_{\ring a} - \Delta} d \rho(\theta)}{\int_{-\infty}^{\theta_{\ring a}} d \rho(\theta)} = 1 - h_{\Delta}(\ring a) \;
$$

, where $h_{\Delta}(\ring a) := \frac{\int_{-\infty}^{\theta_{\ring a} - \Delta} d \rho(\theta)}{\int_{-\infty}^{\theta_{\ring a}} d \rho(\theta)}$. Setting $\nabla_{\ring a} h_{\Delta}(\ring a) = 0$ and analyzing $\nabla_{\ring a}^{2} h_{\Delta}(\ring a) > 0$, we find that 

\begin{equation}
    g(\theta_{\ring a}) \leq 1 - \Delta \exp \left( - \frac{(2\theta_{\ring a} - \Delta)\Delta}{2} \right) \leq \min(1, \Delta)  \; .
\end{equation}

Finally we, decompose Equation \eqref{eq:eq02} based on the event $\Ecal_{2} := \{ X_{\theta_{\ring a}} = 0 \}$, and upper bound the probability to simplify. Setting $\Delta = \ln(K\beta) / \beta$, we conclude with the following bound:

\begin{equation}
\resizebox{.98\linewidth}{!}{$
\begin{aligned}
 P \left( A^{\star} \notin \UD \right) &\leq  \left(1 - \left(1 + \exp\left( \beta \left( \lambda^{-1} \sqrt{2\ln(2d^{1/2}K)} - (A-1)\min(1, \ln(K\beta) / \beta) \right) \right) \right)^{-1} \right)^{N} + \frac{1}{K} + (1-\mu_{\text{min}})^{2N} 
 \label{eq:eq03}
\end{aligned}
$}
\end{equation}

\end{proof}

\begin{restatable}{theorem}{banditregretbound}
\label{th:bandit_final_simple_regret}
For any confidence $\delta_{1} \in (0,\frac{1}{3})$ and offline preference dataset size $N>2$, the simple Bayesian regret of the learner $\Upsilon$ is upper bounded with probability of at least $1-3\delta_{1}$ by, 
    \begin{equation}
    \resizebox{0.6\linewidth}{!}{$
    \begin{aligned}
        \SR_{K}^{\Upsilon}(\pi^{\star}_{K+1}, \pi^{\star}) & \leq \sqrt{ \frac{20 \delta_{2}  A \ln \left( \frac{2KA}{\delta_{1}}  \right)}{2K \left(1 + \ln \frac{A}{\delta_{1}} \right) - \ln \frac{A}{\delta_{1}}} } \quad \text{with} , \\
        \; \gamma_{\beta, \lambda, N} &= \exp\left(- \beta B \sqrt{2\ln(2d^{1/2}N)}/\lambda - \beta \Delta_{\min} \right)  + \frac{1}{N}  \;\; \text{, and} \\ 
        \; \delta_{2} &= 2 \exp \left( - N \left( 1 + \gamma_{\beta, \lambda, N} \right)^{2} \right) + \exp \left( - \frac{N}{4} (1 - \gamma_{\beta, \lambda, N})^{3} \right).
    \end{aligned}
    $}
    \end{equation}
\end{restatable}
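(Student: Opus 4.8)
The plan is to obtain the bandit bound as the single-state, single-step specialization ($S=1$, $H=1$) of the MDP analysis in \Cref{lemma:pspl_final_simple_regretappendix}, after checking that its two inputs — the rater error bound and the high-probability optimality of the offline estimate — carry over to the bandit language. A bandit is exactly an MDP with one state and horizon one: a trajectory degenerates to a single arm pull $\tau=(a)$ with $\phi(\tau)=a$, the trajectory score becomes $g_{\beta,\vartheta}(a)=\beta\langle a,\vartheta\rangle$, and the comparison model \eqref{eq:pref_logistic_trajectories} becomes the arm-comparison model. The optimal policy is the one that plays $a^\star=\argmax_a r_\theta(a)$, and $\hat\pi^\star$ is the estimate $\hat a^\star$ of the best arm.

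First I would re-run the proof of \Cref{lemma:rater_error_upper_bound} verbatim, with $\Delta_{\min}=\min_{n\in[N]}|r_\theta(\bar a_n^{(0)})-r_\theta(\bar a_n^{(1)})|$ and $B$ bounding $\|\phi(\cdot)\|_1$. The Hölder step and the Dvoretzky–Kiefer–Wolfowitz tail bound on $\|\vartheta-\theta\|_\infty$ are unaffected by the dimension of the state space, so they yield $\Pr(\Ecal_n\given\beta,\vartheta)\le\gamma_{\beta,\lambda,N}$ with exactly the stated $\gamma_{\beta,\lambda,N}$.

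Next I would form $\hat a^\star$ by maximizing the net counts $c(a)=w(a)-l(a)$ over the information set $\UD$ (arms preferred at least once, together with arms absent from $\Dcal_0$), as in \Cref{th:bandit_information_set}, and then run the four-event case analysis from the proof of \Cref{lemma:pspl_final_simple_regretappendix}. The key simplification is that with $S=1$ the single state is always reachable, so $p^{\pi^\star}(s)>0$ always and only the events $E_1$ and $E_2$ can occur; the unreachable-state events $E_3,E_4$ are vacuous. Writing $c(a)+N\sim\mathrm{Bin}(2N,1-\gamma_{\beta,\lambda,N})$ and applying the same binomial/Hoeffding tail estimates gives $\Pr(\Tcal^c\given E_1)\le\exp(-N(1+\gamma_{\beta,\lambda,N})^2)$ and $\Pr(\Tcal^c\given E_2)\le\exp(-\tfrac{N}{4}(1-\gamma_{\beta,\lambda,N})^3)$; retaining the factor of two on the first term (the conservative form inherited from the MDP bound, where $E_1$ and $E_3$ each contribute) yields $\Pr(\hat a^\star\ne a^\star)\le\delta_2$, i.e. an $\varepsilon=0$-optimal estimate with probability at least $1-\delta_2$. \Cref{th:bandit_information_set} complements this by certifying $a^\star\in\UD$ with high probability, so restricting the maximization to $\UD$ does not exclude the optimum.

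Finally I would substitute $S=1$, $H=1$ into \Cref{lemma:pspl_final_simple_regretappendix}, whose $\delta_2$ and $\gamma_{\beta,\lambda,N}$ coincide with those just derived, collapsing the bound to the claimed $\sqrt{20\delta_2 A\ln(2KA/\delta_1)/(2K(1+\ln(A/\delta_1))-\ln(A/\delta_1))}$. The main obstacle is not conceptual but a matter of faithful translation: in the MDP, the online concentration is phrased through transition-estimation error (\Cref{lemma:transition_est_error}), whereas in a bandit the relevant uncertainty is in the arm rewards learned from preference counts. I would verify that the count-based concentration over arm pulls has the same functional dependence on $A$, $K$, and $\delta_1$, so that the degeneration to a single state preserves the bound form; this, together with confirming that discarding the vacuous events $E_3,E_4$ while keeping the factor of two in $\delta_2$ remains a valid upper bound, is where the care is needed.
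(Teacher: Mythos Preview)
Your proposal is correct and matches the paper's approach: the paper presents the bandit result as the $S=1$, $H=1$ specialization of the MDP analysis in \Cref{lemma:pspl_final_simple_regretappendix} (indeed, the paper does not give a separate explicit proof for \Cref{th:bandit_final_simple_regret}, only the statement and the remark that bandits are a special case). Your identification of the one subtle point---that the online concentration in the MDP proof goes through transition-estimation error, which degenerates in the bandit case and must be replaced by reward/count concentration over arms with the same $A,K,\delta_1$ dependence---is exactly the care needed, and retaining the conservative factor of two in $\delta_2$ is a valid upper bound even though $E_3,E_4$ are vacuous.
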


For a fixed $N>2$, and large $A$, the simple regret bound is $\widetilde{\Ocal}\left( \sqrt{AK^{-1}} \right)$. Note that this bound converges to zero exponentially fast as $N \to \infty$ and as the rater tends to an expert (large $\beta, \lambda$). In addition, as the number of online episodes $K$ gets large, $\PSPL$ is able to identify the best policy (arm) with probability at least $(1-3\delta_{1})$.

\subsection{Constructing Surrogate Loss Function}
\label{appendix:surrogatelossfunction}

\begin{restatable}{lemma}{mapestimatelemmaappendix}
\label{th:mapestimatelemmaappendix} 
At episode $k$, the MAP estimate of $(\theta, \vartheta, \eta)$ can be constructed by solving the following equivalent optimization problem: 
\begin{equation}
\begin{aligned}
(\theta_{opt}, \vartheta_{opt}, \eta_{opt})  &= \underset{\theta, \vartheta, \eta}{\argmax} \; \Pr(\theta, \vartheta, \eta \, | \, \Dcal_{k}) \\ & \equiv \underset{\theta, \vartheta, \eta}{\argmin} \; \Lcal_{1}(\theta, \vartheta, \eta) +  \Lcal_{2}(\theta, \vartheta, \eta) +  \Lcal_{3}(\theta, \vartheta, \eta) \; , \text{where},  \\
 \Lcal_{1}(\theta, \vartheta, \eta)  &:= - \sum_{t=1}^{k-1} \vphantom{\int_1^2} \left[ \beta \langle {\tau}_t^{(Y_{t})} , \vartheta \rangle - \ln \bigg(e^{ \beta \langle {\tau}_t^{(0)}, \vartheta \rangle} + e^{\beta \langle {\tau}_t^{(1)}, \vartheta \rangle} \bigg) \right. \\ & \left. \qquad + \sum_{j=0}^{1} \sum_{h=1}^{H-1} \ln \Pr_{\eta}\left({s}_{t,h+1}^{(j)} \given {s}_{t,h}^{(j)}, {a}_{t,h}^{(j)} \right) \vphantom{\int_1^2} \right], \\
 \Lcal_{2}(\theta, \vartheta, \eta) &:= - \sum_{n=1}^{N} \left[ \beta \langle \bar{\tau}_n^{(\bar{Y}_{n})} , \vartheta \rangle - \ln \bigg(e^{ \beta \langle \bar{\tau}_n^{(0)}, \vartheta \rangle} + e^{\beta \langle \bar{\tau}_n^{(1)}, \vartheta \rangle} \bigg) \right],  \\
 \Lcal_{3}(\theta, \vartheta, \eta) &:= \frac{\lambda^2}{2} \norm{\theta - \vartheta}{2}{2} - SA \sum_{i=1}^{S} (\bm{\alpha}_{0,i} - 1) \ln \eta_{i} \\ & \qquad + \frac{1}{2} (\theta - \mu_{0})^{T} \Sigma_{0}^{-1} (\theta - \mu_{0}).
\end{aligned}
\label{eq:mapestimateproblemappendix}
\end{equation}
\end{restatable}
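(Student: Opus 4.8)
The plan is to start from Bayes' rule and reduce the MAP problem to a negative-log-posterior minimization, then match each resulting summand against $\Lcal_1$, $\Lcal_2$, $\Lcal_3$. Since $\ln$ is strictly increasing and $\Pr(\theta,\vartheta,\eta\mid\Dcal_k)\propto\Pr(\Dcal_k\mid\theta,\vartheta,\eta)\,\Pr(\theta,\vartheta,\eta)$, the $\Dcal_k$-only normalizing constant is irrelevant to the optimization and we get $\argmax_{\theta,\vartheta,\eta}\Pr(\theta,\vartheta,\eta\mid\Dcal_k)=\argmin_{\theta,\vartheta,\eta}\big[-\ln\Pr(\Dcal_k\mid\theta,\vartheta,\eta)-\ln\Pr(\theta,\vartheta,\eta)\big]$. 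The whole proof is then a bookkeeping exercise that identifies the three loss blocks with the prior, the online likelihood, and the offline likelihood.

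First I would factor the prior. Assuming $\eta$ is a priori independent of $(\theta,\vartheta)$ and using the hierarchical structure $\vartheta\mid\theta\sim\Ncal(\theta,\Ibf_d/\lambda^2)$ together with $\theta\sim\Ncal(\mu_0,\Sigma_0)$ and the per-state-action Dirichlet prior $\eta\sim\mathrm{Dir}(\bm{\alpha}_0)$, the negative log prior equals, up to additive constants, $\tfrac{\lambda^2}{2}\norm{\theta-\vartheta}{2}{2}+\tfrac12(\theta-\mu_0)^{\top}\Sigma_0^{-1}(\theta-\mu_0)-SA\sum_{i=1}^S(\bm{\alpha}_{0,i}-1)\ln\eta_i$, which is exactly $\Lcal_3$. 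The Gaussian link term is precisely what couples the latent rater estimate $\vartheta$ to the true reward $\theta$, and the $SA$ factor accounts for there being one Dirichlet factor per state-action pair.

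Next I would factor the likelihood over the offline block $\Dcal_0$ and the online block $\Hcal_k$, and within each block condition on the realized trajectory pairs. For each comparison the label follows the Bradley-Terry model \eqref{eq:pref_logistic_trajectories}; rewriting the logistic probability in softmax form gives $\ln\Pr(Y=y\mid\tau_0,\tau_1;\vartheta)=\beta\langle\phi(\tau_y),\vartheta\rangle-\ln\big(e^{\beta\langle\phi(\tau_0),\vartheta\rangle}+e^{\beta\langle\phi(\tau_1),\vartheta\rangle}\big)$, whose negation, summed over the $k-1$ online and the $N$ offline comparisons, produces the preference terms of $\Lcal_1$ and $\Lcal_2$ respectively (with the paper's overloaded notation $\tau\equiv\phi(\tau)$). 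The transition observations contribute $\sum_{j}\sum_h\ln\Pr_\eta(s_{h+1}^{(j)}\mid s_h^{(j)},a_h^{(j)})$ per online episode, whose negation is the dynamics term of $\Lcal_1$; these are the only data factors carrying $\eta$-dependence.

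The main obstacle is the careful accounting of factors that must be discarded as constants with respect to $(\theta,\vartheta,\eta)$. Any trajectory likelihood factorizes as $\rho(s_1)\prod_h\pi(a_h\mid s_h)\,\Pr_\eta(s_{h+1}\mid s_h,a_h)$; once the realized trajectories are conditioned upon, the initial-state and policy factors do not depend on the optimized parameters, and—crucially for the offline block—the unknown behavioral policy that generated $\Dcal_0$ introduces terms $\Pr(\bar\tau_n^{(j)}\mid\cdot)$ about which we make no assumptions and which we therefore treat as constant, exactly as noted beneath \eqref{eq:eta_informed_prior}. Establishing that these factors, together with the conditional independence of each label $Y$ from the dynamics given its trajectory pair, drop out of the $\argmin$ is the delicate part; once it is done, collecting the surviving terms yields $-\ln\Pr(\theta,\vartheta,\eta\mid\Dcal_k)=\Lcal_1+\Lcal_2+\Lcal_3+\mathrm{const}$, and the claimed equivalence follows immediately.
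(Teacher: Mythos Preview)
Your proposal is correct and follows essentially the same route as the paper: apply Bayes' rule, pass to the negative log, split $\Dcal_k$ into the online block $\Hcal_k$ and the offline block $\Dcal_0$, expand each likelihood using the Bradley--Terry label model and the transition factorization, and read off $\Lcal_1,\Lcal_2,\Lcal_3$ after dropping the policy/initial-state/behavioral factors as constants. If anything, you are slightly more explicit than the paper about why the offline trajectory-generation terms are treated as constant and why the labels are conditionally independent of $\eta$.
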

\begin{proof}
\label{proof:mapestimatelemma}

We first analyze the posterior distribution of $\vartheta, \theta, \eta$ given the dataset $\Dcal_{k}$ at the beginning of episode $k$, and then optimize it by treating these random variables as parameters.

\begin{equation}
\label{eq:map_surrogate}
\begin{aligned}
    \underset{\theta, \vartheta, \eta}{\argmax} \; \Pr(\theta, \vartheta, \eta \, | \, \Dcal_{k}) &= \underset{\theta, \vartheta, \eta}{\argmax} \; \Pr( \Dcal_{k} \, | \, \theta, \vartheta, \eta) \cdot \Pr(\theta, \vartheta, \eta) \\ 
    &= \underset{\theta, \vartheta, \eta}{\argmax} \; \ln \Pr(\Dcal_{k} \, | \, \theta, \vartheta, \eta) + \ln \Pr(\theta, \vartheta, \eta) \\ 
    &= \underset{\theta, \vartheta, \eta}{\argmax} \underbrace{\ln \Pr(\Hcal_{k} \,| \, \Dcal_{0}, \theta, \vartheta, \eta)}_{\Lcal_{1}} + \underbrace{\ln \Pr(\Dcal_{0} \, | \, \theta, \vartheta, \eta)}_{\Lcal_{2}} +  \underbrace{\ln \Pr(\theta, \vartheta, \eta)}_{\Lcal_{3}}
\end{aligned}
\end{equation}

Then,
\begin{equation}
\resizebox{0.82\linewidth}{!}{$
    \begin{aligned}
        \Lcal_{1} &= \sum_{t=1}^{k-1} \ln \Pr \bigg( \left({\tau}_t^{(0)}, {\tau}_t^{(1)}, Y_t\right) \, \big| \, \Dcal_{t}, \theta, \vartheta, \eta \bigg) \\
        &= \sum_{t=1}^{k-1} \ln \Pr \bigg (Y_t \, \big | \, {\tau}_t^{(0)}, {\tau}_t^{(1)}, \theta, \vartheta, \eta \bigg) + \ln \Pr \bigg( {\tau}_t^{(0)}, {\tau}_t^{(1)} \Given \Dcal_{t}, \theta, \vartheta, \eta \bigg) \\
        &= \sum_{t=1}^{k-1} \left[ \beta \langle {\tau}_t^{(Y_{t})} , \vartheta \rangle - \ln \bigg(e^{ \beta \langle {\tau}_t^{(0)}, \vartheta \rangle} + e^{\beta \langle {\tau}_t^{(1)}, \vartheta \rangle} \bigg) + \sum_{j=0}^{1} \sum_{h=1}^{H-1} \ln \Pr_{\eta} \left({s}_{t,h+1}^{(j)} \given {s}_{t,h}^{(j)}, {a}_{t,h}^{(j)} \right) \right] \\
        \Lcal_{2} &= \sum_{n=1}^{N} \ln \Pr \bigg( \left(\bar{\tau}_n^{(0)}, \bar{\tau}_n^{(1)}, \bar{Y}_n\right) \, \big| \, \theta, \vartheta, \eta \bigg) \\
        &= \sum_{n=1}^{N} \ln \Pr \bigg (\bar{Y}_n \, \big | \, \bar{\tau}_n^{(0)}, \bar{\tau}_n^{(1)}, \theta, \vartheta, \eta \bigg) + \underbrace{\ln \Pr \bigg( \bar{\tau}_n^{(0)}, \bar{\tau}_n^{(1)} \Given \theta, \vartheta, \eta \bigg)}_{\textcolor{gray}{\textbf{indep. of $\theta, \vartheta, \eta \implies$ constant}}} \\
        &= \sum_{n=1}^{N} \beta \langle \bar{\tau}_n^{(\bar{Y}_{n})} , \vartheta \rangle - \ln \bigg(e^{ \beta \langle \bar{\tau}_n^{(0)}, \vartheta \rangle} + e^{\beta \langle \bar{\tau}_n^{(1)}, \vartheta \rangle} \bigg) + \textcolor{gray}{\text{constant}} \\
        \Lcal_{3} &= \ln \Pr(\vartheta \, | \, \theta) + \ln \Pr(\theta) + \ln \Pr(\eta) \\
        &= \frac{d}{2} \ln \bigg(\frac{2\pi}{\lambda^2} \bigg) - \frac{\lambda^2}{2} \norm{\theta - \vartheta}{2}{2} - \frac{1}{2} \ln \big(|2\pi \Sigma_{0}| \big) - \frac{1}{2} (\theta - \mu_{0})^{T} \Sigma_{0}^{-1} (\theta - \mu_{0}) + SA \sum_{i=1}^{S} (\bm{\alpha}_{0,i} - 1) \ln \eta_{i}.
    \end{aligned}
$}
\end{equation}

Hence, final surrogate loss function is

\begin{equation}
\label{eq:surrogate_loss_function}
\resizebox{0.83\linewidth}{!}{$
\begin{aligned}
    \Lcal(\theta, \vartheta, \eta) &= \Lcal_{1}(\theta, \vartheta, \eta) +  \Lcal_{2}(\theta, \vartheta, \eta) +  \Lcal_{3}(\theta, \vartheta, \eta), \qquad \text{where} \\
    \Lcal_{1}(\theta, \vartheta, \eta) &= - \sum_{t=1}^{k-1} \left[ \beta \langle {\tau}_t^{(Y_{t})} , \vartheta \rangle - \ln \bigg(e^{ \beta \langle {\tau}_t^{(0)}, \vartheta \rangle} + e^{\beta \langle {\tau}_t^{(1)}, \vartheta \rangle} \bigg) + \sum_{j=0}^{1} \sum_{h=1}^{H-1} \ln \Pr_{\eta} \left({s}_{t,h+1}^{(j)} \given {s}_{t,h}^{(j)}, {a}_{t,h}^{(j)} \right) \right] \, , \\ 
    \Lcal_{2}(\theta, \vartheta, \eta) &= - \sum_{n=1}^{N} \left[ \beta \langle \bar{\tau}_n^{(\bar{Y}_{n})} , \vartheta \rangle - \ln \bigg(e^{ \beta \langle \bar{\tau}_n^{(0)}, \vartheta \rangle} + e^{\beta \langle \bar{\tau}_n^{(1)}, \vartheta \rangle} \bigg) \right] \, ,\\
    \Lcal_{3}(\theta, \vartheta, \eta) &= \frac{\lambda^2}{2} \norm{\theta - \vartheta}{2}{2} + \frac{1}{2} (\theta - \mu_{0})^{T} \Sigma_{0}^{-1} (\theta - \mu_{0}) - SA \sum_{i=1}^{S} (\bm{\alpha}_{0,i} - 1) \ln \eta_{i}.
\end{aligned}
$}
\end{equation}

Finally the problem becomes equivalent as follows:
\begin{equation}
\label{eq:surrogate_opt_problem}
(\theta_{opt}, \vartheta_{opt}, \eta_{opt}) = \underset{\theta, \vartheta, \eta}{\argmax} \; \Pr(\theta, \vartheta, \eta \, | \, \Dcal_{k}) \equiv \underset{\theta, \vartheta, \eta}{\argmin} \; \Lcal(\theta, \vartheta, \eta)
\end{equation}

\end{proof}

\subsection{Practical PSPL (cont.)}

\subsubsection{Estimating Rater Competence in Practice}
\label{sec:appendix_practical_pspl}

There are two main methods of estimating rater competence in practice:
\begin{enumerate}
    \item Based on maximum likelihood estimation (MLE). Similar idea has been proposed to estimate the expertise level in imitation learning \cite{beliaev2022imitation, beliaev2025inverse}.
    \item The second method is to simply look at the entropy of the empirical distribution of the action in the offline dataset. Suppose the empirical distribution of $\zeta$. Then we use $c/\mathcal{H}(\zeta)$ as an estimation for $\beta$, where $c > 0$ is a hyperparameter. The intuition is that for smaller $\beta$, the net state-action pair visit counts tend to be more uniform and thus the entropy will be larger. This is an unsupervised approach and agnostic to specific offline data generation processes. The knowledgeability $\lambda$ is not quite ‘estimable’ because for a single environment, even though we know the true environment $\theta$ and the expert’s knowledge $\vartheta$, we only have one pair of observations generated with the same $\vartheta$. Thus, the variance of the estimation for $\lambda$ could be infinite. However, exact estimation of $\lambda$ is not necessary as we show that the algorithm is robust to misspecified $\lambda$ through experiments in Section \ref{sec:experiments}.
\end{enumerate}

\subsubsection{Ablation Studies}
\label{sec:appendix_ablation}

The  Bootstrapped $\PSPL$ algorithm in Section \ref{sec:practical_approx_pspl} requires a knowledge of rater's parameters $\lambda, \beta$. We study the sensitivity of the algorithm's performance to mis-specification of these parameters. 

(i) \textbf{Different Preference Generation Expert Policy.} Though the learning agent assumes Equation \eqref{eq:pref_logistic_trajectories} as the expert's generative model, we consider it to use a deterministic greedy policy. Trajectories $\bar{\tau}_{n}^{(0)}$ and $\bar{\tau}_{n}^{(1)}$ are sampled, and then choose $\bar{Y}_{n} = \argmax_{i \in \{0,1\}} \beta \langle \bar{\tau}_{n}^{(i)} , \vartheta \rangle$, where $\vartheta\sim\Ncal(\theta, \Ibf_d / \lambda^2)$. In Figure \ref{fig:more_ablation}, we see that even when the learning agent's assumption of the rater policy is \textit{flawed}, $\PSPL$ significantly outperforms the baselines.
\\
(ii) \textbf{Misspecified Competence parameters.} First, we generate offline data with true $\lambda=10^{3}$ but $\PSPL$ uses a misspecified $\lambda$. Second, we generate offline data with true $\beta=10$ but $\PSPL$ uses a misspecified $\beta$. Figure \ref{fig:more_ablation} shows that although the performance of $\PSPL$ decreases as the degree of flawness increases, it still outperforms the baselines.      

\begin{figure*}
\vspace{-0.2cm}
\begin{minipage}{\textwidth}
\begin{tcolorbox}[width=.495\textwidth, nobeforeafter, coltitle = black, fonttitle=\fontfamily{lmss}\selectfont, title=RiverSwim, halign title=flush center, colback=backg_blue!5, colframe=darkgreen!15, boxrule=2pt, grow to left by=-0.5mm, left=-15pt, right=-15pt]
    \centering
    {
        \includegraphics[height=0.35\textwidth, width=0.4\textwidth]{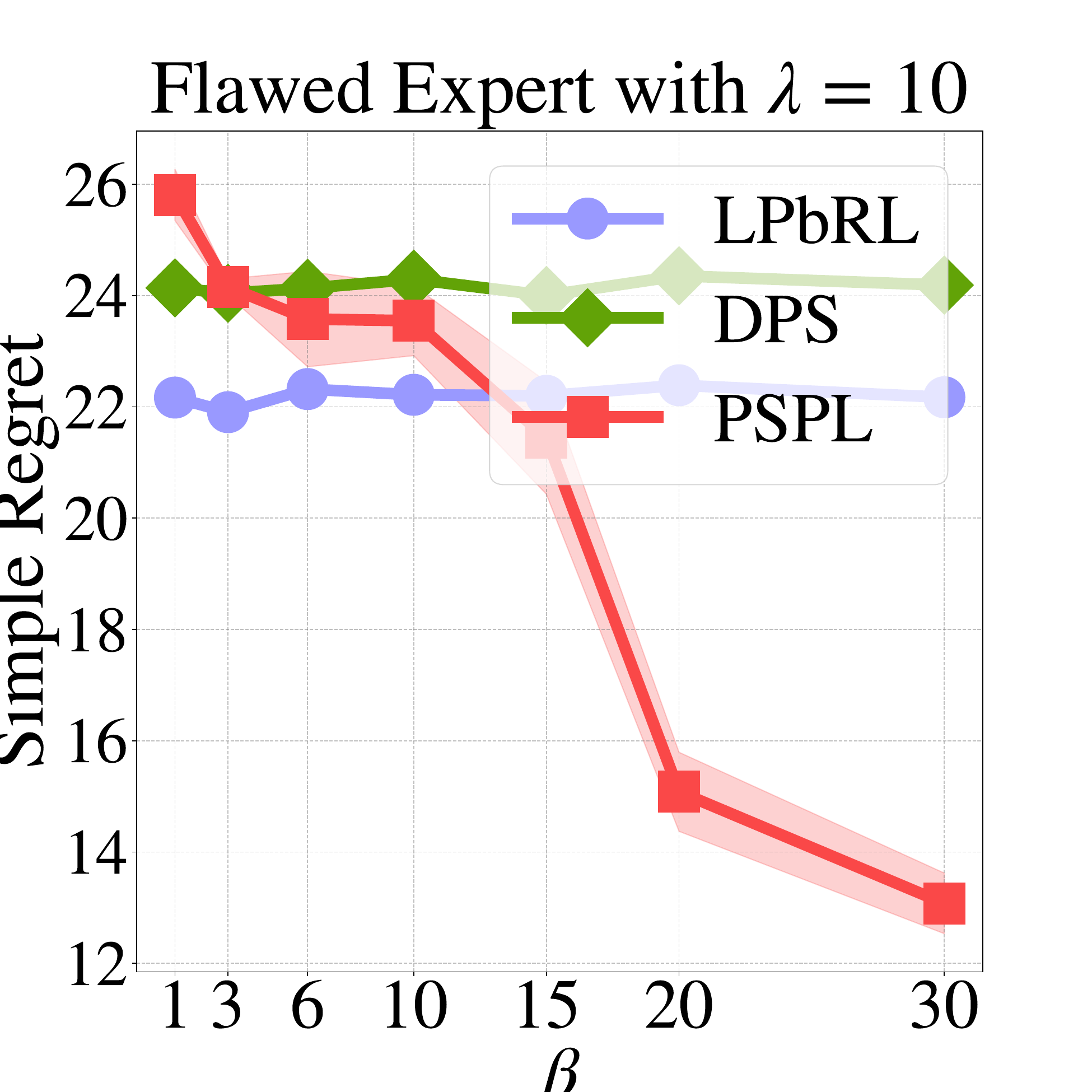}
    }
    {
        \includegraphics[height=0.35\textwidth, width=0.4\textwidth]{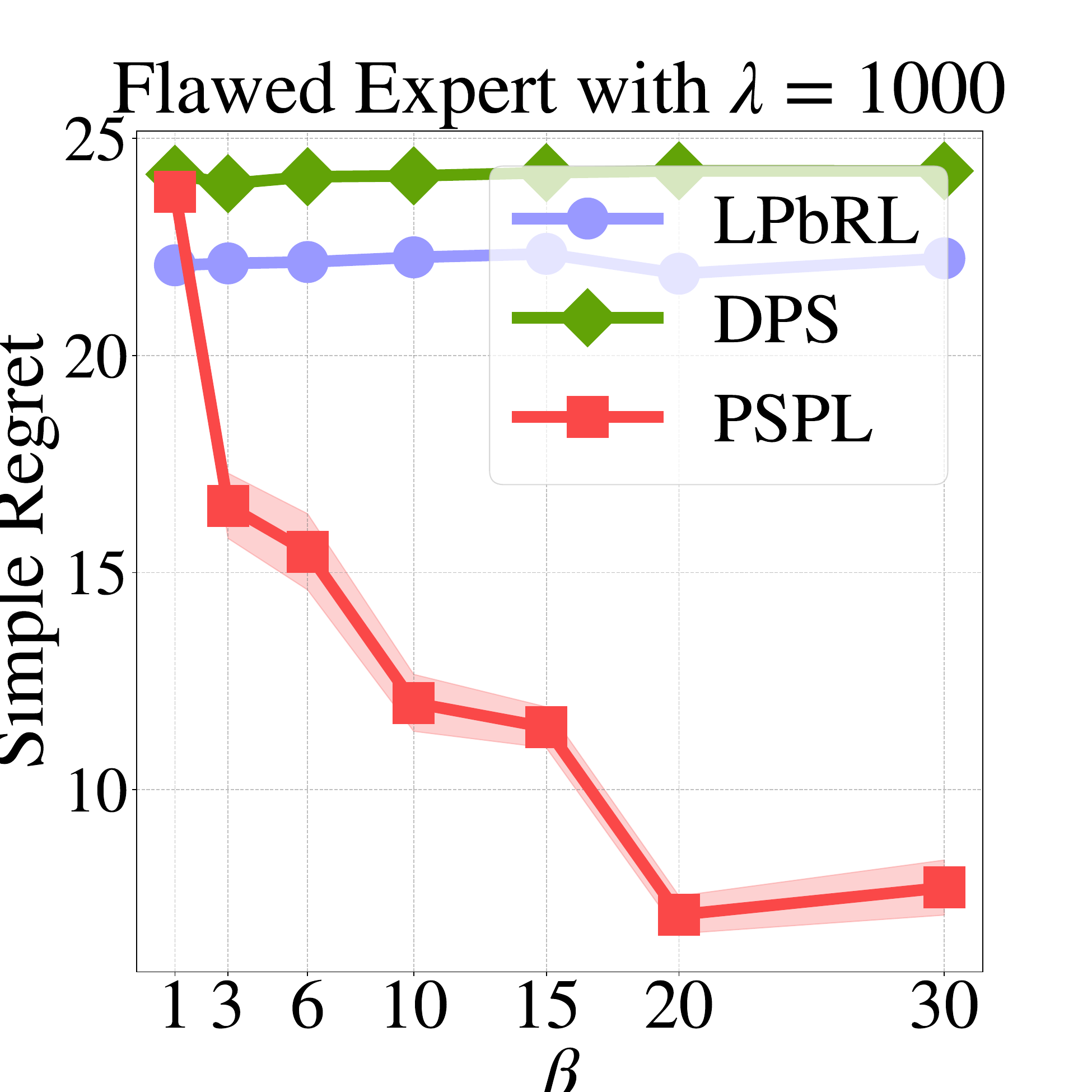}
    } 
\end{tcolorbox}  \hfill
\begin{tcolorbox}[width=.495\textwidth, nobeforeafter, coltitle = black, fonttitle=\fontfamily{lmss}\selectfont, title=RiverSwim, halign title=flush center, colback=backg_blue!5, colframe=purple!10, boxrule=2pt, grow to left by=-0.5mm, left=-15pt, right=-15pt]
    \centering
    {
        \includegraphics[height=0.35\textwidth, width=0.4\textwidth]{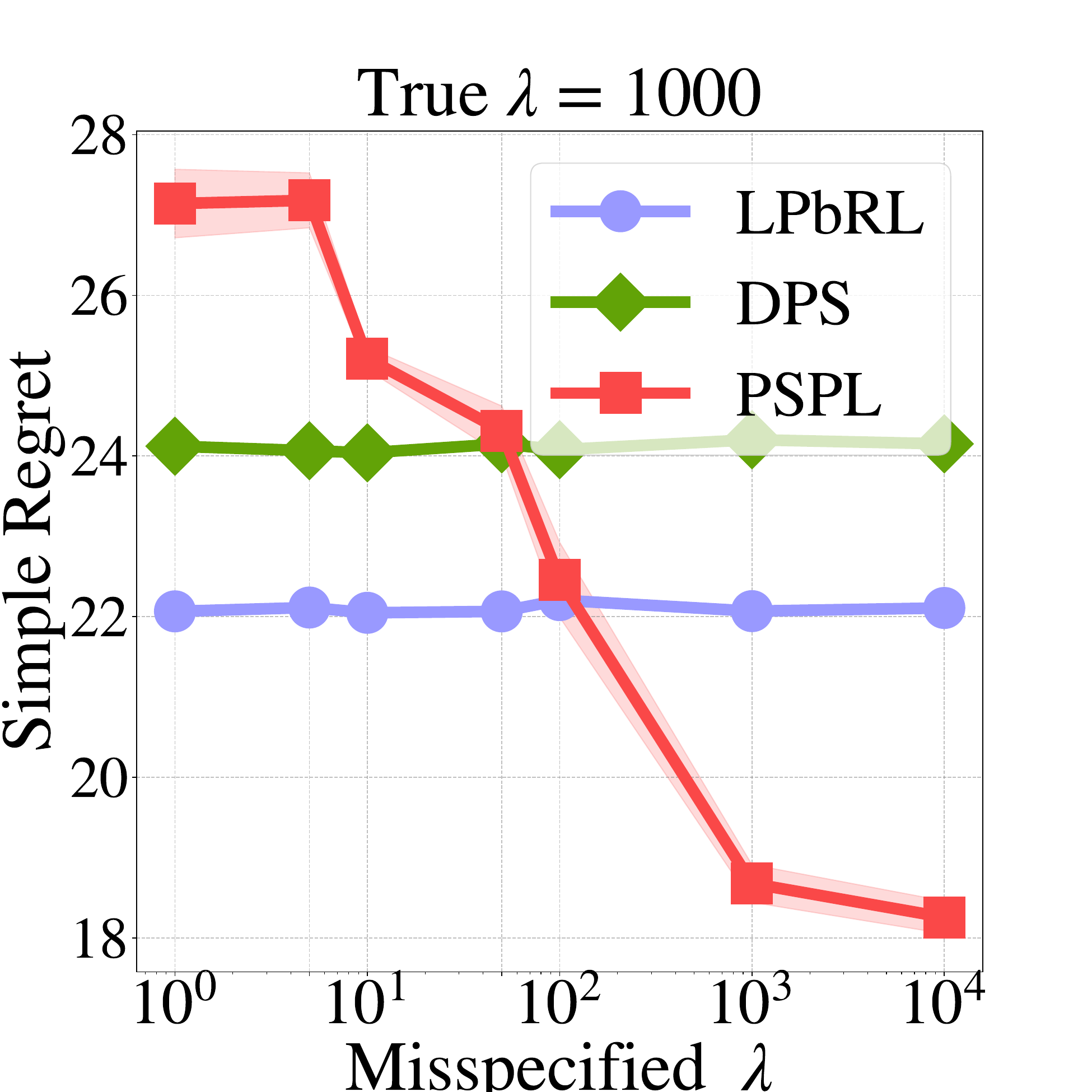}
    } 
    {
        \includegraphics[height=0.35\textwidth, width=0.4\textwidth]{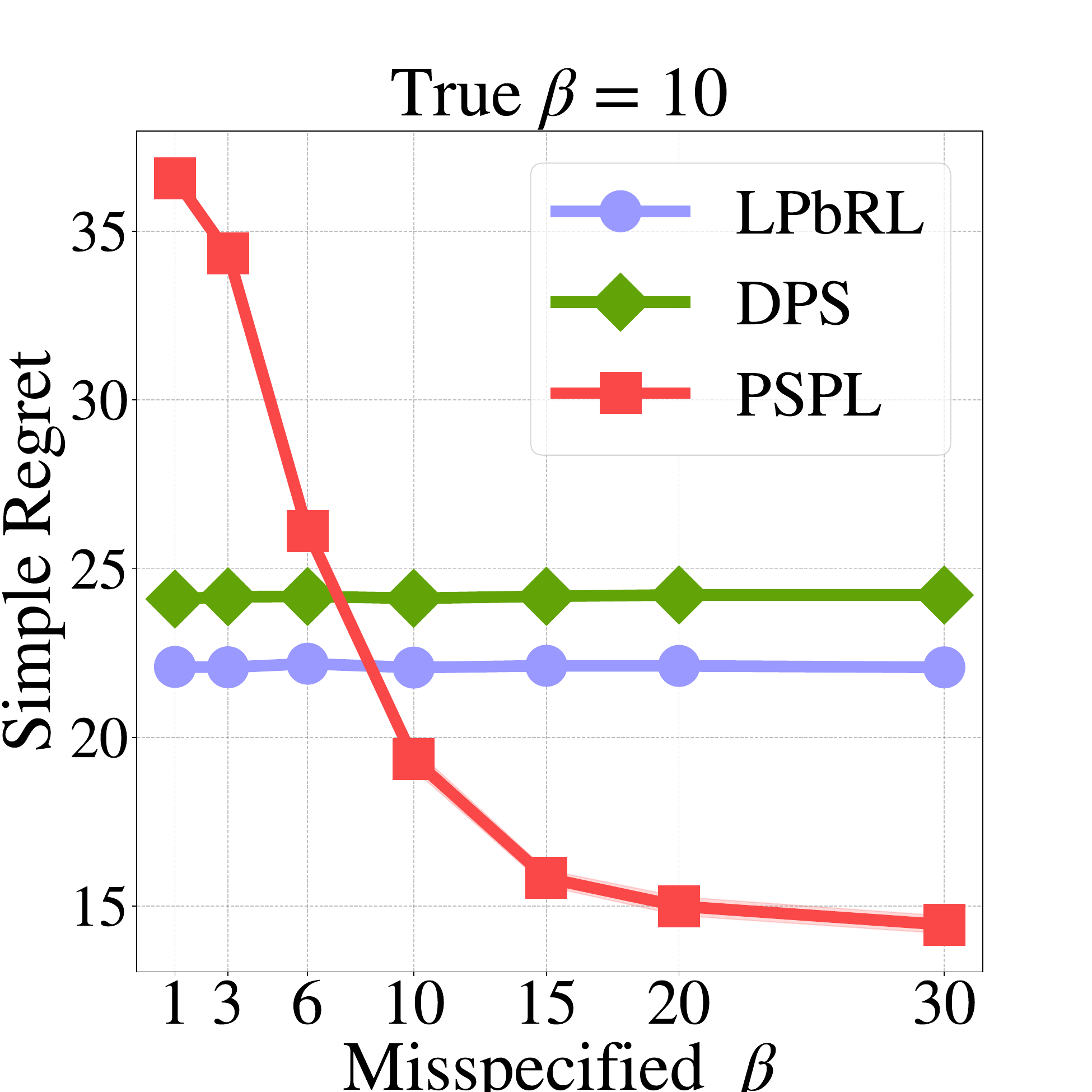}
    } 
\end{tcolorbox} 
\vspace{-0.7cm}
\caption{Sensitivity to flawed expert policy with $\lambda = \{10, 10^{3}\}$, and misspecified 
 competence.}
\vspace{-0.5cm}
\label{fig:more_ablation}
\end{minipage}
\end{figure*}

\subsubsection{Experiments on Image Generation Tasks (cont.)}
\label{appendix:pickapic}

We instantiate our framework on the Pick-a-Pic dataset of human preferences for text–to–image generation \cite{kirstain2023pick}. Overall, the dataset contains over 500,000 examples and 35,000 distinct prompts. Each example contains a prompt, sequence generations of two images, and a label for which image is preferred. We let each generation be a trajectory, so the dataset contains trajectory preferences $\mathcal D_{0} = { (\tau_{i}^{+},\tau_{i}^{-},y_{i}) }_{i=1}^{N}$ with $y_i = 1$ iff $\tau_{i}^{+} \succ \tau_{i}^{-}$. Each trajectory $\tau =(p,z_{0 : T})$ is the entire latent denoising chain of length $T$ for prompt $p$ sampled from some prompt distribution.

Following \cite{blacktraining, zhang2024flow}, text--to--image generation is a finite--horizon MDP $\mathcal M=(\mathcal S,\mathcal A,P)$: $s_t=(p,z_t),a_t=\epsilon_t,z_{t+1}=f_{\rho}(z_t,\epsilon_t),$ where $z_t \in \mathbb R^{d}$ is the latent, $a_t$ is the noise $\epsilon_{t}$ predicted by the policy $\pi_\theta(a_t \mid s_t)$, and $f_{\boldsymbol\alpha}$ is the deterministic DDPM transition with frozen scheduler $\rho$. Please see \cite{blacktraining} for a comprehensive and detailed discussion on modeling diffusion as a MDP; we follow the same approach. The episode horizon is $H=T=50$. For each trajectory, we adopt the additive embedding as discussed in the paper: $ \phi(\tau)=\sum_{t=1}^T \phi(s_t,a_t)$ with $\phi(s_t,a_t)=\bigl[ \text{CLIP}(p,\text{Dec}(z_T)), \lVert\epsilon_t\rVert_2 , t/T \bigr],$ where $\text{Dec}(\cdot)$ is the standard VAE decoder shipped with Stable‐Diffusion \cite{rombach2022high}. As in the paper, the model assumes that rater of competence $\lambda, \beta$ follows the Bradley-Terry model i.e. $\Pr(Y = 1\mid\tau^+,\tau^- , \vartheta,\beta) =\sigma \bigl(\beta\langle\vartheta,\phi(\tau^+) - \phi(\tau^-)\rangle\bigr)$, where $\sigma(\cdot)$ is the sigmoid link function.

To model the reward parameters, we let the uninformed prior be $\nu_0=\mathcal N(\mu_0,\Sigma_0)$, $\mu_0=0$, and diagonal $\Sigma_0= \mathrm{diag}(\sigma_{\text{clip}}^2,\sigma_{\text{noise}}^2, \sigma_{\text{time}}^2)$, where $\sigma_{\text{clip}}^2,\sigma_{\text{noise}}^2$ are empirical variances from $10$k random chains and $\sigma_{\text{time}}^2 =1/12$. Regarding modeling of transitions, the physical scheduler $f_{\rho}$ is known; uncertainty remains only in the $\ell_2$-norm of $\epsilon_t$. We discretize this norm into $C{=}10$ bins as $b_i$ for $i \in [C]$ and model $P(b_i\mid s_t) \sim \mathrm{Dir}(\boldsymbol\alpha_0), \;\; \boldsymbol\alpha_0=\mathbf 1_C.$ The resulting Dirichlet counts are updated from both $\mathcal{D}_0$ and online episodes.

Since we do not know the optimal generation sequence, minimizing simple regret is equivalent to maximizing expected reward in the final online episode. To evaluate this final generation, we conduct $L=10$ rollouts and compute the average reward with a weighted ensemble of automatic quality metrics, similar to \cite{clark2023directly, xu2023imagereward} i.e. $r_{\theta}(\cdot)= 0.7*\text{ImageReward--v2} + 0.3*\text{Aesthetic--LAION}$ where ImageReward--v2 is a reward model from \cite{xu2023imagereward}, and Aesthetic--LAION is a reward model hosted on HuggingFace \cite{laion_ev_2025}. For evaluation, we sweep $N_{\text{off}} = 50$k preference triplets for the prior, reserve $N_{\text{val}} = 15$k examples for evaluation, and create an exploration pool of $10$k unseen examples. For implementation, we use a 2 layer MLP, 512 GELU units, and outputs $(\mu_\theta)_{t}$ and $\log ((\sigma_\theta)_{t})$, where $(\mu_\theta)_{t}$ is the predicted noise vector the agent believes will best denoise $z_{t}$ and $(\sigma_\theta)_{t}$ controls exploratory perturbations around that prediction, enabling posterior sampling for PSPL. This is analogous to solving the unconstrained optimization Problem \eqref{eq:surrogate_perturbed_loss} in the function approximation setting. 

For the online phase, we use the above reward model $r_{\theta}(\cdot)$ for the BT preference model (see Equation \eqref{eq:pref_logistic_trajectories}), with an expert rater (i.e. $\lambda,\beta \to \infty$), similar to \cite{kirstain2023pick}. Since, we show that PSPL is robust to mis-specifications in rater competence (please see Appendix \ref{sec:appendix_ablation}), we use an expert rater for ease of comparison.

\textbf{Training details.} Following \cite{zhang2024flow}, we first cluster the prompts in the dataset to obtain a mapping from $\text{cluster}_{j} \to (\Dcal_{0})_{j}$, where $j \in [J]$ is the cluster index out of J clusters, and $(\Dcal_{0})_{j}$ is the dataset of trajectories and corresponding preference labels for prompts in prompt $\text{cluster}_{j}$ i.e. $(\Dcal_{0})_{j} = { (\tau_{j,i}^{+},\tau_{j,i}^{-},y_{j,i}) }_{i=1}^{N_{j}}$, where $\tau_{j,\cdot}^{+}$ and $\tau_{j,\cdot}^{-}$ are the \emph{winning} and \emph{losing} trajectory generations given a prompt from $\text{cluster}_{j}$ for all $j \in [J]$. For tractability, we compress the training images to be 128$\times$128 pixels, and optimize for $K=100k$ episodes for each cluster $j \in [J]$. Future direction of this work will consider incorporating prompt information as a prior to the MLP, resulting in prompt conditioned inference. However, that is beyond the current scope of the paper. Finally, all experiments are run on NVIDIA GeForce RTX 5080, GPU 16GB, and Memory DDR5 64 GB. Training times for all algorithms are given in Table \ref{tab:training_times} and comprehensive validation results are shown in Figure \ref{fig:more_image_gen}.

\begin{table}[ht]
\label{tab:training_times}
\caption{Average training times of baselines over 5 independent runs.}
\centering
\begin{tabular}{ccc}
\toprule
                  & DPS             & PSPL            \\ 
                  \midrule 
Training Time (h) & $3.52 \pm 0.11$ & $3.73 \pm 0.09$ \\
\bottomrule
\end{tabular}
\end{table}

\begin{figure}[ht]
\centering
\begin{tcolorbox}[width=.85\textwidth, nobeforeafter, coltitle = black, fonttitle=\fontfamily{lmss}\selectfont, title= Image Generation Tasks, halign title=flush center, colback=backg_blue!5, colframe=Pink!10, boxrule=2pt, grow to left by=-0.2mm, left=-15pt, right=-15pt, bottom=-1.5pt, top=-1pt]
    \centering
    {
        \includegraphics[scale=0.4]{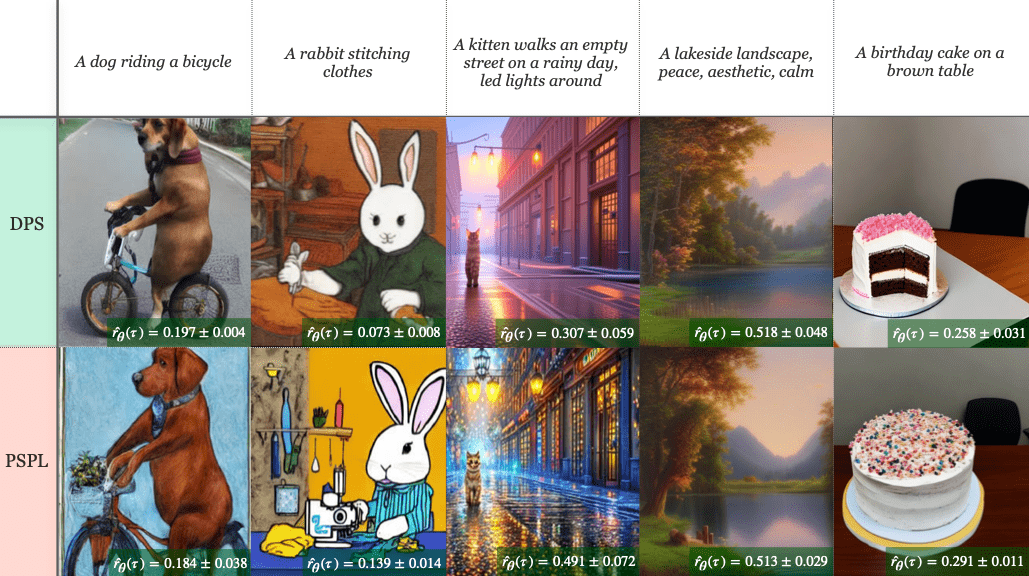}
    } 
\end{tcolorbox} 
\caption{\small Sample image generations with final image reward $\hat{r}_{\theta}(\cdot)$ over 5 independent runs. Images are enlarged for clarity.}
\label{fig:more_image_gen}
\end{figure}

\section{Auxiliary Definitions and Lemmas}

\begin{lemma}
\label{lem:binomial}
    Let $X$ be the sum of $L$ i.i.d. Bernoulli random variables with mean $p\in (0, 1)$. Let $q\in (0, 1)$, then 
    \begin{align*}
        \Pr(X \leq qL) &\leq \exp\left(-2L(q-p)^2\right),\qquad\text{if }q < p,\\
        \Pr(X \geq qL) &\leq \exp\left(-2L(q-p)^2\right),\qquad\text{if }q > p.
    \end{align*}
\end{lemma}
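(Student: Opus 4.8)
The statement is precisely Hoeffding's inequality specialized to Bernoulli sums, so the plan is to use the Chernoff (exponential moment) method together with Hoeffding's lemma. I would begin with the upper tail, i.e. the case $q > p$. Write $X = \sum_{i=1}^{L} X_i$ with $X_i$ i.i.d.\ $\mathrm{Bern}(p)$, and fix a free parameter $s > 0$. Applying Markov's inequality to the nonnegative random variable $e^{sX}$ gives
\begin{equation*}
\Pr(X \geq qL) = \Pr\!\left(e^{sX} \geq e^{sqL}\right) \leq e^{-sqL}\,\Ebb\!\left[e^{sX}\right] = e^{-sqL}\left(\Ebb\!\left[e^{sX_1}\right]\right)^{L},
\end{equation*}
where the last equality uses independence and identical distribution of the summands.

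The next step is to control the moment generating factor. Since each $X_i$ takes values in $[0,1]$ with mean $p$, Hoeffding's lemma (for a variable supported on an interval of length one) yields $\Ebb[e^{s(X_1 - p)}] \leq e^{s^2/8}$, hence $\Ebb[e^{sX_1}] \leq e^{sp + s^2/8}$. Substituting back gives the exponentiated bound $\Pr(X \geq qL) \leq \exp\!\big(-sL(q-p) + Ls^2/8\big)$. I would then optimize the exponent over $s > 0$: the quadratic $-s(q-p) + s^2/8$ is minimized at $s = 4(q-p)$ (which is positive precisely because $q > p$), and plugging this value in collapses the exponent to $-2(q-p)^2$ per sample, giving $\Pr(X \geq qL) \leq \exp(-2L(q-p)^2)$ as claimed.

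For the lower tail ($q < p$), I would appeal to symmetry rather than redo the computation: set $X_i' = 1 - X_i$, so that $X' := L - X$ is a sum of i.i.d.\ $\mathrm{Bern}(1-p)$ variables, and the event $\{X \leq qL\}$ becomes $\{X' \geq (1-q)L\}$. Since $1 - q > 1 - p$, the upper-tail bound just established applies with deviation $(1-q) - (1-p) = p - q$, and because $(p-q)^2 = (q-p)^2$ the same exponent $\exp(-2L(q-p)^2)$ is obtained. (Equivalently one can rerun the Chernoff argument with $s < 0$.) There is no genuine obstacle here — the result is a textbook concentration inequality — so the only substantive ingredient is Hoeffding's lemma, which I would either cite or prove in one line via convexity of $s \mapsto e^{sx}$ on $[0,1]$; the rest is the routine tuning of $s$.
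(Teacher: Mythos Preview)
Your proof is correct. The paper's own proof is a one-line citation of Hoeffding's inequality (stated there for sub-Gaussian sums), so you are simply unpacking that black box: your Chernoff--Hoeffding-lemma argument is exactly the standard proof of the cited inequality, specialized to $[0,1]$-valued summands with sub-Gaussian parameter $1/2$. There is no substantive difference in approach, only in level of detail.
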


\begin{proof}
    Both inequalities can be obtained by applying Hoeffding's Inequality (see \ref{Hoeffding's inequality}).
\end{proof}

\begin{definition}
\textbf{ $\alpha$-dependence in \cite{russo2013eluder}}. For $\alpha>0$ and function class $\mathcal{Z}$ whose elements are with domain $\mathcal{X}$, an element $x \in \mathcal{X}$ is $\alpha$-dependent on the set $\mathcal{X}_n:=\left\{x_1, x_2, \cdots, x_n\right\} \subset \mathcal{X}$ with respect to $\mathcal{Z}$, if any pair of functions $z, z^{\prime} \in \mathcal{Z}$ with $\sqrt{\sum_{i=1}^n\left(z\left(x_i\right)-z^{\prime}\left(x_i\right)\right)^2} \leqslant$ $\alpha$ satisfies $z(x)-z^{\prime}(x) \leqslant \alpha$. Otherwise, $x$ is $\alpha$-independent on $\mathcal{X}_n$ if it does not satisfy the condition.
\end{definition}

\begin{definition}
\textbf{ Eluder dimension  in \cite{russo2013eluder}}. For $\alpha>0$ and function class $\mathcal{Z}$ whose elements are with domain $\mathcal{X}$, the Eluder dimension $\operatorname{dim}_E(\mathcal{Z}, \alpha)$, is defined as the length of the longest possible sequence of elements in $\mathcal{X}$ such that for some $\alpha^{\prime} \geqslant \alpha$, every element is $\alpha^{\prime}$ independent of its predecessors.
\end{definition}

\begin{definition}
    \textbf{Covering number.} Given two functions $l$ and $u$, the bracket $[l, u]$ is the set of all functions $f$ satisfying $l \leq f \leq u$. An $\alpha$-bracket is a bracket $[l, u]$ with $\|u-l\|<$ $\alpha$. The covering number $N_{[\cdot]}(\mathcal{F}, \alpha,\|\cdot\|)$ is the minimum number of $\alpha$-brackets needed to cover $\mathcal{F}$.
\end{definition}

\begin{lemma}
   \label{linear-dimension-covering}
   \textbf{(Linear Preference Models Eluder dimension and Covering number).} For the case of $d$-dimensional generalized trajectory linear feature models  $r_{\xi}\left(\xi_H\right):=\left\langle\phi\left(\xi_H\right), \mathbf{w}_r\right\rangle$, where $\phi:$ Traj  $\rightarrow \mathbb{R}^{dim_{\mathbb{T}}}$ is a known $dim_{\mathbb{T}}$ dimension feature map satisfying $\left\|\psi\left(\xi_H\right)\right\|_2 \leq B$ and $\theta \in \mathbb{R}^d$ is an unknown parameter with $\|\mathbf{w}_r\|_2 \leq \kappa_w$. Then the $\alpha$-Eluder dimension of $r_{\xi}(\xi_H)$ is at most $\mathcal{O}(dim_{\mathbb{T}} \log (B \kappa_w / \alpha))$. The $\alpha$ - covering number  is upper bounded by $\left(\frac{1+2 B\kappa_w}{\alpha}\right)^{dim_{\mathbb{T}}}$.
\end{lemma}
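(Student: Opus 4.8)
The plan is to handle both claims with the standard machinery for finite-dimensional linear function classes, exploiting that $r_{\mathbf{w}_r}(\xi_H) = \langle \phi(\xi_H), \mathbf{w}_r\rangle$ is linear in the parameter $\mathbf{w}_r$ ranging over the ball $\{\mathbf{w} : \|\mathbf{w}\|_2 \leq \kappa_w\} \subset \Rbb^{dim_{\mathbb{T}}}$. The covering-number bound reduces to a Lipschitz-in-parameter argument combined with a volumetric estimate on covering a Euclidean ball, while the Eluder-dimension bound follows from the elliptical-potential (log-determinant) argument of \cite{russo2013eluder}.

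For the covering number, I would first note that $\mathbf{w} \mapsto r_{\mathbf{w}}$ is $B$-Lipschitz in the supremum norm: for any $\mathbf{w}, \mathbf{w}'$ and any trajectory $\xi_H$, Cauchy--Schwarz together with $\|\phi(\xi_H)\|_2 \leq B$ gives $|r_{\mathbf{w}}(\xi_H) - r_{\mathbf{w}'}(\xi_H)| = |\langle \phi(\xi_H), \mathbf{w}-\mathbf{w}'\rangle| \leq B\|\mathbf{w}-\mathbf{w}'\|_2$. Taking an $\epsilon$-net $\{\mathbf{w}_1,\dots,\mathbf{w}_M\}$ of the parameter ball and associating to each net point the bracket $[\,r_{\mathbf{w}_j}-B\epsilon,\ r_{\mathbf{w}_j}+B\epsilon\,]$ yields brackets of width $2B\epsilon$ that cover the class; choosing $\epsilon$ so the width is $\alpha$ makes these $\alpha$-brackets. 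The count is the $\epsilon$-covering number of the radius-$\kappa_w$ ball in $\Rbb^{dim_{\mathbb{T}}}$, which by the standard volume comparison is at most $(1 + 2\kappa_w/\epsilon)^{dim_{\mathbb{T}}}$; this is of the form $(1 + cB\kappa_w/\alpha)^{dim_{\mathbb{T}}}$, and for the regime $\alpha \leq 1$ it is subsumed by the stated bound $\left(\frac{1+2B\kappa_w}{\alpha}\right)^{dim_{\mathbb{T}}}$, the exact base constant being a matter of the net-radius convention.

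For the Eluder dimension, I would let $(\xi_1,\dots,\xi_n)$ be any sequence in which each element is $\alpha'$-independent of its predecessors for some $\alpha' \geq \alpha$, set $\Phi_i := \phi(\xi_i)$, and track the regularized Gram matrix $V_t := \lambda I + \sum_{i \leq t}\Phi_i\Phi_i^\top$ with $\lambda := \alpha^2/(4\kappa_w^2)$. The key reduction: $\alpha'$-independence of $\xi_t$ supplies a direction $\vartheta$ (a difference of two admissible weight vectors, so $\|\vartheta\|_2 \leq 2\kappa_w$) with $\sum_{i<t}\langle\Phi_i,\vartheta\rangle^2 \leq (\alpha')^2$ yet $\langle\Phi_t,\vartheta\rangle^2 > (\alpha')^2$. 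Combining these via $\langle\Phi_t,\vartheta\rangle^2 \leq \|\Phi_t\|_{V_{t-1}^{-1}}^2\,\vartheta^\top V_{t-1}\vartheta$ and $\vartheta^\top V_{t-1}\vartheta = \lambda\|\vartheta\|_2^2 + \sum_{i<t}\langle\Phi_i,\vartheta\rangle^2 \leq \alpha^2 + (\alpha')^2 \leq 2(\alpha')^2$ forces $\|\Phi_t\|_{V_{t-1}^{-1}}^2 > 1/2$. I would then use the identity $\det(V_t) = \det(V_{t-1})(1 + \|\Phi_t\|_{V_{t-1}^{-1}}^2)$, so each independent element multiplies the determinant by at least $3/2$, giving $\det(V_n) \geq \lambda^{dim_{\mathbb{T}}}(3/2)^n$. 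Pairing this with the AM--GM/trace bound $\det(V_n) \leq (\trace(V_n)/dim_{\mathbb{T}})^{dim_{\mathbb{T}}} \leq (\lambda + nB^2/dim_{\mathbb{T}})^{dim_{\mathbb{T}}}$ yields $n\log(3/2) \leq dim_{\mathbb{T}}\log(1 + 4nB^2\kappa_w^2/(dim_{\mathbb{T}}\alpha^2))$.

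The main obstacle is the final step: this inequality bounds $n$ only implicitly, since $n$ appears on both sides. I would resolve it with the standard observation that the right-hand side grows only logarithmically in $n$ while the left grows linearly, so the inequality can persist only up to $n = \mathcal{O}(dim_{\mathbb{T}}\log(B\kappa_w/\alpha))$; one makes this precise either by a direct case analysis or by the $x/\log(1+x)$ monotonicity lemma of \cite{russo2013eluder}. A secondary subtlety is that the Eluder definition quantifies over all $\alpha' \geq \alpha$; because the choice $\lambda = \alpha^2/(4\kappa_w^2)$ uses the floor $\alpha$ while the reduction only needs $\|\Phi_t\|_{V_{t-1}^{-1}}^2 > 1/2$, which holds uniformly for every $\alpha' \geq \alpha$, the same determinant bound applies to all such sequences simultaneously and no separate treatment of each $\alpha'$ is required.
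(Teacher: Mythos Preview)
The paper states this lemma without proof; it appears in the auxiliary section as a known fact, with the Eluder-dimension claim implicitly attributed to \cite{russo2013eluder} and the covering-number claim treated as standard. Your proposal reconstructs exactly the canonical arguments one would expect---the Lipschitz-in-parameter plus volumetric net bound for the covering number, and the elliptical-potential/log-determinant argument for the Eluder dimension---and both are carried out correctly. The only loose end is the covering-number constant: your brackets have width $2B\epsilon$ with $\epsilon = \alpha/(2B)$, giving $(1+4B\kappa_w/\alpha)^{dim_{\mathbb{T}}}$, which does not literally sit below the stated $\left(\tfrac{1+2B\kappa_w}{\alpha}\right)^{dim_{\mathbb{T}}}$ for all $\alpha$; you already flag this as a net-radius convention issue, and since the lemma is used only at the level of $\log N_\alpha = \mathcal{O}(dim_{\mathbb{T}}\log(B\kappa_w/\alpha))$ the discrepancy is immaterial.
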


Let $\left(X_p, Y_p\right)_{p=1,2, \ldots}$ be a sequence of random elements, $X_p \in X$ for some measurable set $X$ and $Y_p \in \mathbb{R}$. Let $\mathcal{F}$ be a subset of the set of real-valued measurable functions with domain $X$. Let $\mathbb{F}=\left(\mathbb{F}_p\right)_{p=0,1, \cdots}$ be a filtration such that for all $p \geq 1,\left(X_1, Y_1, \cdots, X_{p-1}, Y_{p-1}, X_p\right)$ is $\mathbb{F}_{p-1}$ measurable and such that there exists some function $f_{\star} \in \mathcal{F}$ such that $\mathbb{E}\left[Y_p \mid \mathbb{F}_{p-1}\right]=f_*\left(X_p\right)$ holds for all $p \geq 1$. The (nonlinear) least square predictor given $\left(X_1, Y_1, \cdots, X_t, Y_t\right)$ is $\hat{f}_t=\operatorname{argmin}_{f \in \mathcal{F}} \sum_{p=1}^t\left(f\left(X_p\right)-Y_p\right)^2$. We say that $Z$ is conditionally $\kappa$-subgaussion given the $\sigma$-algebra $\mathbb{F}$ is for all $\lambda \in \mathbb{R}, \log \mathbb{E}[\exp (\lambda Z) \mid \mathbb{F}] \leq \frac{1}{2} \lambda^2 \kappa^2$. For $\alpha>0$, let $N_\alpha$ be the $\|\cdot\|_{\infty}$-covering number of $\mathcal{F}$ at scale $\alpha$. For $\beta > 0$, define
\begin{equation}
\mathcal{F}_t(\beta)=\left\{f \in \mathcal{F}: \sum_{p=1}^t\left(f\left(X_p\right)-\hat{f}_t\left(X_p\right)\right)^2 \leq \beta\right\} .
\end{equation}
\begin{lemma}
    \label{transition limit}
    (Theorem 5 of \cite{ayoub2020model}).
    Let $\mathbb{F}$ be the filtration defined above and assume that the functions in $\mathcal{F}$ are bounded by the positive constant $C>0$. Assume that for each $s \geq 1,\left(Y_p-f_*\left(X_p\right)\right)$ is conditionally $\sigma$-subgaussian given $\mathbb{F}_{p-1}$. Then, for any $\alpha>0$, with probability $1-\delta$, for all $t \geq 1, f_* \in \mathcal{F}_t\left(\beta_t(\delta, \alpha)\right)$, where
$$
\beta_t(\delta, \alpha)=8 \sigma^2 \log \left(2 N_\alpha / \delta\right)+4 t \alpha\left(C+\sqrt{\sigma^2 \log (4 t(t+1) / \delta)}\right) .
$$
\end{lemma}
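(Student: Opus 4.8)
The plan is to control the in-sample prediction error $\sum_{p=1}^{t}\big(f_*(X_p)-\hat{f}_t(X_p)\big)^2$ and show it never exceeds $\beta_t(\delta,\alpha)$, simultaneously for all $t\ge1$. I would begin from the \emph{optimality} of the least-squares predictor: since $\hat{f}_t$ minimizes $\sum_p (f(X_p)-Y_p)^2$ over $\mathcal{F}\ni f_*$, we have $\sum_p(\hat{f}_t(X_p)-Y_p)^2\le\sum_p(f_*(X_p)-Y_p)^2$. Writing the noise as $\eta_p:=Y_p-f_*(X_p)$ and expanding both squares, this rearranges into the basic inequality
\begin{equation}
\sum_{p=1}^{t}\big(f_*(X_p)-\hat{f}_t(X_p)\big)^2\;\le\;2\sum_{p=1}^{t}\eta_p\big(\hat{f}_t(X_p)-f_*(X_p)\big).
\end{equation}
Everything then reduces to bounding the right-hand martingale-type sum, where the core difficulty is that $\hat{f}_t$ is itself a function of the noise, so one cannot apply a martingale concentration bound to it directly.

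Next I would establish a \emph{pointwise} bound for each \emph{fixed} $f\in\mathcal{F}$. Because $X_p$ is $\mathbb{F}_{p-1}$-measurable and $\eta_p$ is conditionally $\sigma$-subgaussian, the terms $\eta_p\big(f(X_p)-f_*(X_p)\big)$ form a martingale-difference sequence with conditional variance proxy $\sigma^2\big(f(X_p)-f_*(X_p)\big)^2$, so a self-normalized / subgaussian tail bound gives, with probability $1-\delta'$,
\begin{equation}
\sum_{p=1}^{t}\eta_p\big(f(X_p)-f_*(X_p)\big)\;\le\;\sigma\sqrt{2\Big(\textstyle\sum_{p=1}^{t}(f(X_p)-f_*(X_p))^2\Big)\log(1/\delta')}.
\end{equation}
To upgrade this to a statement about the data-dependent $\hat{f}_t$ and to all $t$ at once, I would take a minimal $\alpha$-cover $\mathcal{C}_\alpha$ of $\mathcal{F}$ in $\|\cdot\|_\infty$ (of cardinality $N_\alpha$) and union-bound the pointwise inequality over $\mathcal{C}_\alpha$ and over $t\ge1$, apportioning the budget as $\delta'=\delta/(2N_\alpha)$ together with a per-time weight proportional to $1/(t(t+1))$ so that $\sum_{t\ge1}1/(t(t+1))=1$; this allocation is precisely what produces the $\log(2N_\alpha/\delta)$ and $\log(4t(t+1)/\delta)$ factors appearing in $\beta_t(\delta,\alpha)$.

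I would then absorb the \emph{discretization} gap. Picking $f^{\alpha}\in\mathcal{C}_\alpha$ with $\|\hat{f}_t-f^{\alpha}\|_\infty\le\alpha$, I split $\sum_p\eta_p(\hat{f}_t-f_*)(X_p)$ into a covered term, bounded via the uniform inequality above after replacing $\|f^\alpha-f_*\|$ by $\|\hat{f}_t-f_*\|+\sqrt{t}\,\alpha$, and a residual $\sum_p\eta_p(\hat{f}_t-f^{\alpha})(X_p)$, which is at most $\alpha\sum_p|\eta_p|$. Bounding each $|\eta_p|$ by its subgaussian tail contributes the $t\alpha\sqrt{\sigma^2\log(4t(t+1)/\delta)}$ term, while the boundedness $|f|\le C$ controls the accompanying cross terms and yields the $4t\alpha\,C$ piece. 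Substituting back into the basic inequality leaves a scalar quadratic inequality $x^2\le ax+b$ in $x:=\big(\sum_{p}(f_*-\hat{f}_t)^2(X_p)\big)^{1/2}$, whose solution, after loosening constants, is $x^2\le 8\sigma^2\log(2N_\alpha/\delta)+4t\alpha\big(C+\sqrt{\sigma^2\log(4t(t+1)/\delta)}\big)=\beta_t(\delta,\alpha)$, i.e. $f_*\in\mathcal{F}_t(\beta_t(\delta,\alpha))$.

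The step I expect to be the main obstacle is coupling the removal of the noise-dependence of $\hat{f}_t$ (the covering argument) with the demand that the bound hold for \emph{all} $t$ simultaneously. The fixed-$f$ subgaussian estimate is routine, but pinning down the constants and logarithmic factors requires carefully balancing the $\alpha$-net discretization error against the uniform envelope $C$ and the self-normalized deviation, and splitting the union-bound budget across both the net and the time index without degrading the rate.
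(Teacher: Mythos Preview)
The paper does not actually prove this lemma: it is stated as an auxiliary result and attributed directly to Theorem~5 of \cite{ayoub2020model}, with no proof given. Your sketch is the standard route taken in that reference (basic inequality from least-squares optimality, self-normalized subgaussian concentration for fixed $f$, a union bound over an $\alpha$-net and over time, then absorbing the discretization residual and solving the resulting quadratic), so there is nothing to contrast on the level of approach; your outline is correct and matches the cited source.
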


\begin{lemma}

    (Lemma 5 of \cite{russo2013eluder}). Let $\mathcal{F} \in B_{\infty}(X, C)$ be a set of functions bounded by $C>0$, $\left(\mathcal{F}_t\right)_{t \geq 1}$ and $\left(x_t\right)_{t \geq 1}$ be sequences such that $\mathcal{F}_t \subset \mathcal{F}$ and $x_t \in \mathcal{X}$ hold for $t \geq 1$. Let $\left.\mathcal{F}\right|_{x_{1: t}}=\left\{\left(f\left(x_1\right), \ldots, f\left(x_t\right)\right): f \in \mathcal{F}\right\}\left(\subset \mathbb{R}^t\right)$ and for $S \subset \mathbb{R}^t$, let $\operatorname{diam}(S)=\sup _{u, v \in S}\|u-v\|_2$ be the diameter of $S$. Then, for any $T \geq 1$ and $\alpha>0$ it, holds that
$$
\sum_{t=1}^T \operatorname{diam}\left(\left.\mathcal{F}_t\right|_{x_t}\right) \leq \alpha+C(d \wedge T)+2 \delta_T \sqrt{d T},
$$
where $\delta_T=\max _{1 \leq t \leq T} \operatorname{diam}\left(\left.\mathcal{F}_t\right|_{x_{1: t}}\right)$ and $d=\operatorname{dim}_{\mathcal{E}}(\mathcal{F}, \alpha)$.
\label{upper bound}
\end{lemma}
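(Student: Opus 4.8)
The plan is to reproduce the standard eluder-dimension argument of \cite{russo2013eluder} in two stages: a combinatorial counting bound showing that the single-point diameters $w_t := \operatorname{diam}(\mathcal{F}_t|_{x_t})$ can exceed any threshold only a number of times controlled by $\dim_{\mathcal{E}}(\mathcal{F},\cdot)$ and the aggregate diameter $\delta_T$, followed by a sorting/summation step that converts this pointwise count into the claimed bound. Throughout I would use that $w_t \le \operatorname{diam}(\mathcal{F}_t|_{x_{1:t}}) \le \delta_T$ (projection onto one coordinate only shrinks the diameter) and $w_t \le C$ (boundedness of $\mathcal{F}$).

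\textbf{Stage 1 (counting).} Fix a threshold $\alpha' > 0$ and suppose $w_t > \alpha'$. First I would show $x_t$ can be $\alpha'$-dependent on at most $\delta_T^2/\alpha'^2$ disjoint subsequences of $(x_1,\ldots,x_{t-1})$: since $w_t > \alpha'$ there exist $f,f' \in \mathcal{F}_t$ with $|f(x_t)-f'(x_t)| > \alpha'$, and the contrapositive of the definition of $\alpha'$-dependence forces $\sum_{x_i \in S}(f(x_i)-f'(x_i))^2 > \alpha'^2$ on every subsequence $S$ witnessing dependence of $x_t$; summing over $m$ disjoint such subsequences and using $\sum_{i \le t}(f(x_i)-f'(x_i))^2 \le \delta_T^2$ yields $m\alpha'^2 < \delta_T^2$. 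Next I would convert this into a bound on $N_{\alpha'} := |\{t \le T : w_t > \alpha'\}|$ by the usual bucketing: process the qualifying $x_t$ in order, placing each into the first bucket in whose current contents it is $\alpha'$-independent. Each bucket is then an $\alpha'$-independent sequence, hence of length at most $\dim_{\mathcal{E}}(\mathcal{F},\alpha')$, and a new bucket is opened only when $x_t$ is $\alpha'$-dependent on all existing ones, which by the previous paragraph can happen for at most $\delta_T^2/\alpha'^2$ buckets. Combining gives $N_{\alpha'} \le (\delta_T^2/\alpha'^2 + 1)\,\dim_{\mathcal{E}}(\mathcal{F},\alpha')$.

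\textbf{Stage 2 (summation).} I would sort the diameters as $w_{(1)} \ge \cdots \ge w_{(T)}$ and apply Stage 1 with $\alpha' = w_{(i)}$: whenever $w_{(i)} > \alpha$, monotonicity of the eluder dimension gives $\dim_{\mathcal{E}}(\mathcal{F}, w_{(i)}) \le \dim_{\mathcal{E}}(\mathcal{F},\alpha) = d$, so $i \le (\delta_T^2/w_{(i)}^2 + 1)d$, which rearranges to $w_{(i)} \le \delta_T\sqrt{d/(i-d)}$ for $i > d$. I then split the sum into three pieces: the top $d\wedge T$ terms, each at most $C$, contributing $C(d\wedge T)$; the terms with $i > d$ and $w_{(i)} > \alpha$, summed via $\sum_{i=d+1}^{T}\delta_T\sqrt{d/(i-d)} \le \delta_T\sqrt{d}\sum_{j=1}^{T-d} j^{-1/2} \le 2\delta_T\sqrt{dT}$; and the residual terms with $w_{(i)} \le \alpha$, which contribute the additive threshold term (following the convention of \cite{russo2013eluder}). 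Assembling the three pieces yields $\sum_{t=1}^T w_t \le \alpha + C(d\wedge T) + 2\delta_T\sqrt{dT}$.

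The main obstacle is Stage 1 — the passage from ``$w_t > \alpha'$'' to a count controlled by the eluder dimension. The delicate points are (i) invoking $\alpha'$-dependence through its contrapositive for the \emph{single} witnessing pair $(f,f')$ rather than for all pairs, (ii) exploiting disjointness of the subsequences together with the global radius $\delta_T$ to cap the number of dependence relations at $\delta_T^2/\alpha'^2$, and (iii) the bucketing that keeps each bucket an independent sequence so the eluder-dimension length bound applies bucket-wise. The sorting and the estimate $\sum_j j^{-1/2} \le 2\sqrt{\,\cdot\,}$ in Stage 2 are routine once the count $N_{\alpha'}$ is established.
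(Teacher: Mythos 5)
You should first note what the benchmark is here: the paper does not prove this lemma at all — it is imported verbatim (through the restatement used in \cite{ayoub2020model}) from \cite{russo2013eluder}, so the only proof to compare against is the one in that source, and your proposal reproduces it faithfully. Your Stage 1 is exactly the cited argument and is sound: the witnessing pair $f,f'\in\mathcal{F}_t$ with $|f(x_t)-f'(x_t)|>\alpha'$, the contrapositive of $\alpha'$-dependence forcing $\sum_{x_i\in S}(f(x_i)-f'(x_i))^2>\alpha'^2$ on each disjoint witnessing subsequence, the cap $m<\delta_T^2/\alpha'^2$ from $\sum_{i\le t}(f(x_i)-f'(x_i))^2\le\operatorname{diam}\left(\left.\mathcal{F}_t\right|_{x_{1:t}}\right)^2\le\delta_T^2$, and the bucketing yielding $N_{\alpha'}\le\left(\delta_T^2/\alpha'^2+1\right)\operatorname{dim}_E(\mathcal{F},\alpha')$. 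The sorting step is also correct modulo a standard, fixable technicality: $N_{\alpha'}$ counts strict exceedances, so with $\alpha'=w_{(i)}$ you do not immediately get $i\le N_{w_{(i)}}$; apply the count at thresholds $\alpha'<w_{(i)}$ with $\alpha'\ge\alpha$ and let $\alpha'\uparrow w_{(i)}$.

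The one genuine gap is the final clause of Stage 2, where the residual terms with $w_{(i)}\le\alpha$ are said to ``contribute the additive threshold term.'' There can be up to $T$ such terms, so what your argument actually yields is a residual of $\alpha T$, not $\alpha$; no convention closes this factor of $T$. In \cite{russo2013eluder} the issue never arises because the lemma is instantiated at threshold $T^{-1}$, so the residual is $T\cdot T^{-1}=1$ and the dimension appearing is $\operatorname{dim}_E(\mathcal{F},T^{-1})$. What your two stages honestly prove is $\sum_{t=1}^T w_t\le \alpha T + C(d\wedge T)+2\delta_T\sqrt{dT}$ with $d=\operatorname{dim}_E(\mathcal{F},\alpha)$, or equivalently the displayed bound with $d$ replaced by $\operatorname{dim}_E(\mathcal{F},\alpha/T)$ after thresholding at $\alpha/T$. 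The combination ``$+\alpha$ with $d=\operatorname{dim}_E(\mathcal{F},\alpha)$'' as quoted here is strictly stronger than the counting argument delivers, and is in fact false in corner cases: take each $\mathcal{F}_t$ to be a pair of functions differing by $\alpha/2$ only at $x_t$ (plus one well-separated pair so that $d=1$); then $w_t=\delta_T=\alpha/2$ while $\sum_t w_t=T\alpha/2$ eventually exceeds $\alpha+C+\alpha\sqrt{T}$. So the step you glossed is not repairable within your proof — it is an artifact of how the lemma is restated — and you should either prove the $\alpha T$ form or the $\alpha/T$-scale-dimension form. (A minor point in the same spirit: with $B_\infty(\mathcal{X},C)$ meaning $\|f\|_\infty\le C$, the top $d\wedge T$ terms are bounded by $2C$ rather than $C$; the source assumes ranges in $[0,C]$.)
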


\begin{lemma}
    If $\left(\beta_t \geq 0 \mid t \in \mathbb{N}\right)$ is a nondecreasing sequence and $\mathcal{F}_t:=\left\{f \in \mathcal{F}:\left\|f-\hat{f}_t^{L S}\right\|_{2, E_t} \leq \sqrt{\beta_t}\right\}$, where $\hat{f}_t^{L S} \in \arg \min _{f \in \mathcal{F}} L_{2, t}(f)$ and $L_{2, t}(f)=\sum_1^{t-1}\left(f\left(A_t\right)-R_t\right)^2$, then for all $T \in \mathbb{N}$ and $\epsilon>0$,
$$
\sum_{t=1}^T \mathbf{1}\left(w_{\mathcal{F}_t}\left(A_t\right)>\epsilon\right) \leq\left(\frac{4 \beta_T}{\epsilon^2}+1\right) \operatorname{dim}_E(\mathcal{F}, \epsilon)
$$
 where $w_{\mathcal{F}}(a):=\sup _{f \in \mathcal{F}} f(a)-\inf _{f \in \mathcal{F}} f(a)$ denotes confidence interval widths.
\end{lemma}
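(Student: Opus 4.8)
The plan is to establish the standard Eluder-dimension ``width-counting'' bound in the style of \cite{russo2013eluder}. The argument decouples into two steps: a potential-type step that uses the least-squares confidence constraint to bound how many disjoint subsequences a single round with large confidence width can be $\epsilon$-dependent on, followed by a purely combinatorial pigeonhole step that turns these per-round dependence counts into a bound on the total number of wide rounds. Throughout, fix $\epsilon>0$, abbreviate $d:=\operatorname{dim}_E(\mathcal{F},\epsilon)$, and write the empirical seminorm over past queries as $\|g\|_{2,E_t}^2=\sum_{s<t} g(A_s)^2$.

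For the first step I would consider any round $t$ with $w_{\mathcal{F}_t}(A_t)>\epsilon$ and select $\overline{f},\underline{f}\in\mathcal{F}_t$ (via an approximating pair if the extrema defining $w_{\mathcal{F}_t}$ are not attained) so that $\overline{f}(A_t)-\underline{f}(A_t)>\epsilon$. Since both functions lie in the confidence set $\mathcal{F}_t$, the triangle inequality in $\|\cdot\|_{2,E_t}$ together with the monotonicity $\beta_t\le\beta_T$ gives $\|\overline{f}-\underline{f}\|_{2,E_t}\le 2\sqrt{\beta_t}\le 2\sqrt{\beta_T}$. Now suppose $A_t$ were $\epsilon$-dependent on $K$ disjoint subsequences of $(A_1,\dots,A_{t-1})$. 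Because $|\overline{f}(A_t)-\underline{f}(A_t)|>\epsilon$, the contrapositive of the definition of $\epsilon$-dependence forces $\sum_{A_s\in B}(\overline{f}(A_s)-\underline{f}(A_s))^2>\epsilon^2$ on each such subsequence $B$; summing over the $K$ disjoint subsequences yields $4\beta_T\ge\|\overline{f}-\underline{f}\|_{2,E_t}^2>K\epsilon^2$, hence $K<4\beta_T/\epsilon^2$. Thus every wide round is $\epsilon$-dependent on strictly fewer than $4\beta_T/\epsilon^2$ disjoint subsequences of its predecessors.

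For the second step I would list the wide rounds in increasing order and greedily partition the associated query points into disjoint subsequences $B_1,B_2,\dots$, appending each incoming point to the first $B_i$ of which it is $\epsilon$-\emph{independent}, and opening a fresh subsequence only when the point is $\epsilon$-dependent on \emph{all} currently nonempty subsequences. This rule preserves the invariant that within each $B_i$ every element is $\epsilon$-independent of its predecessors, so by the definition of the Eluder dimension $|B_i|\le d$. If $k$ subsequences are opened in total, the last one is opened by a point that is $\epsilon$-dependent on all $k-1$ previously opened (nonempty) subsequences, so the first step gives $k-1<4\beta_T/\epsilon^2$. Since the total number $N$ of wide rounds equals $\sum_i|B_i|\le kd$, combining the two bounds yields $N\le kd\le(4\beta_T/\epsilon^2+1)d$, which is exactly the stated inequality.

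I expect the main obstacle to be the bookkeeping in the combinatorial step: one must verify carefully that the greedy assignment rule genuinely maintains intra-subsequence $\epsilon$-independence (so that the Eluder cap $|B_i|\le d$ is legitimately applicable), and that the point triggering a new subsequence simultaneously witnesses $\epsilon$-dependence on every previously opened subsequence, which is what licenses invoking the first-step bound on $k-1$. The first step is otherwise routine once the correct pair $\overline{f},\underline{f}$ is chosen; the only delicate point there is the passage through an approximating pair when the supremum and infimum in $w_{\mathcal{F}_t}$ are not achieved.
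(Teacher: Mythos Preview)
The paper does not supply its own proof of this lemma; it is listed among the auxiliary results borrowed from \cite{russo2013eluder} and stated without argument. Your proposal correctly reconstructs the standard two-step proof from that source (the potential bound on disjoint $\epsilon$-dependent subsequences via the confidence-set radius, followed by the greedy partition and pigeonhole over Eluder-bounded chains), so there is nothing to compare against and your argument is the expected one.
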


\begin{theorem}
\label{Hoeffding's inequality}
\textbf{Hoeffding's inequality}\citep{hoeffding1994probability}.
Let \(X_1, X_2, \ldots, X_n\) be independent random variables that are sub-Gaussian with parameter \( \sigma \). Define \( S_n = \sum_{i=1}^n X_i \). Then, for any \( t > 0 \), Hoeffding's inequality provides an upper bound on the tail probabilities of \( S_n \), which is given by:
\[
\Pr\left( |S_n - \mathbb{E}[S_n]| \geq t \right) \leq 2 \exp\left(-\frac{t^2}{2n\sigma^2}\right).
\]
This result emphasizes the robustness of the sum \( S_n \) against deviations from its expected value, particularly useful in applications requiring high confidence in estimations from independent sub-Gaussian observations.
\end{theorem}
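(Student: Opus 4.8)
The plan is to prove the bound by the classical Cramér--Chernoff exponential-moment method, with the free parameter optimized at the very end. First I would reduce to the upper tail: it suffices to bound $\Pr(S_n - \mathbb{E}[S_n] \geq t)$, because applying the identical argument to the variables $-X_1,\dots,-X_n$ (which are again independent and $\sigma$-sub-Gaussian) controls the lower tail $\Pr(S_n - \mathbb{E}[S_n] \leq -t)$, and a union bound over these two events produces the leading factor of $2$ in the claimed inequality.

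For the upper tail, fix $\lambda > 0$ and apply Markov's inequality to the nonnegative random variable $e^{\lambda(S_n - \mathbb{E}[S_n])}$, giving
\[
\Pr\!\left(S_n - \mathbb{E}[S_n] \geq t\right) \leq e^{-\lambda t}\,\mathbb{E}\!\left[e^{\lambda(S_n - \mathbb{E}[S_n])}\right].
\]
Writing $S_n - \mathbb{E}[S_n] = \sum_{i=1}^n (X_i - \mathbb{E}[X_i])$ and using independence, the moment generating function factorizes as $\mathbb{E}[e^{\lambda(S_n - \mathbb{E}[S_n])}] = \prod_{i=1}^n \mathbb{E}[e^{\lambda(X_i - \mathbb{E}[X_i])}]$. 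Invoking the defining sub-Gaussian estimate on each centered factor, namely $\mathbb{E}[e^{\lambda(X_i - \mathbb{E}[X_i])}] \leq e^{\lambda^2 \sigma^2/2}$ valid for all $\lambda \in \mathbb{R}$, and multiplying the $n$ factors yields $\mathbb{E}[e^{\lambda(S_n - \mathbb{E}[S_n])}] \leq e^{n\lambda^2\sigma^2/2}$. Substituting back gives $\Pr(S_n - \mathbb{E}[S_n] \geq t) \leq \exp(-\lambda t + n\lambda^2\sigma^2/2)$.

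The final step is the Chernoff optimization over $\lambda > 0$. The quadratic $-\lambda t + n\lambda^2\sigma^2/2$ is minimized at $\lambda^\star = t/(n\sigma^2)$, where it attains the value $-t^2/(2n\sigma^2)$; this produces the one-sided bound $\Pr(S_n - \mathbb{E}[S_n] \geq t) \leq \exp(-t^2/(2n\sigma^2))$. Combining it with the symmetric lower-tail estimate through the union bound from the first step delivers the two-sided inequality exactly as stated.

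I expect the only genuine subtlety to be the centered moment generating function bound $\mathbb{E}[e^{\lambda(X_i - \mathbb{E}[X_i])}] \leq e^{\lambda^2\sigma^2/2}$, which is precisely what is encoded in the hypothesis ``sub-Gaussian with parameter $\sigma$''; an equivalent and slightly cleaner packaging is to first establish that a sum of independent $\sigma$-sub-Gaussian variables is itself $\sqrt{n}\,\sigma$-sub-Gaussian, and then apply the generic sub-Gaussian tail bound with effective parameter $\tau = \sqrt{n}\,\sigma$. Everything beyond this MGF control is the routine Markov-plus-optimize computation.
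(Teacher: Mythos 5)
Your proof is correct: the reduction to the upper tail via $-X_i$ plus a union bound, the Markov bound on $e^{\lambda(S_n-\mathbb{E}[S_n])}$, the factorization by independence, the centered sub-Gaussian MGF bound $\mathbb{E}[e^{\lambda(X_i-\mathbb{E}[X_i])}]\leq e^{\lambda^2\sigma^2/2}$, and the optimization at $\lambda^\star = t/(n\sigma^2)$ together give exactly the stated two-sided bound. Note that the paper offers no proof of this statement at all --- it imports Hoeffding's inequality as a classical result with only a citation --- so there is nothing to compare against; your argument is precisely the standard Cram\'er--Chernoff derivation underlying the cited theorem, and your closing remark correctly isolates the only substantive ingredient (the MGF control, equivalently that the sum is $\sqrt{n}\,\sigma$-sub-Gaussian).
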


\begin{lemma}
\label{visitation}
    (Lemma F.4. in \cite{dann2017unifying}) Let $\mathcal{F}_i$ for $i=1 \ldots$ be a filtration and $X_1, \ldots X_n$ be a sequence of Bernoulli random variables with $\mathbb{P}\left(X_i=1 \mid \mathcal{F}_{i-1}\right)=P_i$ with $P_i$ being $\mathcal{F}_{i-1}$-measurable and $X_i$ being $\mathcal{F}_i$ measurable. It holds that
$$
\mathbb{P}\left(\exists n: \sum_{t=1}^n X_t<\sum_{t=1}^n P_t / 2-W\right) \leq e^{-W}
$$
\end{lemma}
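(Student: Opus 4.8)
The plan is to construct an exponential supermartingale and invoke Ville's maximal inequality, which handles the ``$\exists n$'' (uniform-over-time) nature of the event in a single stroke, rather than a fixed-$n$ Chernoff bound followed by a union bound. First I would fix the tilting parameter $\lambda=1$ and define, for $n\ge 0$,
\[
M_n := \exp\left( -\sum_{t=1}^n X_t + (1 - e^{-1}) \sum_{t=1}^n P_t \right),
\]
with $M_0=1$. The key one-step computation uses that $X_t\in\{0,1\}$ with $\mathbb{E}[X_t\mid\mathcal{F}_{t-1}]=P_t$, so
\[
\mathbb{E}[e^{-X_t}\mid\mathcal{F}_{t-1}] = 1 - P_t(1-e^{-1}) \le \exp\!\big(-P_t(1-e^{-1})\big),
\]
where the inequality is $1-x\le e^{-x}$. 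Since $P_t$ is $\mathcal{F}_{t-1}$-measurable, this yields $\mathbb{E}[M_n\mid\mathcal{F}_{n-1}]\le M_{n-1}$, so $(M_n)$ is a nonnegative supermartingale with $\mathbb{E}[M_0]=1$.

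Next I would apply Ville's inequality for nonnegative supermartingales, namely $\mathbb{P}(\sup_{n\ge 0}M_n\ge a)\le \mathbb{E}[M_0]/a = 1/a$ for every $a>0$. Choosing $a=e^{W}$ gives the clean tail bound
\[
\mathbb{P}\!\left(\sup_{n\ge 0} M_n \ge e^{W}\right) \le e^{-W}.
\]
It then remains to show that the bad event is contained in $\{\sup_n M_n\ge e^W\}$. On the event that some $n$ satisfies $\sum_{t=1}^n X_t < \tfrac12\sum_{t=1}^n P_t - W$, substituting into the definition of $M_n$ gives
\[
M_n > \exp\!\left( \Big(1 - e^{-1} - \tfrac{1}{2}\Big)\sum_{t=1}^n P_t + W \right) \ge e^{W},
\]
where the final step uses $1-e^{-1}>\tfrac12$ together with $\sum_t P_t\ge 0$, so the $\sum_t P_t$ term is nonnegative and may be dropped. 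Combining the two displays completes the argument.

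The only point requiring genuine care is the choice of $\lambda$: it must simultaneously make $(M_n)$ a supermartingale (which holds for any $\lambda>0$ via the same $1-x\le e^{-x}$ step) and satisfy $1-e^{-\lambda}\ge \lambda/2$, so that the $\sum_t P_t$ contribution has nonnegative sign and the exponent collapses to exactly $\lambda W = W$. The value $\lambda=1$ is the canonical choice producing precisely the constant $\tfrac12$ and the rate $e^{-W}$ as stated. Everything else is routine: the supermartingale verification is a one-line conditional-expectation bound, and Ville's inequality supplies the uniform-in-$n$ control for free.
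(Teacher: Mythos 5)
Your proof is correct: the one-step bound $\mathbb{E}[e^{-X_t}\mid\mathcal{F}_{t-1}]=1-P_t(1-e^{-1})\le e^{-P_t(1-e^{-1})}$ makes $M_n$ a nonnegative supermartingale, Ville's inequality handles the $\exists n$ quantifier, and the containment step works because $1-e^{-1}>\tfrac12$ so the residual $\sum_t P_t$ term has a nonnegative coefficient. The paper itself gives no proof of this statement—it imports it verbatim as Lemma F.4 of \citet{dann2017unifying}—and your argument (exponential tilting at $\lambda=1$ plus a maximal inequality for supermartingales) is essentially the same as the proof in that original source, so there is nothing to flag.
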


\end{document}